\documentclass[conference]{IEEEtran}

\usepackage{cite}
\usepackage{amsmath,amssymb,amsfonts}
\usepackage{graphicx}
\usepackage{textcomp}
\usepackage{xcolor}
\def\BibTeX{{\rm B\kern-.05em{\sc i\kern-.025em b}\kern-.08em
		T\kern-.1667em\lower.7ex\hbox{E}\kern-.125emX}}

\usepackage{booktabs} 

\usepackage[utf8]{inputenc}

\usepackage{algorithm}
\usepackage{balance}
\usepackage[noend]{algpseudocode}

\makeatletter
\algrenewcommand\ALG@beginalgorithmic{\small}
\makeatother

\hyphenation{pa-ra-me-ters}

\usepackage{pifont}
\newcommand{\cmark}{\ding{51}}
\newcommand{\xmark}{\ding{55}}
\usepackage{amsthm}
\usepackage{mathtools}
\usepackage{hyperref}

\usepackage{stmaryrd}

\usepackage{subcaption}

\newtheorem{lemma}{Lemma}

\newtheorem{theorem}{Theorem}

\begin{document}

\title{Monte Carlo Dependency Estimation}

\author{\IEEEauthorblockN{Edouard Fouché \& Klemens Böhm}
	\IEEEauthorblockA{Institute for Program Structures and Data Organization \\
		Karlsruhe Institute of Technology (KIT), Germany \\
		\{edouard.fouche, klemens.boehm\}@kit.edu}}

\maketitle

\thispagestyle{plain}
\pagestyle{plain}

\newtheorem{dfn}{Definition}
\newtheorem{exmp}{Example}

\begin{abstract}
	
Estimating the dependency of variables is a fundamental task in data analysis. Identifying the relevant attributes in databases leads to better data understanding and also improves the performance of learning algorithms, both in terms of runtime and quality. In data streams, dependency monitoring provides key insights into the underlying process, but is challenging. 
In this paper, we propose \textit{Monte Carlo Dependency Estimation} (\textit{MCDE}), a theoretical framework to estimate multivariate dependency in static and dynamic data. \textit{MCDE} quantifies dependency as the average discrepancy between marginal and conditional distributions via Monte Carlo simulations. Based on this framework, we present \textit{Mann-Whitney\,P} (\textit{MWP}), a novel dependency estimator. We show that \textit{MWP} satisfies a number of desirable properties and can accommodate any kind of numerical data. We demonstrate the superiority of our estimator by comparing it to the state-of-the-art multivariate dependency measures.

\end{abstract}


\section{Introduction}

\subsection{Motivation}
\label{motivation}
Estimating statistical relationships between variables is fundamental to any knowledge discovery process and has become an important topic in the database community \cite{chen1996data, zhu2002statstream, hall2003benchmarking}. Knowing the relationship between attributes, one can infer useful knowledge about unknown outcomes. For example, knowing that weight and arterial pressure correlate
with the odds of contracting certain diseases may guide physicians, to predict whether a patient will become sick within a year or not. 

Modern database systems gather and store data at very high rates. With predictive maintenance for instance, data often is a stream produced in real-time by multiple sensors. 
In this setting, the timely detection of changes in the stream is crucial. The early discovery of anomalies can lead to, say, faster recovery and tremendous cost savings. Real-time detection is challenging because of a phenomenon known as \textit{concept drift}~\cite{Barddal2015}: The data distribution and correlation structure can change over time, be it gradually, be it abruptly, in unexpected ways. A real-world example illustrates this: 

\begin{figure}[]
	\centering
	\includegraphics[width=0.98\linewidth]{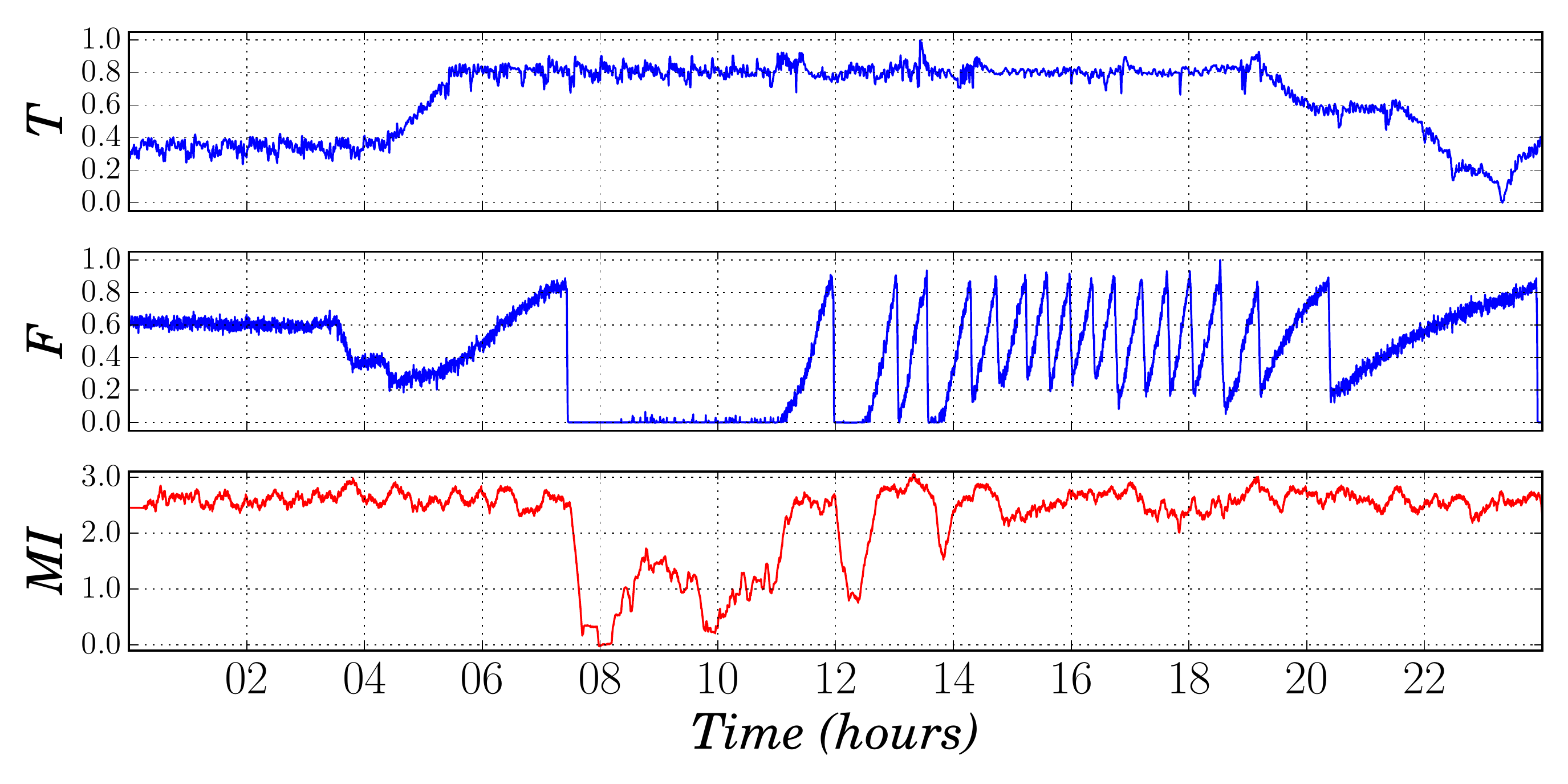}
	\caption{Example of concept drift in dependency monitoring.}
	\label{fig:bioliq}
\end{figure} 

\begin{exmp}
	\label{exp1}
	Let \textit{T} and \textit{F} be two sensor streams in a pyrolytic plant. Stream \textit{T} is the temperature in a reactor, while stream \textit{F} is the filling level of the flue gas cyclone connected to its output. Figure~\ref{fig:bioliq} graphs a 24-hours time span with a sampling rate of one second, i.e., $24 \cdot 60^2 = 86400$ values in total, which we scale to $[0,1]$. We report the Mutual Information (\textit{MI}) \cite{shannon1951mathematical} between \textit{T} and \textit{F} at any time over the last 15 minutes, using a sliding window over the last $900$ values. \textit{MI} quantifies the information shared by both variables. 
    At the beginning, the reactor heats up to its operational temperature. The material introduced into the reactor leads to the production of flue gas, stored temporarily in the cyclone for further processing. The \textit{MI} of the two streams suddenly drops from 2.5 bits to 0 at 7:45. 
    The cyclone does not seem to operate as it should, i.e., as in the later time span between 12:00 and 20:00. 
    This is a sign of interruption in the production process. Such interruptions can become very costly if unnoticed. Thus, a careful monitoring of the plant elements is essential, as drifting dependencies might indicate abnormal events. 
\end{exmp}

While the example only features two streams of data, effects over multiple streams are interesting as well. In the real world, data streams often are an open-ended, ever evolving collection of sensor signals. 
The signals can be noisy, redundant or generated at a varying speed. 
In such contexts, one needs a dependency estimator satisfying all requirements described next. To our knowledge, any existing solution only fulfils some of them at best. In this article, we propose a new estimator with all these characteristics.

\textbf{R1: Multivariate.} Bivariate dependency measures \cite{Szekely2009, Reshef2011} only apply to attribute pairs. Estimating the dependency between more than two attributes is useful 
as well, but existing attempts to generalize bivariate measures lack efficiency or effectiveness, as we will show in this paper.
	
\textbf{R2: Efficient.} For monitoring, one needs to estimate dependency `at least as fast' as the stream. Next, one often is not only interested in a particular set of attributes, but potentially all of them. Since the number of attribute combinations grows exponentially with the number of attributes, the efficiency of the estimator is crucial, with large data streams in particular.
	
\textbf{R3: General-purpose.} Dependency estimators should not be restricted to specific types of dependency. Existing multivariate estimators are typically limited, e.g., \cite{Schmid2007} can only detect monotonous dependencies and \cite{Nguyen2015UDS} only functional ones. 
	
\textbf{R4: Intuitive.} A method is intuitive if its parameters are easy to set, i.e., users understand their impact on the estimation process. Existing solutions typically require a number of unintuitive parameters, and the suggestion of `good' pa\-ra\-me\-ter values often happens at the discretion of the inventors. Different values often yield very different results. Hence, we target at a method that is intuitive to use. 
	
\textbf{R5: Non-parametric.} Since real data can exhibit virtually any kind of distribution, it is not reasonable to use measures relying on parametric assumptions. The risk is to systematically miss relevant effects with wrong assumptions. 
	
\textbf{R6: Interpretable.} The results of dependency estimators should be interpretable: There should be a maximum and a minimum, such that one can easily interpret a given estimate from `highly dependent' to `independent'. 
	
\textbf{R7: Sensitive.} Dependency estimation is not only about deciding whether a relationship exists, but also about quantifying its strength. Database entries generally are observations sampled from a potentially noisy process.
The same dependency should get a higher score when observed with more objects, as the size of the observed effect is larger. 
	
\textbf{R8: Robust.} Real-world data may be of poor quality. It is common to discretise attributes, for a more compact representation. 
Next, measuring devices often have a limited precision, such that values are rounded or trimmed, wrongly leading to data points with exactly the same values. 
Such artefacts can have a negative influence on the estimation. Estimators need to be robust against duplicates and imprecision. 
	
\textbf{R9: Anytime flexibility.} A database may be too large to allow for acceptable computation times, and the rate of incoming items from a data stream may vary. Users should be able to trade accuracy for a faster computation and to interrupt the estimation process at any time. 
In other words, users can set a `budget' they are willing to spend. Conversely, the estimator should return approximate results, ideally with a known quality, in case of early termination. 

\subsection{Contributions}


\textbf{We introduce a framework to estimate multivariate dependency, named \textit{Monte Carlo Dependency Estimation} \textit{(MCDE)}}.
\textit{MCDE} quantifies the dependency of an attribute set as the average discrepancy between marginal and conditional distributions via Monte Carlo simulations. Iteratively, a condition is applied on each dimension, in a process called \textit{subspace slicing} \cite{Keller2012}. A statistical test quantifies the discrepancy between the marginal and conditional distributions of a dimension taken at random. \textit{MCDE} is abstract, since the underlying statistical test is left unspecified. We determine a lower bound for the quality of the estimation, allowing to trade a quantifiable level of accuracy for a computational advantage. 

\textbf{As a proof of concept, we instantiate a new dependency measure within \textit{MCDE}, named \textit{Mann-Whitney\,P} (\textit{MWP})}. \textit{MWP} relies on the \textit{Mann-Whitney U} test, a well-known non-parametric statistical test \cite{Mann1947}, to quantify the average discrepancy between the marginal and  conditional distributions. We describe the implementation of \textit{MWP} in detail.

\textbf{We compare our estimator to the state of the art}. We benchmark each approach using an assortment of synthetic dependencies.
In particular, we measure the statistical power and execution time of each approach. This will show that \textit{MWP} fulfils all requirements while the existing ones do not.

\textbf{We release our source code and experiments on GitHub\footnote{\url{https://github.com/edouardfouche/mcde}}} with documentation to ensure reproducibility.

\smallskip 
Paper outline: Section \ref{relatedwork} reviews related work. Section~\ref{mcde} describes \textit{MCDE} and \textit{MWP}. Section \ref{evaluation} evaluates \textit{MWP} and compares it to the state of the art. Section \ref{conclusions} concludes. 

\section{Related Work}
\label{relatedwork}

Estimating the correlation of a set of attributes has been of interest for more than a century. Many bivariate measures exist \cite{Spearman1904, Szekely2009, Reshef2011, Lopez-Paz2013}; 
a famous one is Pearson's \textit{r}, also commonly known as \textit{Pearson correlation coefficient}. However, they are not applicable to multivariate analysis (\textbf{R1}). Also, they often have other drawbacks. For example, Pearson's \textit{r} is parametric  (\textbf{R5}) and targets at linear dependencies (\textbf{R3}).

There exist attempts to extend bivariate dependency measures to the multivariate case. Schmid \textit{et al.}\
\cite{Schmid2007} propose an extension of Spearman's $\rho$ to multivariate data (\textit{MS}), but this is still limited to monotonous relationships (\textbf{R3}). Several authors also propose extensions of \textit{MI} \cite{Timme2014}. For example, Interaction Information (\textit{II}) \cite{McGill1954} quantifies the `synergy' or `redundancy'
in a set of variables. Similarly, Total Correlation (\textit{TC}) \cite{5392532} quantifies the total amount of information. However, information-theoretic measures are difficult to estimate, as they require the knowledge of the underlying probability distributions. Density estimation methods, based on kernels, histograms or local densities, all require to set unintuitive  parameters (\textbf{R4}) and may be computationally expensive (\textbf{R2}). Next, with many dimensions, multivariate density estimation becomes meaningless, due to the \textit{curse of dimensionality} \cite{bellman1957}. 
Information-theoretic measures also are difficult to interpret (\textbf{R6}), since they are unbounded and usually expressed in bits.  

More recently, \textit{CMI} \cite{Nguyen2013CMI}, \textit{MAC} \cite{Nguyen2014MAC} and \textit{UDS} \cite{Nguyen2015UDS} have been proposed as multivariate dependency measures. They are remotely related to concepts from information theory, as they rely on the so-called \textit{cumulative entropy} \cite{Crescenzo2009}. However, these measures are computationally expensive (\textbf{R2}) and not intuitive to use (\textbf{R4}). 
They also are difficult to interpret, because their theoretical maximum and minimum vary with the number of dimensions (\textbf{R6}). To our knowledge, Requirements \textbf{R7-9} have not been considered in the literature so far. 

Another approach, named \textit{HiCS} \cite{Keller2012}, is the one most similar to \textit{MCDE}-\textit{MWP}. It introduces \textit{subspace slicing} as a heuristic to quantify the discrepancy between marginal and conditional distributions. The resulting estimate is used to discover outliers in large databases. Yet the suitability of \textit{HiCS} as a dependency estimators is so far unknown. 

In Section \ref{evaluation}, we compare our estimator \textit{MWP} to the related work, namely \textit{MS}, \textit{TC}, \textit{II}, \textit{CMI}, \textit{MAC}, \textit{UDS} and \textit{HiCS}.

\section{The MCDE Framework} 
\label{mcde}

Dependency estimation determines to which extent a variable relationship differs from randomness. In this spirit, \textit{MCDE} quantifies a dependency, i.e., an extent of independence violation, based on marginal and conditional distributions. 

\subsection{Notation}

Let $DB$ be a database with $n$ objects and $d$ dimensions. It is a set of attributes, or variables, $D=\{s_1, \dots, s_d\}$ and a list of objects $B = (\vec{x}_{1}, \dots, \vec{x}_{n})$ where $\vec{x}_{i} = \langle x_{i}^{s_{j}} \rangle_{j \in \{1, \dots, d\}}$ is a \mbox{$d$-dimensional} vector of real numbers. We call a subspace $S$ a projection of the database on $d'$ attributes, with $S \subseteq D$ and $d' \leq d$. We refer to its dimensionality as $|S| = d'$. 
To formalize our dependency estimator, we treat the attributes in $D$ as random variables, i.e., a random variable $X_{s_j}$ represents each attribute $s_j \in D$.
Additionally, $p(X)$ is the joint probability density function (\textit{pdf}) of a random vector $X = \left \langle X_{s_{i}}\right \rangle  _{s_i \in S}$, and $\hat{p}(X)$ is its estimation. 
We use $p_{s_{j}}(X)$ and $\hat{p}_{s_{j}}(X)$ for the marginal \textit{pdf} of $s_{j}$. $\mathcal{P}(S)$ denotes the power set of $S$, i.e., the set of all attribute subsets. 
For any attribute subset $S' \in \mathcal{P}(S)$, its random vector is $X_{S'}= \left \langle X_{s_i}\right \rangle _{s_{i} \in S'}$, and its complement random vector is $\overline{X_{S'}} = X_{S  \setminus S'} = \left \langle X_{s_i}\right \rangle _{s_{i} \in S \setminus S'}$. 

\subsection{Theory of MCDE} 

\subsubsection{Measuring Dependencies as contrast} 

A set of variables is \textit{independent} or \textit{uncorrelated} if and only if all the variables are \textbf{mutually independent}. By treating the attributes of a subspace as random variables, we can define the independence assumption of a subspace as follows: 
\begin{dfn}[Independence Assumption]
	\label{IA1}
	The independence assumption $\mathcal{H}$ of a subspace $S$  holds if and only if the random variables $\{ X_{s_i} : s_i \in S \}$ are mutually independent, i.e.:   
	\begin{align}
	\mathcal{H}(S) \Leftrightarrow p(X) = \prod_{s_{i} \in S} p_{s_{i}}(X) \label{eq:IA1}
	\end{align}
\end{dfn}
Under the independence assumption, the joint distribution of the subspace $S$ is \textbf{expected} to be equal to the product of its marginal distributions. We can define a degree of dependency, or correlation, based on the degree to which $\mathcal{H}$ does not hold: 
\begin{dfn}[Degree of Dependency]
	\label{DependencyDegree}
	The degree of dependency $\mathcal{D}$ of a subspace $S$ is the discrepancy, abbreviated as $disc$, between the \textbf{observed} joint distribution $p^{o}(X)$ and $p^{e}(X) = \prod_{s_{i} \in S} p^{o}_{s_{i}}(X)$, the \textbf{expected} joint distribution: 
	\begin{align}
	\mathcal{D}(S) &\equiv disc \left(p^{o}(X),p^{e}(X)\right) 
	\end{align}
\end{dfn} 

While one can estimate the discrepancy between two probability distributions, using for instance the Kullback-Leibler divergence \cite{kullback1951information}, this is not trivial here because $p^o(X)$ and $p^{e}(X)$ are a priori unknown. We work around this as follows: 

\begin{lemma}
	\label{lemma1}
	The independence assumption $\mathcal{H}$ of a subspace $S$ holds if and only if the joint \textit{pdf} for all  $S' \in \mathcal{P}(S)$  is equal to its joint conditional \textit{pdf} given all other variables $S \setminus S'$:
	\begin{align}
	\mathcal{H}(S) \Leftrightarrow p(X_{S'}|\overline{X_{S'}}) =  p(X_{S'}) && \forall S' \in \mathcal{P}(S)
	\end{align}
\end{lemma}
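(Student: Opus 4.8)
The plan is to prove the two implications separately, after first rewriting the conditional statement in the equivalent factored form. By definition of a conditional density, $p(X_{S'}\mid\overline{X_{S'}}) = p(X_{S'})$ holds (wherever $p(\overline{X_{S'}})>0$, as is tacitly assumed throughout) if and only if $p(X) = p(X_{S'})\cdot p(\overline{X_{S'}})$, i.e.\ the vectors $X_{S'}$ and $\overline{X_{S'}}=X_{S\setminus S'}$ are independent. So the right-hand side of the lemma says exactly: for every bipartition of $S$ induced by an $S'\in\mathcal{P}(S)$, the two halves are independent of each other.

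For the direction ``$\Rightarrow$'', I would assume $\mathcal{H}(S)$, i.e.\ $p(X)=\prod_{s_i\in S}p_{s_i}(X)$ as in \eqref{eq:IA1}. Marginalizing this product over the variables in $S\setminus S'$ gives $p(X_{S'})=\prod_{s_i\in S'}p_{s_i}(X)$, and symmetrically $p(\overline{X_{S'}})=\prod_{s_i\in S\setminus S'}p_{s_i}(X)$; multiplying these recovers $\prod_{s_i\in S}p_{s_i}(X)=p(X)$, so $p(X)=p(X_{S'})\cdot p(\overline{X_{S'}})$ for every $S'\in\mathcal{P}(S)$, which by the remark above is the claim. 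This part is routine.

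For the direction ``$\Leftarrow$'', the cleanest route is the chain rule. Fix any ordering $s_1,\dots,s_d$ of $S$; then $p(X) = \prod_{k=1}^{d} p\bigl(X_{s_k}\mid X_{s_1},\dots,X_{s_{k-1}}\bigr)$. Instantiating the hypothesis with the singleton $S'=\{s_k\}$ tells us $X_{s_k}$ is independent of the whole remaining vector $\overline{X_{\{s_k\}}}=X_{S\setminus\{s_k\}}$; marginalizing out the coordinates $s_{k+1},\dots,s_d$ shows $X_{s_k}$ is independent of the sub-vector $(X_{s_1},\dots,X_{s_{k-1}})$, hence $p(X_{s_k}\mid X_{s_1},\dots,X_{s_{k-1}}) = p_{s_k}(X)$. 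Substituting each factor back yields $p(X)=\prod_{s_i\in S}p_{s_i}(X)$, i.e.\ $\mathcal{H}(S)$. (An essentially equivalent alternative is induction on $|S|$: factor off one variable via the singleton case, observe that the hypothesis restricts consistently to the subspace $S\setminus\{s\}$ because joint independence of a vector is inherited by every sub-vector obtained by integrating out coordinates, and apply the induction hypothesis.)

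The main obstacle is precisely that marginalization step in ``$\Leftarrow$'': one must pass from ``$X_{s_k}$ is independent of \emph{all} other variables jointly'' down to ``$X_{s_k}$ is independent of the prefix $X_{s_1},\dots,X_{s_{k-1}}$'', which relies on the elementary but essential fact that integrating out some coordinates of an independent pair of vectors leaves an independent pair. Beyond that, the only care needed is over the implicit non-degeneracy assumption (positivity of the conditioning densities) that makes the conditional pdfs and the division by $p(\overline{X_{S'}})$ well defined.
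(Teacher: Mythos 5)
Your proof is correct, and it handles the two directions by genuinely different means than the paper. The paper's proof is a single chain of claimed biconditionals: it substitutes $\prod_{s_i \in S \setminus S'} p_{s_i}(X)$ by $p(\overline{X_{S'}})$ and divides, justifying the substitution by the remark that under mutual independence $p(X_{S'}) = \prod_{s_i \in S'} p_{s_i}(X)$ for any $S'$. That substitution is only licensed when $\mathcal{H}(S)$ is already assumed, so the paper's argument really only establishes the ``$\Rightarrow$'' direction cleanly; the reverse implications in its chain are asserted rather than argued. Your forward direction is essentially the paper's (made more careful by explicitly marginalizing the product to get the two factor densities), but your backward direction via the chain rule --- instantiating the hypothesis at singletons, marginalizing down to prefixes, and recomposing $p(X) = \prod_k p(X_{s_k} \mid X_{s_1}, \dots, X_{s_{k-1}}) = \prod_k p_{s_k}(X)$ --- supplies exactly the content the paper's ``$\Leftarrow$'' glosses over. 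It also shows, as a bonus, that the singleton conditions alone already imply $\mathcal{H}(S)$, i.e.\ the lemma's hypothesis over all of $\mathcal{P}(S)$ is stronger than needed for that direction (which is consistent with the paper's later relaxation $\mathcal{H}^*$ being a genuine weakening only because it drops the joint-vector conditioning, not the quantification). Your attention to the positivity of the conditioning densities is a further point of care the paper leaves implicit.
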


\begin{proof}
	Since all variables in $S$ are mutually independent, for any $S' \in \mathcal{P}(S)$ we also have $p(X_{S'}) = \prod_{s_{i} \in S'} p_{s_{i}}(X)$, then
	\begin{align*}
	\mathcal{H}(S) &\Leftrightarrow~ p(X) = \prod_{s_{i} \in S} p_{s_{i}}(X) &&~ \\
	\mathcal{H}(S) &\Leftrightarrow~ p(X) = p(X_{S'}) * \prod_{s_{i} \in S \setminus S'} p_{s_{i}}(X) && \forall S' \in \mathcal{P}(S) \\
	\mathcal{H}(S) &\Leftrightarrow~ \frac{p(X)}{p(\overline{X_{S'}})} = p(X_{S'}) && \forall S' \in \mathcal{P}(S)
	\end{align*}
	By the definition of the conditional \textit{pdf}:
	\begin{align*}
	\mathcal{H}(S) &\Leftrightarrow~ p(X_{S'}|\overline{X_{S'}}) =  p(X_{S'}) && \forall S' \in \mathcal{P}(S) \label{lastofproof1} 
	\qedhere
	\end{align*} 
\end{proof}

Lemma \ref{lemma1} provides an alternative definition of $\mathcal{H}$. However, it is still problematic for the following reasons: 
First, one requires multivariate density estimation to estimate $p(X_{S'})$ and $p(X_{S'}|\overline{X_{S'}})$ with $|S'| \geq 1$ in the multivariate case.
Second, even if one could estimate $p(X_{S'})$ and $p(X_{S'}|\overline{X_{S'}})$, estimating densities for all $ S' \in \mathcal{P}(S)$ is intractable.
So we instead relax the problem by considering only subspaces with  $|S'| = 1$, i.e., we only look at the marginal \textit{pdf} of single variables. 

\begin{dfn}[Relaxed Independence Assumption]
	The relaxed independence assumption $\mathcal{H}^*$ of a subspace $S$ holds if and only if the marginal distribution $p_{s_{i}}(X)$ of each variable $s_i \in S$ equals $p_{s_{i}}(X|\overline{X_{s_i}})$, i.e., the conditional \textit{pdf} of $s_i$:
	\begin{align*}
	\mathcal{H}^*(S) \Leftrightarrow p_{s_{i}}(X|\overline{X_{s_i}}) = p_{s_{i}}(X) && \forall s_i \in S 
	\end{align*}
\end{dfn}

\begin{theorem}[Independence Assumption Relaxation] 
	\label{IA2}
	We can relax $\mathcal{H}$ into $\mathcal{H}^*$ for any $S$, such that $\mathcal{H}(S) \Rightarrow \mathcal{H}^*(S)$.
\end{theorem}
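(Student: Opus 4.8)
The plan is to obtain the theorem as an immediate corollary of Lemma~\ref{lemma1}. That lemma already establishes the equivalence $\mathcal{H}(S) \Leftrightarrow \left( p(X_{S'}|\overline{X_{S'}}) = p(X_{S'}) \ \forall S' \in \mathcal{P}(S) \right)$, so in particular its forward direction gives $\mathcal{H}(S) \Rightarrow \left( p(X_{S'}|\overline{X_{S'}}) = p(X_{S'}) \ \forall S' \in \mathcal{P}(S) \right)$. The task then reduces to recognizing $\mathcal{H}^*(S)$ as a special case of the right-hand side.

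Concretely, the only remaining step is to specialize the universally quantified statement to singleton subsets. For each $s_i \in S$, the set $\{s_i\}$ belongs to the power set $\mathcal{P}(S)$, and by the notation of the paper $X_{\{s_i\}} = X_{s_i}$ while $\overline{X_{\{s_i\}}} = \overline{X_{s_i}} = X_{S \setminus \{s_i\}}$. Hence the equality at $S' = \{s_i\}$ reads $p_{s_i}(X|\overline{X_{s_i}}) = p_{s_i}(X)$. Since this holds for every $s_i \in S$, we obtain exactly $\mathcal{H}^*(S)$, the definition of the relaxed independence assumption, which completes the proof.

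There is essentially no obstacle here: the result is a logical weakening, namely restricting a statement quantified over all of $\mathcal{P}(S)$ to the sub-collection of singletons. If a self-contained argument not invoking Lemma~\ref{lemma1} were preferred, I would instead start from $p(X) = \prod_{s_i \in S} p_{s_i}(X)$, factor out $p_{s_i}(X)$, divide both sides by $p(\overline{X_{s_i}}) = \prod_{s_j \in S \setminus \{s_i\}} p_{s_j}(X)$, and identify the left-hand side with $p_{s_i}(X|\overline{X_{s_i}})$ by the definition of a conditional density, i.e., the computation in the proof of Lemma~\ref{lemma1} specialized to $|S'| = 1$. The one minor point worth stating explicitly is that the division step implicitly requires $p(\overline{X_{s_i}}) > 0$ on the relevant support, the usual caveat whenever a conditional density is expressed as a ratio. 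Finally, it is worth noting that the converse $\mathcal{H}^*(S) \Rightarrow \mathcal{H}(S)$ is \emph{not} asserted and fails in general, which is exactly why this is termed a relaxation.
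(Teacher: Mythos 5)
Your proposal is correct and follows exactly the paper's own argument: invoke Lemma~\ref{lemma1} and specialize the universally quantified equality to the singleton subsets $S' = \{s_i\}$, which yields $\mathcal{H}^*(S)$ by definition. The additional remarks (the self-contained variant and the positivity caveat for the conditional density) are sound but not needed beyond what the paper does.
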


\begin{proof}
	Using Lemma \ref{lemma1}:
	\begin{align*}
	\mathcal{H}(S) &\Leftrightarrow p(X_{S'}|\overline{X_{S'}}) =  p(X_{S'}) && \forall S' \in \mathcal{P}(S) &~\\
	\mathcal{H}(S) &\Rightarrow p(X_{S^1}|\overline{X_{S^1}}) = p(X_{S^1}) && \forall S^1 \in \mathcal{P}(S) :|S^1| = 1 &~\\
	\mathcal{H}(S) &\Rightarrow p_{s_i}(X|\overline{X_{s_i}}) = p_{s_{i}}(X) && \forall s_i \in S 
	&\qedhere
	\end{align*}
\end{proof}

Loosely speaking, the relaxed independence assumption holds if and only if knowing the value of all variables but $s_i$ does not bring any information about $s_i$. 

Since $\mathcal{H}(S) \Rightarrow \mathcal{H}^*(S)$, we have $\neg \mathcal{H}^*(S) \Rightarrow \neg \mathcal{H}(S)$. I.e., showing that $\mathcal{H}^*$ does not hold is a condition sufficient but not necessary to show that $\mathcal{H}$ does not hold. Thus, we can define a relaxed degree of dependency $\mathcal{D}^*$ of a subspace $S$, as the discrepancy of the observed marginal distribution $p^o_{s_i}(X)$ 
and the expected one $p^{e}_{s_i}(X)$. Under the relaxed independence assumption $\mathcal{H}^*$, we have $p^{e}_{s_i}(X) = p^o_{s_i}(X|\overline{X_{s_i}}), \forall s_i \in S$. 
We define $\mathcal{D}^*$ as the expected value of those discrepancies:

\begin{dfn}[Relaxed Degree of Dependency]
	\label{def:RelaxedDegree}
	\begin{align} 
	\mathcal{D}^*(S) &\equiv \mathop{\mathbb{E}}_{s_i \in S}   \Big[disc \left( p^o_{s_{i}}(X) , p^o_{s_i}(X|\overline{X_{s_i}}) \right)  \Big]
	\end{align}
\end{dfn}

This definition is broad and contains a whole class of dependency estimators, e.g., \cite{Keller2012}. This class of estimators aims at measuring a so-called notion of  \textit{contrast} of the subspace. 
$\mathcal{D}^*$ -- or \textit{contrast} -- is a variant of $\mathcal{D}$ which is much easier to estimate: 
First, it relies on the comparison of marginal against conditional densities, i.e., multivariate density estimation is not required. 
Second, the number of degrees of freedom of $\mathcal{H}^*(S)$ increases linearly with $|S|$, while exponentially for $\mathcal{H}(S)$. Thus, $\mathcal{D}^*$ is  much less expensive to estimate than $\mathcal{D}$. 

By definition, $\mathcal{D}^*$ does not take into account the dependency between multivariate subsets, but only of each individual variable versus all others. However, we argue that this relaxation is not problematic. 
In fact, the detection of dependency is only interesting as long as we can observe effects w.r.t.\ the marginal and conditional distributions. In real-world scenarios, one is typically looking for interpretable influences of particular variables -- so-called `targets' -- on
the system and vice versa.

\subsubsection{Simulating Conditional Dependencies via Slicing}
\label{sec:slicing}

The main difficulty to estimate $\mathcal{D}^*$ is estimating the conditional distributions $p^o_{s_i}(X|\overline{X_{s_i}})$, because the underlying data distribution is unknown. As suggested in \cite{Keller2012}, we can simulate conditional distributions by applying a set of conditions \mbox{to $S$}, in a process called \textit{subspace slicing}.

\begin{dfn}[Subspace Slice]
	\label{slice}
	A subspace slice $c_{i}$ of $S$ is a set of $\left |S \right |-1$ conditions, where each condition is an interval $\left [l_{s_j}, u_{s_j} \right ]$, which restricts the values of $s_j \in S \setminus {s_i}$:  
	\begin{equation}
	\begin{split}
	&c_i = \left \{ \left[l_{s_j}, u_{s_j} \right] : s_j \in S\setminus s_i  \right \}~~s.t.~~\forall \left[l_{s_j}, u_{s_j}\right] \in c_i, \\
	& \left | \left \{\vec{x_k}: \vec{x_k} \in B \wedge x_k^{s_j}  \in \left [l_{s_j}, u_{s_j} \right ] \right \} \right | = n'
	\end{split} 
	\end{equation}
	where $n' \in \{1, \dots, n\}$ is the number of objects per condition, and $s_i$ is the \textbf{reference} dimension.
	We say that $\vec{x_k} \in c_i$ when $\vec{x_k}$ fulfils all the conditions in $c_i$. We define $\bar{c}_i$ as the set of complementary conditions of a given $c_i$: 
	\begin{align}
	\bar{c}_i = \left \{ (-\infty, l_{s_j}) \cup (u_{s_j}, \infty) : \left[l_{s_j}, u_{s_j}\right] \in c_i \right\}
	\end{align}
	$p_{s_i | c_i}(X)$ and $p_{s_i | \bar{c}_i}(X)$ denote the conditional \textit{pdf} of the observation in the slice $c_i$ and its complement $\bar{c}_i$ respectively. $\mathcal{P}^{c}(S)$ is the set of all possible slices in $S$.
\end{dfn}

We choose each interval in a slice at random and independently from each other. Under the independence assumption, the expected share of observations  $\alpha$ in the slice is equal to:
\begin{align}
\alpha = (n'/n)^{|S|-1} 
\end{align}

Interestingly, $n'$ can be determined given $\alpha$ as only exogenous parameter and the dimensionality $|S|$:
\begin{align}
n' = \left \lceil n \sqrt[|S|-1]{\alpha}\,  \right \rceil 
\end{align}

As a result, subspace slicing can be done in a \textbf{di\-men\-sio\-na\-li\-ty-aware} fashion. 
When $\alpha$ is a constant, the expected number of objects per slice does not change between subspaces with different dimensionality. 
One can see subspace slicing as a dynamic grid-based method, which does not suffer from the curse of dimensionality.

\begin{dfn}[Dimensionality-aware Slice]
	\label{def:dynamic-slice}
	A dimensionality-aware slice $c^\alpha_i$ of subspace $S$ is a set of $\left |S \right |-1$ conditions:
	\begin{align}
	c^\alpha_i = c_i~~s.t.~~n' = \left \lceil n \sqrt[|S|-1]{\alpha}\,  \right \rceil 
	\end{align}
\end{dfn}
For brevity, we assume a fixed $\alpha \in (0,1)$  and write $c_i = c_i^\alpha$, and we omit $(X)$ in $p_{s_j}(X)$ and  $p_{s_j|c_j}(X)$ in the following. 

The idea behind dimensionality-aware slicing is to simulate conditional distributions empirically. Under the $\mathcal{H}^*$-assumption, the conditional distribution $p_{s_i|c_i}$ is equal to the marginal distribution $p_{s_i}$, for any dimension $s_i$ and slice $c_i$. 
\begin{theorem}[$\mathcal{H}^*$ and Conditional Distributions]
	\label{theorem2}
	\begin{align}
	\mathcal{H}^*(S) \Leftrightarrow 
	p_{s_i | {c_i}} 
	&= 
	p_{s_i} 
	&\forall s_i \in S, \forall c_i \in \mathcal{P}^{c}(S) 
	\end{align}
\end{theorem}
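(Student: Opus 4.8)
The plan is to observe that, at the level of distributions, a subspace slice $c_i$ is simply the event that the complement random vector $\overline{X_{s_i}}$ lands in the hyperrectangle $R_{c_i} = \prod_{s_j \in S \setminus s_i}[l_{s_j}, u_{s_j}]$ induced by the intervals of $c_i$; consequently $p_{s_i \mid c_i}$ is the conditional density of $X_{s_i}$ given $\{\overline{X_{s_i}} \in R_{c_i}\}$, and at the population level $\mathcal{P}^c(S)$ ranges over all such hyperrectangles (the ``exactly $n'$ objects'' clause in Definition~\ref{slice} is a sample-level artefact with no population counterpart). With this dictionary, the theorem reduces to the assertion that $X_{s_i}$ is independent of $\overline{X_{s_i}}$ for every $s_i \in S$ — which is exactly what $\mathcal{H}^*(S)$ says, up to the usual almost-everywhere qualifications on densities.

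For the direction $\mathcal{H}^*(S) \Rightarrow (p_{s_i \mid c_i} = p_{s_i})$, fix $s_i$ and a slice $c_i$. Writing the conditional density as a ratio of integrals over $R_{c_i}$ and substituting $p(x,\vec{y}) = p_{s_i}(x \mid \vec{y})\, p_{\overline{X_{s_i}}}(\vec{y})$, the assumption $\mathcal{H}^*$ in the form $p_{s_i}(X \mid \overline{X_{s_i}}) = p_{s_i}(X)$, evaluated pointwise at $\overline{X_{s_i}} = \vec{y}$, lets $p_{s_i}(x)$ be pulled out of the integral; the remaining ratio of integrals of $p_{\overline{X_{s_i}}}$ over $R_{c_i}$ is $1$, so $p_{s_i \mid c_i}(x) = p_{s_i}(x)$. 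Since $s_i$ and $c_i$ were arbitrary, this is the claim. Equivalently and more slickly: $\mathcal{H}^*(S)$ is precisely $X_{s_i} \perp \overline{X_{s_i}}$, so $X_{s_i}$ is independent of the event $\{\overline{X_{s_i}} \in R_{c_i}\}$, and conditioning on an independent event leaves a distribution unchanged.

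For the converse, assume $p_{s_i \mid c_i} = p_{s_i}$ for all $s_i$ and all slices $c_i$. Clearing the normalizing constant gives, for every hyperrectangle $B$ and every $x$, the identity $\int_B p(x,\vec{y})\,d\vec{y} = p_{s_i}(x)\, \mathbb{P}(\overline{X_{s_i}} \in B)$; integrating over $x \in A$ for an arbitrary Borel $A$ yields $\mathbb{P}(X_{s_i} \in A,\ \overline{X_{s_i}} \in B) = \mathbb{P}(X_{s_i} \in A)\,\mathbb{P}(\overline{X_{s_i}} \in B)$ for all $A$ and all boxes $B$. Since the hyperrectangles form a $\pi$-system generating the Borel $\sigma$-algebra on $\mathbb{R}^{|S|-1}$, Dynkin's $\pi$-$\lambda$ theorem extends this to all Borel $B$, i.e.\ $X_{s_i} \perp \overline{X_{s_i}}$, which is $\mathcal{H}^*(S)$. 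A more elementary alternative is to take slices whose boxes shrink to a point $\vec{y}$ and invoke the Lebesgue differentiation theorem (or just continuity of the densities, which the non-parametric setting can reasonably assume) to pass from $p_{s_i \mid c_i} = p_{s_i}$ to $p_{s_i}(X \mid \overline{X_{s_i}} = \vec{y}) = p_{s_i}(X)$.

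The two integral manipulations are routine. The only real subtlety — the step I would be most careful about — is the converse: making explicit that ``$\mathcal{P}^c(S)$'' should be read at the population level as ``all hyperrectangles'', and then invoking either Dynkin's lemma or the Lebesgue differentiation theorem to upgrade the box-indexed equality to the genuine conditional-independence statement $\mathcal{H}^*(S)$, modulo null sets. Depending on how much measure-theoretic machinery the paper wishes to expose, one can present the short conditional-independence argument or the shrinking-slice argument; both lead to the same conclusion.
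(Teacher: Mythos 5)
Your proof is correct, but it takes a genuinely more explicit route than the paper's. The paper proves Theorem~\ref{theorem2} by contradiction in both directions, and in each direction the entire content is the unproved assertion that $p_{s_j}(X\mid\overline{X_{s_j}})\neq p_{s_j}$ for some $s_j$ if and only if $p_{s_j\mid c_j}\neq p_{s_j}$ for some slice $c_j$ --- i.e.\ it treats the passage between conditioning on the full complement vector and conditioning on a hyperrectangular event as self-evident. You prove exactly that passage: the forward direction by pulling $p_{s_i}(x)$ out of the integral over the box $R_{c_i}$ (equivalently, conditioning on an event independent of $X_{s_i}$ leaves its law unchanged), and the converse by a $\pi$-$\lambda$ argument or a shrinking-box/Lebesgue-differentiation argument. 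What your approach buys is precisely the justification the paper omits; what it costs is the interpretive step you yourself flag, namely reading $\mathcal{P}^c(S)$ at the population level as the family of \emph{all} hyperrectangles rather than only those whose per-coordinate mass is the fixed fraction $n'/n$ implicit in Definition~\ref{slice} --- under the stricter reading the box family is not a $\pi$-system and cannot shrink to a point, so your converse would need additional work. Since the paper's own proof silently relies on the same (or a stronger) identification, your version establishes the same statement with strictly more rigor.
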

\begin{proof}
	By contradiction, using Theorem \ref{IA2}.
	\begin{align*}
	\intertext{`$\Leftarrow$': From  Theorem \ref{IA2}, assume $\mathcal{H}^*(S)$ and that } 
	~& \exists s_j \in S : p_{s_j}(X|\overline{X_j}) \neq p_{s_j} \\
	&\Rightarrow  \exists c_j \in \mathcal{P}^c(S) : p_{s_j | c_j} \neq p_{s_j} \\
	&\Rightarrow \text{Contradiction of Theorem \ref{theorem2}}  \notag
	\intertext{`$\Rightarrow$': From Theorem \ref{theorem2}, assume $\mathcal{H}^*(S)$ and that }
	~& \exists s_j \in S, \exists c_j \in P^c(S) : p_{s_j | c_j} \neq p_{s_j} \\
	&\Rightarrow  p_{s_j}(X|\overline{X_{s_j}}) \neq p_{s_j} \\
	&\Rightarrow  \text{Contradiction of Theorem \ref{IA2}} \qedhere
	\end{align*}
\end{proof}

\subsubsection{Discrepancy Estimation} In reality, one only has access to a limited number of observations. Thus, one must quantify the discrepancy between empirical distributions. The basic idea is to use a statistical test $\mathcal{T}$: 
\begin{align}
disc \left ( 
\hat{p}_{s_i} , \hat{p}_{s_i | {c_i}}
\right ) \equiv 
\mathcal{T}\left( \hat{p}_{s_i}, \hat{p}_{s_i | {c_i}} \right )
\end{align}
However, since the number of observations is finite, the observations underlying $\hat{p}_{s_i | {c_i}}$
are \textbf{included} in the set of observations from $\hat{p}_{s_i}$. 
This is problematic, as statistical tests assume the two samples to be \textbf{distinct}. Plus, when $\alpha \approx 1$, $\hat{p}_{s_i | {c_i}}$ converges to $\hat{p}_{s_i}$, i.e., the two populations are nearly the same. Conversely, $\alpha \approx 0$ yields spurious effects, since the observations from $\hat{p}_{s_i | {c_i}}$ are then few. We solve the problem by observing that $p_{s_i | {c_i}}$ and $p_{s_i | {\bar{c}_i}}$ must be equal under $\mathcal{H}^*$. 

\begin{theorem}[$\mathcal{H}^*$ and Complementary Conditions] \label{theorem3}
	\begin{align}
	\mathcal{H}^*(S) \Leftrightarrow p_{s_i | {\bar{c}_i}} &= p_{s_i | {c_i}}
	&\forall s_i \in S, \forall c_i \in \mathcal{P}^{c}(S) 
	\end{align}
\end{theorem}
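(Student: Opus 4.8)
The plan is to reduce Theorem~\ref{theorem3} to Theorem~\ref{theorem2} by means of the law of total probability. The key observation is that, for a fixed reference dimension $s_i \in S$ and a fixed slice $c_i \in \mathcal{P}^c(S)$, the slice $c_i$ and its complement $\bar c_i$ partition the data space, and both events are defined \emph{exclusively} through the non-reference variables $\overline{X_{s_i}}$. Hence the marginal $p_{s_i}$ decomposes as the two-component mixture
\begin{align*}
p_{s_i} \;=\; \pi_{c_i}\, p_{s_i \mid c_i} \;+\; (1 - \pi_{c_i})\, p_{s_i \mid \bar c_i},
\end{align*}
where $\pi_{c_i} = \Pr(\vec{x} \in c_i)$ is the share of mass falling into the slice. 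Because we fixed $\alpha \in (0,1)$, neither $c_i$ nor $\bar c_i$ is empty, so $\pi_{c_i} \in (0,1)$; this non-degeneracy is what the whole argument hinges on.

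For `$\Rightarrow$': assume $\mathcal{H}^*(S)$. By Theorem~\ref{theorem2}, $p_{s_i \mid c_i} = p_{s_i}$ for every $s_i \in S$ and every $c_i \in \mathcal{P}^c(S)$. Substituting into the mixture identity gives $p_{s_i} = \pi_{c_i} p_{s_i} + (1-\pi_{c_i}) p_{s_i \mid \bar c_i}$, i.e.\ $(1-\pi_{c_i})\,(p_{s_i \mid \bar c_i} - p_{s_i}) = 0$; since $\pi_{c_i} < 1$ this forces $p_{s_i \mid \bar c_i} = p_{s_i} = p_{s_i \mid c_i}$, which is the claim. (Equivalently: $\mathcal{H}^*(S)$ says $X_{s_i}$ is independent of $\overline{X_{s_i}}$, so conditioning on the $\overline{X_{s_i}}$-measurable event $\bar c_i$ leaves the distribution of $X_{s_i}$ unchanged, exactly as conditioning on $c_i$ does.)

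For `$\Leftarrow$': assume $p_{s_i \mid \bar c_i} = p_{s_i \mid c_i}$ for all $s_i \in S$ and all $c_i \in \mathcal{P}^c(S)$. Plugging this equality into the mixture identity collapses it to $p_{s_i} = \pi_{c_i} p_{s_i \mid c_i} + (1-\pi_{c_i}) p_{s_i \mid c_i} = p_{s_i \mid c_i}$, so $p_{s_i \mid c_i} = p_{s_i}$ for all $s_i$ and all $c_i$. By the `$\Leftarrow$' direction of Theorem~\ref{theorem2}, this is equivalent to $\mathcal{H}^*(S)$, completing the proof.

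The main point requiring care is that $\bar c_i$ is \emph{not} itself a subspace slice in the sense of Definition~\ref{slice} (it is a union of half-spaces, not an axis-aligned box), so Theorem~\ref{theorem2} cannot be applied to it directly; the mixture decomposition is precisely the bridge, as it uses only that $\bar c_i$ is the set-complement of $c_i$ inside the domain. The secondary obstacle is the degenerate case $\pi_{c_i} \in \{0,1\}$, under which the `$\Rightarrow$' step would be vacuous and $p_{s_i\mid\bar c_i}$ ill-defined; this is excluded by the standing assumption $\alpha \in (0,1)$, which guarantees strictly positive mass in both a slice and its complement for every reference dimension and every admissible choice of conditions.
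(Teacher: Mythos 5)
Your proof is correct, and it performs the same reduction as the paper — both directions of Theorem~\ref{theorem3} are pulled back to Theorem~\ref{theorem2} — but the way you build the bridge between $p_{s_i}$, $p_{s_i \mid c_i}$ and $p_{s_i \mid \bar c_i}$ is genuinely cleaner. The paper argues by contradiction and manipulates conditions syntactically, invoking identities such as $p_{s_j} = p_{s_j \mid c_j \cup \bar c_j}$ and then inferring, for instance, that $p_{s_j \mid c_j} \neq p_{s_j \mid c_j \cup \bar c_j}$ entails $p_{s_j \mid c_j} \neq p_{s_j \mid \bar c_j}$. That inference is not a formal rewriting step: it holds only because the marginal is a non-trivial convex combination of the two conditionals, which is exactly the mixture identity $p_{s_i} = \pi_{c_i}\, p_{s_i \mid c_i} + (1-\pi_{c_i})\, p_{s_i \mid \bar c_i}$ that you write down explicitly. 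Your law-of-total-probability decomposition is therefore the justification the paper leaves implicit, and it additionally surfaces the non-degeneracy requirement $\pi_{c_i} \in (0,1)$ (secured by $\alpha \in (0,1)$), without which the ``$\Rightarrow$'' direction would be vacuous and $p_{s_i \mid \bar c_i}$ ill-defined — a hypothesis the paper never states but tacitly relies on. Your observation that $\bar c_i$ is not itself a slice in the sense of Definition~\ref{slice}, so Theorem~\ref{theorem2} cannot be applied to it directly and the mixture identity is the only available bridge, is also exactly right. In short: same reduction to Theorem~\ref{theorem2}, but your direct mixture argument is tighter and closes the gaps in the paper's contradiction-style presentation.
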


\begin{proof}
	By contradiction, using Theorem \ref{theorem2}.
	\begin{align*}
	\intertext{`$\Leftarrow$': From  Theorem \ref{theorem2}, assume $\mathcal{H}^*(S)$ and that}
	~ &\exists s_j \in S, \exists c_j \in P^c(S) : p_{s_j | c_j} \neq p_{s_j} \\
	\intertext{\quad since $p_{s_j} = p_{s_j | c_j \cup \bar{c}_j}$,}
	&\Rightarrow  \exists s_j \in S, \exists c_j \in P^c(S) : p_{s_j | c_j} \neq p_{s_j | c_j \cup \bar{c}_j} \\
	&\Rightarrow \exists s_j \in S, \exists c_j \in P^c(S) : p_{s_j | c_j} \neq p_{s_j | \bar{c}_j} \\
	&\Rightarrow  \text{Contradiction of Theorem \ref{theorem3}} \\
	\intertext{`$\Rightarrow$': From  Theorem \ref{theorem3}, assume $\mathcal{H}^*(S)$ and that} 
	~ &\exists s_j \in S, \exists c_j \in P^c(S) : p_{s_j | c_j} \neq p_{s_j | \bar{c}_j} \\ 
	&\Rightarrow \exists s_j \in S, \exists c_j \in P^c(S) : p_{s_j | c_j \cup c_j} \neq p_{s_j | \bar{c}_j \cup c_j} \\
	\intertext{\quad since $c_j \cup c_j = c_j$ and $p_{s_j} = p_{s_j | \bar{c}_j \cup c_j}$,}
	&\Rightarrow \exists s_j \in S, \exists c_j \in P^c(S) : p_{s_j | c_j} \neq p_{s_j} \\
	&\Rightarrow \text{Contradiction of Theorem \ref{theorem2}} \qedhere
	\end{align*}
\end{proof}

Hence, one can evaluate the $\mathcal{H}^*$-assumption by looking instead at the discrepancies between the conditional distribution and its complementary conditional distribution. When doing so, the samples obtained from both distributions are distinct. 

We have defined dimensionality-aware slicing based on $\alpha$, the expected share of observations in the slice $c_i$. Thus, the expected share of observations $\bar{\alpha}$ in $\bar{c}_i$ equals $1-\alpha$. This leads to setting $\alpha = 0.5$, so that $\bar{\alpha} = \alpha$. This choice is pertinent for statistical testing, as equal sample sizes lead to higher statistical stability, and we get rid of parameter $\alpha$. 

We also propose to restrict the domain of the reference dimension $s_i$ to the same proportion $\alpha$ of objects:

\begin{dfn}[Marginal Restriction]
	A marginal restriction is a condition on the reference dimension $s_i$, i.e., an interval $r_i:[l_{s_i}, u_{s_i}]$, such that $|\{ \vec{x_j} : x^{s_i}_j \in B \wedge x^{s_i}_j \in r_i | = \left \lceil \alpha \cdot n \right \rceil$. We define $p_{s_i | {c_i} | r_i}$ as the restricted conditional distribution given $c_i$, $r_i$. $\mathcal{P}^{r}(S)$ is the set of all restrictions.  
\end{dfn}

With the marginal restriction, the approach becomes more sensitive to local effects in the marginal distribution, compared to simply considering the full range. Furthermore, this reduces the number of points in the two samples by $\alpha$, leading to lower computational requirements of the underlying statistical test. 

\begin{figure}
	\hfill
	\begin{subfigure}{0.329\linewidth}
		\begin{center}
			\includegraphics[width=\linewidth]{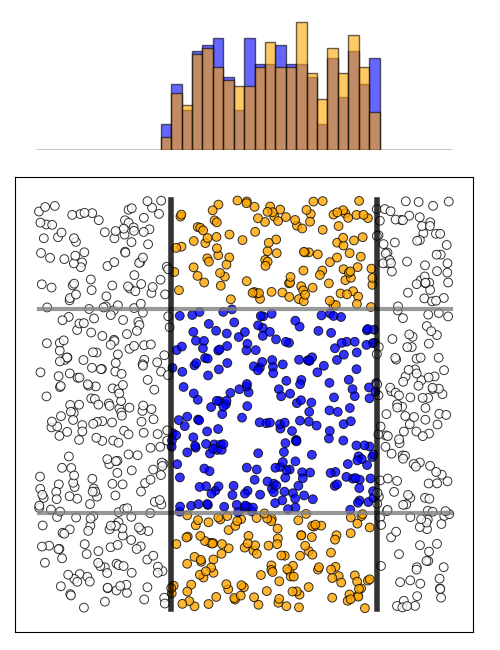}
		\end{center}
		\caption{Independent}
		\label{fig:slicing-independent}
	\end{subfigure}\hfill
	\begin{subfigure}{0.329\linewidth}
		\begin{center}
			\includegraphics[width=\linewidth]{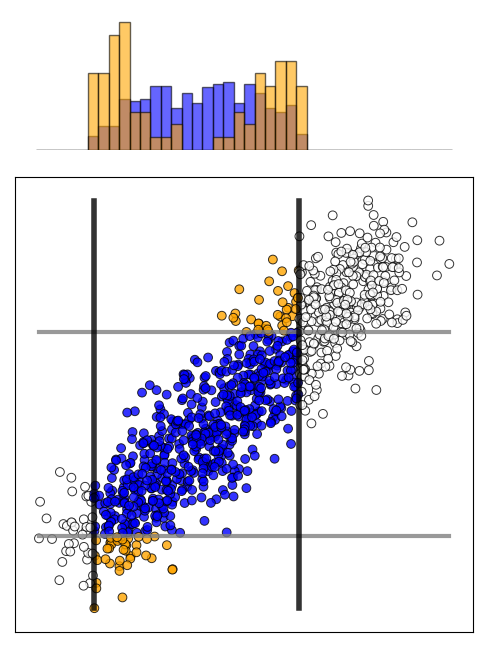}
		\end{center}
		\caption{Linear}
		\label{fig:slicing-linear}
	\end{subfigure}\hfill
	\begin{subfigure}{0.329\linewidth}
		\begin{center}
			\includegraphics[width=\linewidth]{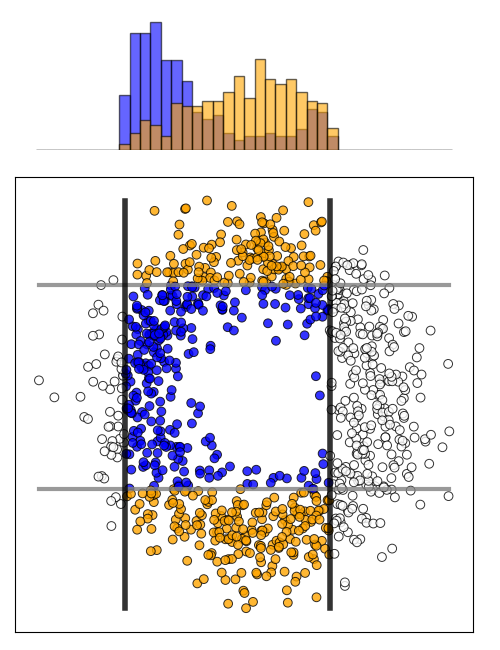}
		\end{center}
		\caption{Circle}
		\label{fig:slicing-circle}
	\end{subfigure} \hfill
	\caption{Slicing in 2-D subspaces, with $\alpha =  0.5$} 
	\label{fig:slicing}
\end{figure}

We illustrate slicing in Figure \ref{fig:slicing}, with an independent subspace on the left-hand side and with subspaces with noisy dependencies on the right. The grey lines show a random slice $c_x$ on the $y$-axis. Two black bold lines stand for the restriction $r_x$. The points in dark blue are in the restricted slice $c_x | r_x$ and the points in light orange are in $\bar{c}_x | r_x$. Using histograms, we plot along the $x$-axis the distribution of the points in both samples. From the histograms, we can see that the two distributions are relatively similar for Figure \ref{fig:slicing-independent}, while they are markedly different for Figure \ref{fig:slicing-linear} and \ref{fig:slicing-circle}.

In the end, after each slicing operation, one obtains two object samples $B_{c_i | r_i }$ and $B_{\bar{c}_i | r_i }$ such that $B_{c_i | r_i } \cap  B_{\bar{c}_i | r_i } = \emptyset$, and we use a statistical test $\mathcal{T}$ to estimate the discrepancy between $\hat{p}_{s_i | {c_i} | r_i }$ and $\hat{p}_{s_i | {\bar{c}_i} | r_i }$. 

A statistical test $\mathcal{T} \left( B_1, B_2 \right)$ on two samples $B_1$ and $B_2$ typically yields a $p$-value. Traditionally, one uses $p$-value to assess the \textit{statistical significance}. It is the probability to falsely reject a true null hypothesis, where the null hypothesis is the independence. Conversely, $p^c = 1-p$ is a \textit{confidence level} or probability to truly reject a false null hypothesis. The rationale behind $\mathcal{D}^*$ is to yield values quantifying the independence violation.
We define our own notion of \textit{contrast}, abbreviated as $\mathcal{C}$, as the expected value of the \textit{confidence level} of a statistical test $\mathcal{T}$ between the samples from the conditional distributions for all the possible dimensions $s_i$, slices $c_i$ and restrictions $r_i$: 

\begin{dfn}[Contrast $\mathcal{C}$]
	\label{def:contrast-test}
	\begin{align} 
	\mathcal{C}(S) &\equiv
	\mathop{\mathbb{E}}_{\{c_i,r_i\} \in {\mathcal{P}^{c}\times}\mathcal{P}^{r}}
	\Big [ \mathcal{T} \left (B_{c_i | r_i }, B_{\bar{c}_i | r_i } \right ) \Big ]
	\end{align} 
	where $\mathcal{T}$ yields $p^c$-values, and we draw $c_i$, $r_i$ randomly and independently from each other w.r.t. any dimension $s_i \in S$.
\end{dfn}
By definition, and independently from the underlying test, $\mathcal{T} \sim \mathcal{U}[0,1]$ when the two samples are independent from each other, and $\mathcal{T} \approx 1$ as the evidence against independence becomes stronger. The properties of $\mathcal{C}$ follow:
\begin{itemize}
	\item $\mathcal{C}$ converges to $1$ as the dependency strength in $S$ increases, since the $p^c$-values converge stochastically to $1$.
	\item $\mathcal{C}$ converges to $0.5$ when $S$ is independent, since the distribution of the $p^c$-values converges to $\mathcal{U}[0,1]$.  
	\item $\mathcal{C}$ is bounded between $0$ and $1$, as the $p^c$-values are bounded between $0$ and $1$. 
\end{itemize}

\subsubsection{Monte Carlo Approximation}
\label{montecarlosimulation}
Unfortunately, $\mathcal{C}$ is impossible to compute exactly. Namely, one would need to know the distribution of $B_{c_i | r_i }$ and $B_{\bar{c}_i | r_i }$ for every dimension, slice and restriction. 
Instead, the idea is to approximate $\mathcal{C}$ via Monte Carlo (MC) simulations, using $M$ iterations. For each iteration, we choose the reference dimension, slice and restriction at random. The \textit{approximated contrast} $\mathcal{\hat{C}}$ is defined as follows:  

\begin{dfn}[Approximated Contrast $\mathcal{\hat{C}}$]
	\label{def:contrast-approx}
	\begin{align} 
	\mathcal{\hat{C}}(S) &= 
	\frac{1}{M} \sum_{m=1}^{M} \mathcal{T} \left ( B_{ \left [c_i | r_i \right ]_m }, B_{\left [\bar{c}_i | r_i \right ]_m } \right )
	\end{align} 
	where $\left [c_i | r_i \right ]_m$ means that we draw $i$, $c_i$ and $r_i$ randomly at iteration~$m$, i.e., $i \leftarrow \{1,...,|S|\}$ and $\{c_i, r_i\} \leftarrow \mathcal{P}^{c} \times \mathcal{P}^{r}$. 
\end{dfn}

\label{bound} Interestingly, we can bound the quality of the approximation. From Hoeffding's inequality \cite{Hoeffding1963}, we derive a bound on the probability of $\mathcal{\hat{C}}$ to deviate not less than $\varepsilon$ from $\mathcal{C}$. The bound decreases exponentially with increasing $M$:

\begin{theorem}[Hoeffding's Bound of $\mathcal{\hat{C}}$]
	\label{"th:hoeffding-chernoff-contrast"}
	\begin{align}
	\Pr\left(| \mathcal{\hat{C}} - \mathcal{C} | \geq \varepsilon \right) \leq 2e^{-2M \varepsilon^2}
	\end{align}
	where $M$ is the number of MC iterations, and $0 < \varepsilon < 1 - \mathcal{C}$. 
\end{theorem}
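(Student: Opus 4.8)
The plan is to recognize $\hat{\mathcal{C}}(S)$ as the empirical mean of $M$ i.i.d.\ bounded random variables and then invoke Hoeffding's inequality~\cite{Hoeffding1963} directly. First I would set
\begin{align*}
T_m \;:=\; \mathcal{T}\!\left( B_{\left[c_i | r_i\right]_m},\, B_{\left[\bar{c}_i | r_i\right]_m} \right), \qquad m = 1, \dots, M,
\end{align*}
so that $\hat{\mathcal{C}}(S) = \frac{1}{M}\sum_{m=1}^{M} T_m$ by Definition~\ref{def:contrast-approx}. By construction in that definition, at each iteration $m$ the triple $(i, c_i, r_i)$ is drawn from the same distribution over $\{1,\dots,|S|\} \times \mathcal{P}^{c}(S) \times \mathcal{P}^{r}(S)$ and independently of the draws at the other iterations (and any tie-breaking internal to $\mathcal{T}$ is likewise fresh at each step); hence $T_1, \dots, T_M$ are independent and identically distributed.

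Next I would verify the two hypotheses of Hoeffding's inequality. Boundedness: since $\mathcal{T}$ returns a $p^c$-value, $T_m \in [0,1]$ almost surely, i.e.\ each $T_m$ lies in an interval of length $1$. Mean: by the definition of expectation over the random choice of dimension, slice and restriction, $\mathbb{E}[T_m]$ is exactly the quantity $\mathcal{C}(S)$ from Definition~\ref{def:contrast-test}; consequently $\mathbb{E}[\hat{\mathcal{C}}(S)] = \mathcal{C}(S)$, so $\hat{\mathcal{C}}$ is an unbiased estimator of $\mathcal{C}$.

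Applying Hoeffding's inequality to the i.i.d.\ sequence $(T_m)_{m=1}^{M}$ with $a_m = 0$ and $b_m = 1$ then yields, for every $\varepsilon > 0$,
\begin{align*}
\Pr\!\left( \left| \hat{\mathcal{C}}(S) - \mathcal{C}(S) \right| \geq \varepsilon \right) \;\leq\; 2\exp\!\left( -\frac{2M^2\varepsilon^2}{\sum_{m=1}^{M}(b_m - a_m)^2} \right) \;=\; 2\,e^{-2M\varepsilon^2},
\end{align*}
which is the claimed bound; the restriction $0 < \varepsilon < 1 - \mathcal{C}$ is not needed for the inequality itself and merely marks the regime in which the upper-tail event is not already excluded by $\hat{\mathcal{C}}(S) \in [0,1]$.

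The main point requiring care is not analytic but modelling: one must argue that the summands are genuinely i.i.d.\ with mean $\mathcal{C}(S)$ — in particular that the randomness used to pick $c_i$ and $r_i$ is independent across the $M$ iterations, which is precisely what Definition~\ref{def:contrast-approx} stipulates. Everything else is the textbook Hoeffding argument; if a sharper one-sided statement were wanted, one could instead use $e^{-2M\varepsilon^2}$, but the two-sided version stated in the theorem suffices here.
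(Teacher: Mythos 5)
Your proposal is correct and follows essentially the same route as the paper: both treat the $M$ Monte Carlo summands as i.i.d.\ random variables in $[0,1]$ with mean $\mathcal{C}$ (via Definition~\ref{def:contrast-test}) and apply Hoeffding's inequality directly. Your additional care about why the summands are i.i.d.\ and your remark that the restriction $0 < \varepsilon < 1 - \mathcal{C}$ is inessential for the two-sided bound are both accurate refinements of the paper's argument, not departures from it.
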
 

\begin{proof}
	Let us first restate Theorem 1 from Hoeffding \cite{Hoeffding1963}:
	Let $X_1, X_2, \dots, X_n$ be independent random variables $0 \leq X_i \leq 1$ for $i \in \{1,\dots,n\}$ and let $\bar{X} = \frac{1}{n}(X_1 + X_2 + \dots + X_n)$ be their mean with expected value $E[\bar{X}]$. Then, for $0<t<1-E[\bar{X}]$: 
	\begin{align*}
	\Pr\left(\bar{X}-E[\bar{X}] \geq t \right) \leq e^{-2nt^2} && \Pr\left(\bar{X}-E[\bar{X}] \leq t \right) \leq e^{-2nt^2}
	\end{align*}
	We can treat each MC iteration $m_1, m_2, \dots, m_M$ as i.i.d. random variables $X_{m_1}, X_{m_2}, \dots, X_{m_M}$ in $[0,1]$ with mean $\mathcal{\hat{C}}$ and expected value $E[\mathcal{\hat{C}}] = \mathcal{C}$ (\mbox{Definition \ref{def:contrast-test}}). Thus, for $0 < \varepsilon < 1 - \mathcal{C}$, we have $\Pr (| \mathcal{\hat{C}} - \mathcal{C} | \geq \varepsilon) \leq 2e^{-2M \varepsilon^2}$. 
\end{proof}

This is very useful. For instance, when $M=200$, the probability of $\mathcal{\hat{C}}$ to deviate more than $0.1$ from its expected value is less than $2e^{-4} \approx 0.04$, and this bound decreases exponentially with $M$. Thus, one can adjust the computational requirements of $\mathcal{\hat{C}}$, given the available resources or a desired quality level. In other words, users can set $M$ intuitively, as it leads to an expected quality, and vice versa. Furthermore, $M$ is the only parameter of the \textit{MCDE} framework.

\subsection{Instantiation as \textit{MWP}}
\label{instantiation-as-mwp}

To use the \textit{MCDE} framework, one must instantiate a suitable statistical test as $\mathcal{T}$. 
To comply with our requirements, this statistical test needs to be \textbf{non-parametric} (\textbf{R5}) and \mbox{\textbf{\textbf{robust}} (\textbf{R8})}. As a proof of concept, we instantiate $\mathcal{T}$ as a two-sided \emph{Mann-Whitney U} test \cite{Mann1947}, abbreviated as \emph{U} hereafter.

The \emph{U} test has the following features which other statistical tests may lack.
First, it is one of the most powerful statistical tests \cite{Mood1954}: Its power-efficiency approaches 95.5\% when
comparing it to the $t$-test, as the number of observations $n$ increases. But contrary to the $t$-test, the \emph{U} test  is \textbf{non-parametric}. Second, \cite{Dixon1954} shows that the \emph{U} test is more efficient than the \textit{Kolmogorov-Smirnov} test for large samples. 
Finally, the \emph{U} test  does not require continuous data, as it operates on ranks. Thus, it is \textbf{robust} and applicable to virtually any kind of ordinal measurements. 

We review the definition of the \textit{U} test \cite{Siegel1956} between two samples $B_1$ and $B_2$ with size $n_1$ and $n_2$. It tests the null hypothesis that it is equally likely that a randomly selected value from one sample will be less than or greater than a randomly selected value from the other sample. 
$R_1$ and $R_2$ are the sums of ranks of the objects in $B_1$ and $B_2$, obtained by ranking the values of $B_1$ and $B_2$ together, starting with $0$. In case of ties, the ranks of the tying objects are \textit{adjusted}, i.e., become the average of their ranks. 
\begin{align}
\textit{U test} :&&  p = \Phi(Z)~\text{or}~1-\Phi(Z) && Z = \frac{ U - \mu }{\sigma}
\end{align}
Here, one can choose $U = U_1$ or  $U = U_2$ equivalently, with:
\begin{align}
U_1 = R_1 - \frac{n_1(n_1-1)}{2} && U_2 = R_2 - \frac{n_2(n_2-1)}{2}
\end{align}
$\Phi$ is the cumulative distribution function (\textit{cdf}) of the normal distribution; 
$\mu$, $\sigma$ are defined as: 
\begin{align}
\mu = \frac{n_1 n_2}{2} && \sigma = \sqrt{\frac{n_1 n_2}{12} \left ( (n+1) - \sum_{i=1}^{k} \frac{t_i^3 - t_i}{n(n-1)} \right ) }
\end{align}
The summation term of $\sigma$ is a correction for ties, where $t_i$ is the number of observations sharing rank $i$, and $k$ is the number of distinct ranks.
For large enough samples, typically $n > 30$, the values of $U$ are normally distributed \cite{Mann1947} with mean $\mu$ and standard deviation $\sigma$. $Z$ is the standardized score, since \mbox{$Z \sim \mathcal{N}(0,1)$}. If $U = U_1$, then $Z \ll 0$ and $p \approx 0$ when the ranks of $A_1$ are stochastically smaller than those of $A_2$. Conversely, when the ranks of $A_1$ are stochastically larger, then $Z \gg 0$ and $p \approx 1$.
Both cases indicate an independence violation. As both directions are relevant, our test should capture them equally. 

We implement a two-sided version of the \textit{Mann-Whitney U} test, which we dub $\mathcal{T}_{\textit{MWP}}$. The letter \textit{P} emphasises that the test returns a $p^c$-value, as required by the \textit{MCDE} framework:
\begin{align}
\mathcal{T}_{\textit{MWP}}: && {p^c}= \Phi^{1/2}(Z') && Z' = |Z| && U = U_1
\end{align}

Since $Z  \sim \mathcal{N}(0,1)$, $Z'$ follows the so-called half-normal distribution with \textit{cdf} $\Phi^{1/2}$. Since $|U_1 - \mu|$ = $|U_2 - \mu|$, we can simply set $U = U_1$ or $U = U_2$ arbitrarily, i.e., one only needs to sum the ranks of one of the samples.

In the end, when the independence assumption does not hold, we expect the ranks of the two samples after slicing to differ, which leads to $\mathcal{T}_{\textit{MWP}} \approx 1$.
Thus, the test complies with Definitions \ref{def:contrast-test} and \ref{def:contrast-approx}. 
We define \textit{MWP} or $\mathcal{\hat{C}}_{\textit{MWP}}$ as the instantiation of $\mathcal{\hat{C}}$ using $\mathcal{T}_{\textit{MWP}}$ as statistical test: 

\begin{dfn}[Mann-Whitney P (\textit{MWP})]
	\begin{align}
	\text{\textit{MWP}} = \mathcal{\hat{C}}_{\textit{MWP}} = \frac{1}{M} \sum_{m=1}^{M}
	\mathcal{T}_{\textit{MWP}} \left (B_{\left [c_i | r_i \right ]_m }, B_{ \left [\bar{c}_i | r_i \right ]_m } \right )
	\end{align}
\end{dfn}

\subsection{Algorithmic Considerations \& Complexity} 

We now outline our algorithm to efficiently compute \textit{MWP}.

\subsubsection{Computing an Index Structure} \label{computing-index}

The \textit{MCDE}-\textit{MWP} approach requires the creation of an index, as a preprocessing step, to avoid the expensive repetition of sorting operations. The index $\mathcal{I}$ is a one-dimensional structure containing the adjusted ranks and tying values corrections for each dimension.
It  consists of $|S|$ elements $\{I_1, \dots, I_{|S|} \}$, where $I_i$ is an array of $3$-tuple $[(l_1^i, a_1^i, b_1^i),\dots,(l_n^i, a_n^i, b_n^i)]$ ordered by $s_i$ in ascending order. 
In this tuple, $l^i$ are the row numbers of the values of $s_i$, $a^i$ are the \textit{adjusted} ranks and $b^i$ the accumulated correction of the standard deviation $\sigma$ from the first element. We denote $I_i[j]$, $s_i[j]$ as the $j$-th elements of $I_i$ and $s_i$; we refer to the components of $I_i[j]$ as $l_j^i$, $a_j^i$, $b_j^i$.
We outline the construction of the index in Algorithm \ref{indexconstruction}. 
For each attribute $s_i$, we sort the values (Line~$4$) and perform a single pass over the sorted list to adjust the ranks and the correction for ties. Thus, the index construction complexity is in $O(|S| \cdot (n \cdot log(n) + n))$.

\begin{algorithm}
	\caption{MWP Index Construction}\label{indexconstruction}
	
	\begin{algorithmic}[1]
		\Function{ConstructIndex}{$S = \{s_i\}_{i \in \{1,\dots,d\}}$}
		\For{$i = 1$ to $|S|$} 
		
		\State $r^i \gets \left[0,\dots,n-1\right]$
		\State $l^i \gets$ sort $r^i$ by $s_i$ in ascending order
		\State $I_i \gets \left[(l^i_1, r^i_1),\dots,(l^i_n, r^i_n)\right]$ \Comment{Initialize $I_i:(l^i, r^i)$}
		\State $j \gets 1$ ; $correction \gets 0$
		\While{$j \leq n$}
		\State $k \gets j$ ; $t \gets 1$ ; $adjust \gets 0$
		\While{$(k < n-1) \wedge (s_{i}[l^i_k] = s_{i}[l^i_{k+1}])$}
		\State $adjust \gets adjust + r^i_{k}$
		\State increment $k$ and $t$
		\EndWhile
		\If{$k > j$} \Comment{Adjust the rank and correction}
		\State $adjusted \gets (adjust + r^i_{k}) / t$
		\State $correction \gets correction + t^3 - t$ 
		\For{$m \gets j$ to $k$}
		\State $I_i[m] \gets (l_m^i, adjusted, correction)$
		\EndFor
		\Else $~~I_i[j] \gets (l_j^i, r_j^i, correction)$
		\EndIf
		\State $j \gets j + t$
		\EndWhile
		
		\EndFor
		\State \Return $\mathcal{I}: \{I_1, \dots, I_{|S|} \}$ with $I_i: (l^i, a^i, b^i)$
		\EndFunction
	\end{algorithmic}
\end{algorithm}

\subsubsection{Slicing over the Index Structure}

We can slice the input data efficiently, because the tuples are already sorted in the index structure. 
We successively mask the row numbers based on a random condition for all but one reference attribute $s_r$. 
Algorithm \ref{slicing} is the pseudocode of the slicing process. 
The complexity of slicing is in  $O(|S| \cdot n)$.

\begin{algorithm}
	\caption{Dynamic Slicing}\label{slicing}
	\begin{algorithmic}[1]
		\Function{Slice}{$\mathcal{I}: \{I_1, \dots. I_{|S|} \}$, $r$}
		\State $slice \gets$ Array of $n$ boolean values initialized to $\textit{true}$
		\State $slicesize \gets \left \lceil n \cdot \sqrt[|S|-1]{\alpha}\,  \right \rceil  $
		\For{$I_i \in \mathcal{I} \setminus I_{r}$} 
		\State $start \gets$ random integer in $[1,n-slicesize]$
		\State $end \gets$ $start + slicesize$
		\For{$j \gets 1$ until $start$ and $end+1$ to $n$} 
		\State $slice[l_j^i] \gets \textit{false}$
		\EndFor
		\EndFor
		\State \Return $slice$
		\EndFunction
	\end{algorithmic}
\end{algorithm}

\subsubsection{Computing the Statistical Test} We give the pseudocode to compute the statistical test in Algorithm \ref{TMWP}. We determine a restriction $[start, end]$ on $s_{r}$ and sum the adjusted ranks of the objects that belong to the slice. 
Thanks to the marginal restriction, we compute the statistical test in a subsample of size $n' < n$. Since the ranks in this subset may not start from $0$, we adjust the sum of the ranks $R_1$ (Line $11$). Then, we compute a correction term (Line $14$) using the cumulative correction $b^r$ to adjust $\sigma$ for ties (Line $15$).
We compute the statistical test via a single pass, considering only elements between $start$ and $end$. Each operation requires constant time, thus, the complexity of this step is in $O(n)$.

\begin{algorithm}
	\caption{$\mathcal{T}_{\textit{MWP}}$}\label{TMWP}
	\begin{algorithmic}[1]
		\Function{Compute $\mathcal{T}_{\textit{MWP}}$}{$\mathcal{I}: \{I_1, \dots. I_{|S|} \}$, $slice$, $r$}
		
		\State $start \gets$ random integer in $[1,n \cdot (1-\alpha)]$ 
		\State $end \gets$ $start + \left \lceil n \cdot \alpha \right \rceil$
		
		\State $R_1 \gets 0$ ; $n_1 \gets 0$
		\For{$j \gets start$ to $end$} 
		\If{$slice[l^r_{j}] = \textit{true}$}
		\State $R_1 \gets R_1 + a^r_{j}$
		\State $n_1 \gets n_1 + 1$
		\EndIf
		\EndFor
		\State{$n' \gets end - start $} %
		\If{$n_1 = 0$ or $n_1 = n'$}
		 \Return $1$%
		\EndIf
		\State $U_1 \gets R_1 - start \cdot n_1$ 
		\State $n_2 \gets n' - n_1$
		\State $\mu \gets (n_1 \cdot n_2)/ 2$
		\State $correction \gets (b^r_{end-1}-b^r_{start-1})/(n'\cdot (n'-1))$ 
		\State $\sigma \gets \sqrt{((n_1 \cdot n_2)/12) \cdot {(n'+1-correction)}}$
		\State \Return $\Phi^{1/2}(|U_1 - \mu| / \sigma)$
		\EndFunction
	\end{algorithmic}
\end{algorithm}

\subsubsection{Computing \textit{MWP}} \label{ComputingMWP} To determine $\mathcal{\hat{C}}_{\textit{MWP}}$ 
of a subspace $S$, we first construct  the index. Then, in $M$ iterations, we slice the data set (Algorithm \ref{slicing}) and compute $\mathcal{T}_{\textit{MWP}}$ (Algorithm \ref{TMWP}). \textit{MWP} is the average of $\mathcal{T}_{\textit{MWP}}$. See Algorithm \ref{MWP_alg}. 

\begin{algorithm}
	\caption{\textit{MWP}}\label{MWP_alg}
	\begin{algorithmic}[1]
		\Function{\textit{MWP}}{$S = \{s_i\}_{i \in \{1, \dots, d\} }$}
		\State $\mathcal{I} \gets$ \Call{ConstructIndex}{$S$} ; $\textit{MWP} \gets 0$
		\For{$m \gets 1$ to $M$}
		\State $r \gets$ random integer in $\left[ 1,d \right]$ 
		\State $slice \gets$ \Call{Slice}{$\mathcal{I}$, $r$}
		\State $\textit{MWP} \gets \textit{MWP } +$ \Call{Compute $\mathcal{T}_{\textit{MWP}}$}{$\mathcal{I}$ , $slice$, $r$}
		\EndFor
		\State $\textit{MWP} \gets \textit{MWP} / M$
		\State \Return \textit{MWP}
		\EndFunction
	\end{algorithmic}
\end{algorithm}

When one wants to replace the statistical test, one only needs to change Algorithm \ref{TMWP}. The rest is part of the \textit{MCDE} framework and does not require any adaptation. 

\subsubsection{Complexity}
The overall complexity of \textit{MWP} is in $O(|S| \cdot (n \cdot log(n) + n) + M \cdot(|S|\cdot n + n))$. Since $|S| \ll n$, this simplifies to $O(n \cdot log(n) + M\cdot n)$. The index construction is asymptotically the most expensive step, as it is in $O(n \cdot log(n))$. However, one only needs to construct the index once. When the index for a given data set is available, one can compute \textit{MWP} in linear time for the exponential number of subspaces. 

Interestingly, \textit{MWP} is trivial to parallelise: one can compute the elements of the index structure $I_1, \dots, I_{|S|}$ in parallel, as they are independent from each other. Similarly, one can parallelise each Monte Carlo iteration in the loop at Line 3 of Algorithm \ref{MWP_alg}. This is useful, as multi-core architectures are ubiquitous in modern database systems. 

Thus, \textit{MWP} scales well with the size of the data set. We will verify this claim via experiments in Section \ref{scalability}.

\section{Evaluation}

\label{evaluation}

In this section, we show via experiments that \textit{MWP} fulfils our requirements from Section~\ref{motivation}.
We also compare our approach to a range of state-of-the-art dependency estimators, namely \textit{MS}, \textit{II}, \textit{TC}, \textit{CMI}, \textit{MAC}, \textit{UDS} and \textit{HiCS}.

We implement \textit{MWP} in Scala, while other approaches are implemented in Java. Note that, fundamentally, the impact of this difference on runtime is low, as both run in the JVM.
We re-implement \textit{MS} following \cite{Schmid2007}, \textit{TC} and \textit{II} following \cite{Timme2014} using Kraskov's \cite{Kraskov2004} and Kozachenko \& Leonenko estimators respectively, with parameter $k=4$. We use the R*-tree implementation from ELKI \cite{DBLP:journals/pvldb/SchubertKEZSZ15} to increase the efficiency of nearest neighbour queries. For \textit{CMI}, \textit{MAC}, \textit{UDS} and \textit{HiCS}, we use the implementation provided in \cite{Nguyen2015UDS}. Each algorithm runs single-threaded in a server with 32 cores at 2.2 GHz and 64GB RAM. We use the default parameters, if any. If not stated otherwise, the data samples we use in our experiments have the size $n = 1000$, $d = 3$, and we set $M = 50$ for \textit{MWP} and \textit{HiCS}. In most existing studies, such as in  \cite{Nguyen2014MAC, Reshef2011},
$n$ usually is equal or lower. 

\subsection{Methodology} 

To compare the approaches, the idea is to characterise the distribution of the score they produce w.r.t.\ different dependencies of variable strength and noise. Intuitively, stronger dependencies should lead to higher scores than noisier ones. 
\subsubsection{Dependency Generation} 
For benchmarking, we use an assortment of 12 multivariate dependencies scaled to $[0,1]$. 
Figure \ref{fig:deps-plot} represents each of them in two and three dimensions. 
For each dependency, we repeatedly draw $n$ objects with $d$ dimensions, to which we add Gaussian noise with standard deviation $\sigma$, which we call \textit{noise level}. 

\newcommand{\depwa}{0.32}
\begin{figure}
	\centering
	\begin{subfigure}[b]{\depwa\linewidth}
		\hfill
		\includegraphics[width=0.32\linewidth]{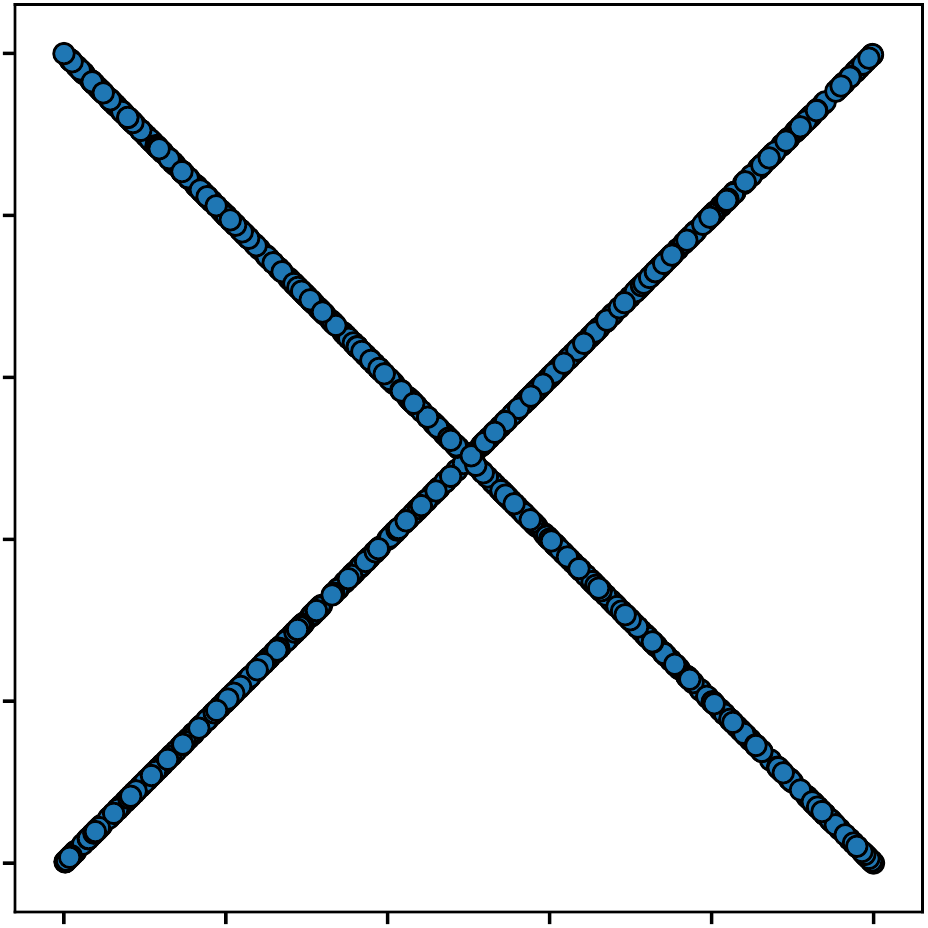} 
		\hfill
		\includegraphics[width=0.35\linewidth]{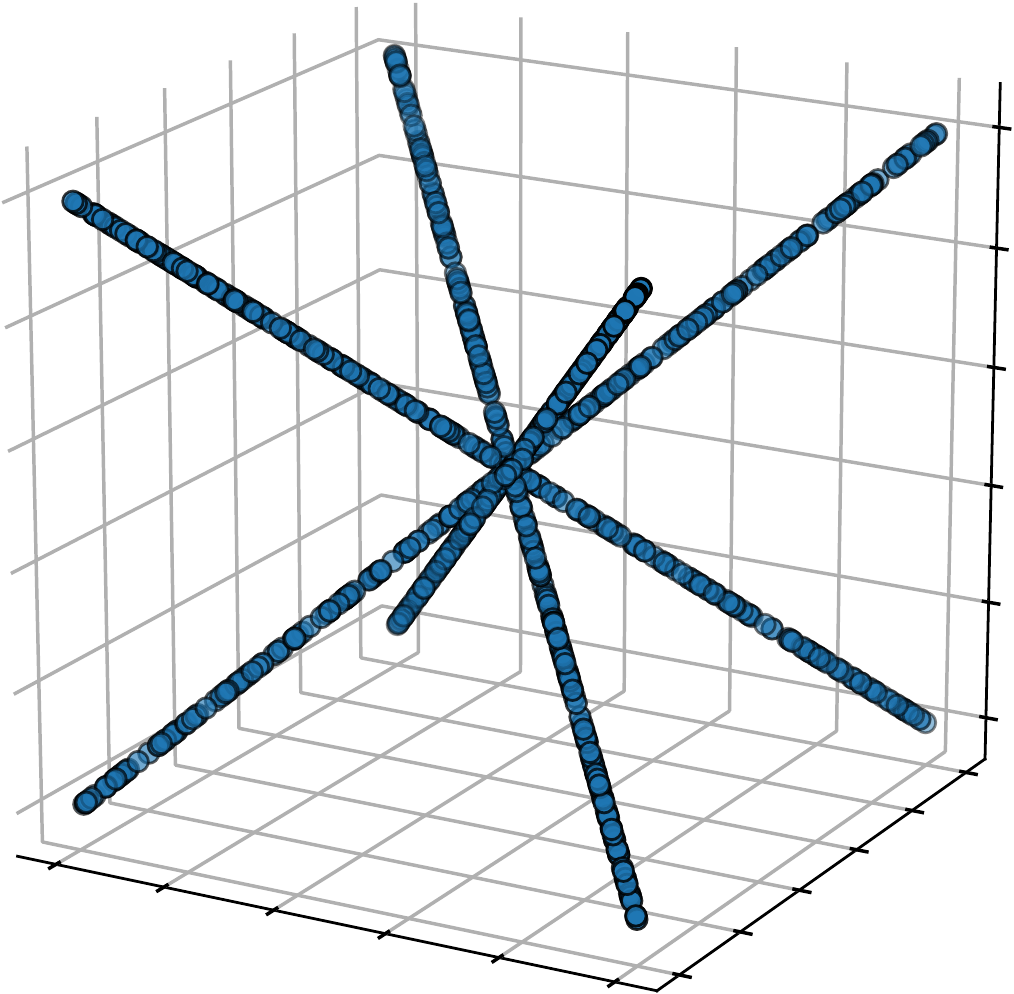}
		\hfill
		\caption{Cross (C)} 
	\end{subfigure} %
	\hfill
	\begin{subfigure}[b]{\depwa\linewidth}
		\hfill
		\includegraphics[width=0.32\linewidth]{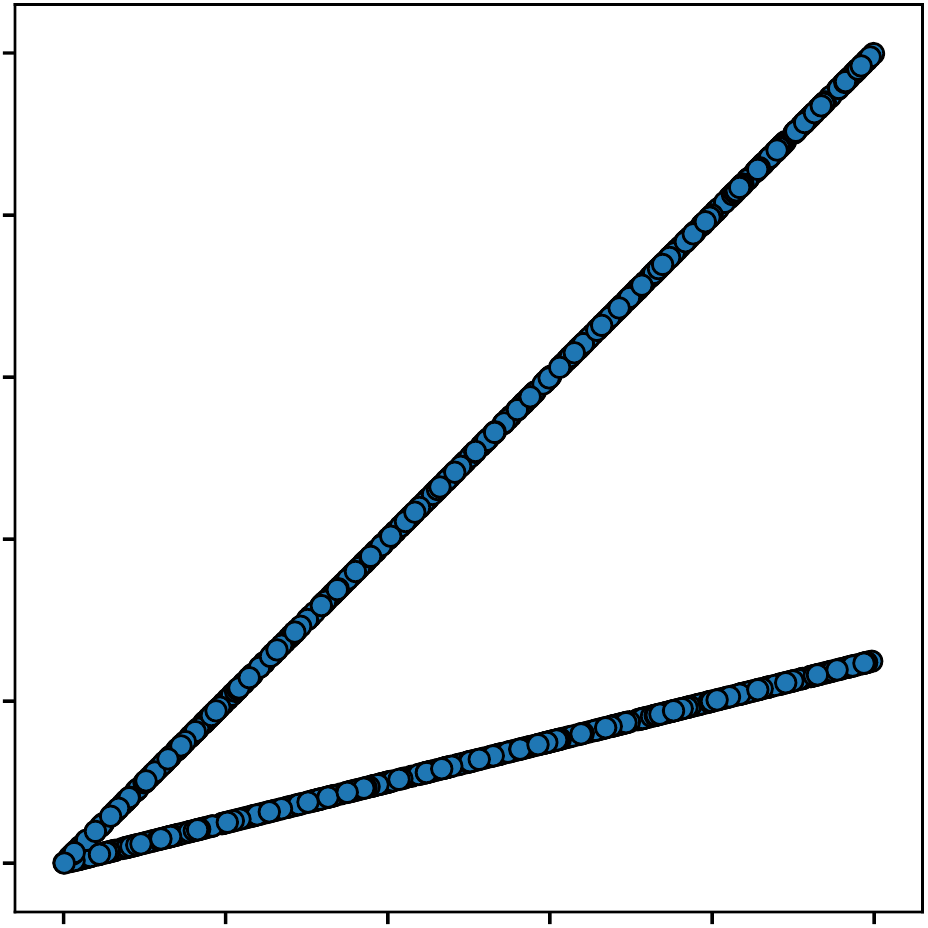} 
		\hfill
		\includegraphics[width=0.35\linewidth]{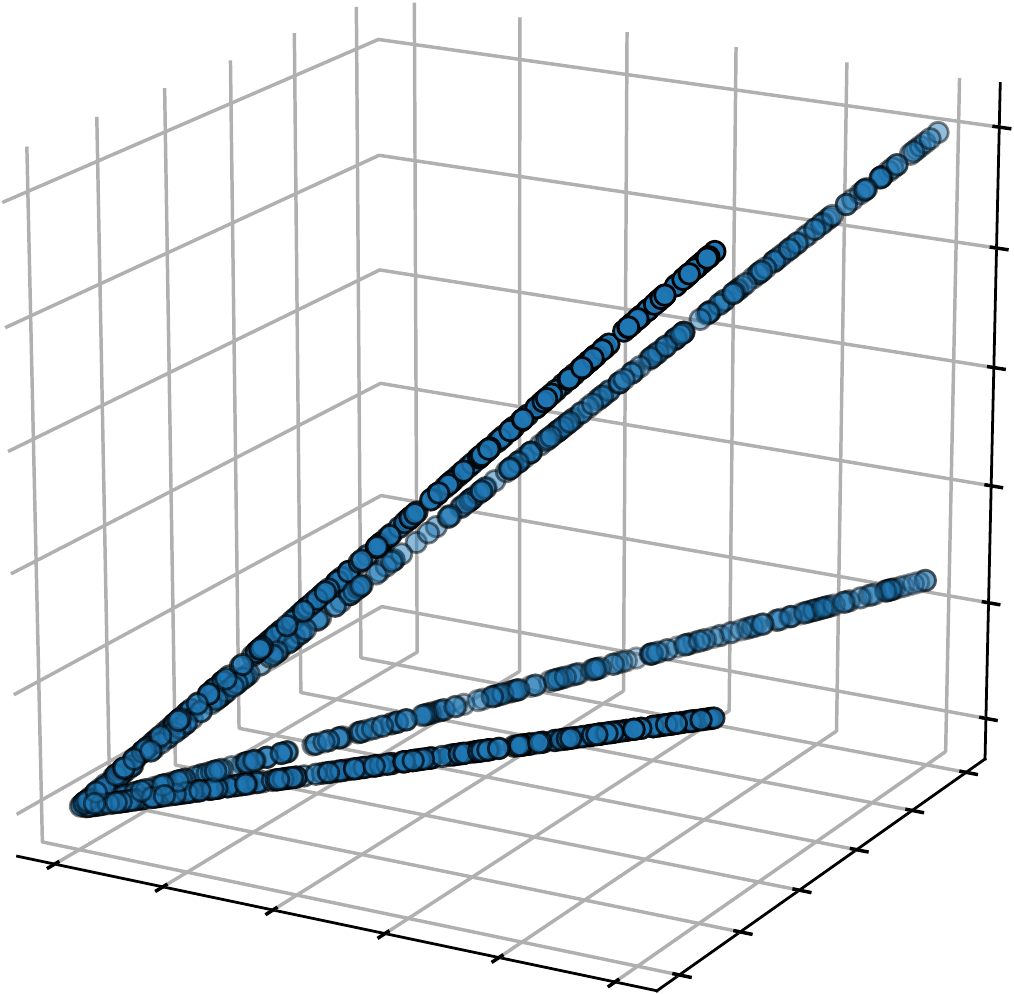} 
		\hfill
		\caption{Double linear (Dl)}
	\end{subfigure}
	\hfill
	\begin{subfigure}[b]{\depwa\linewidth}
		\hfill
		\includegraphics[width=0.32\linewidth]{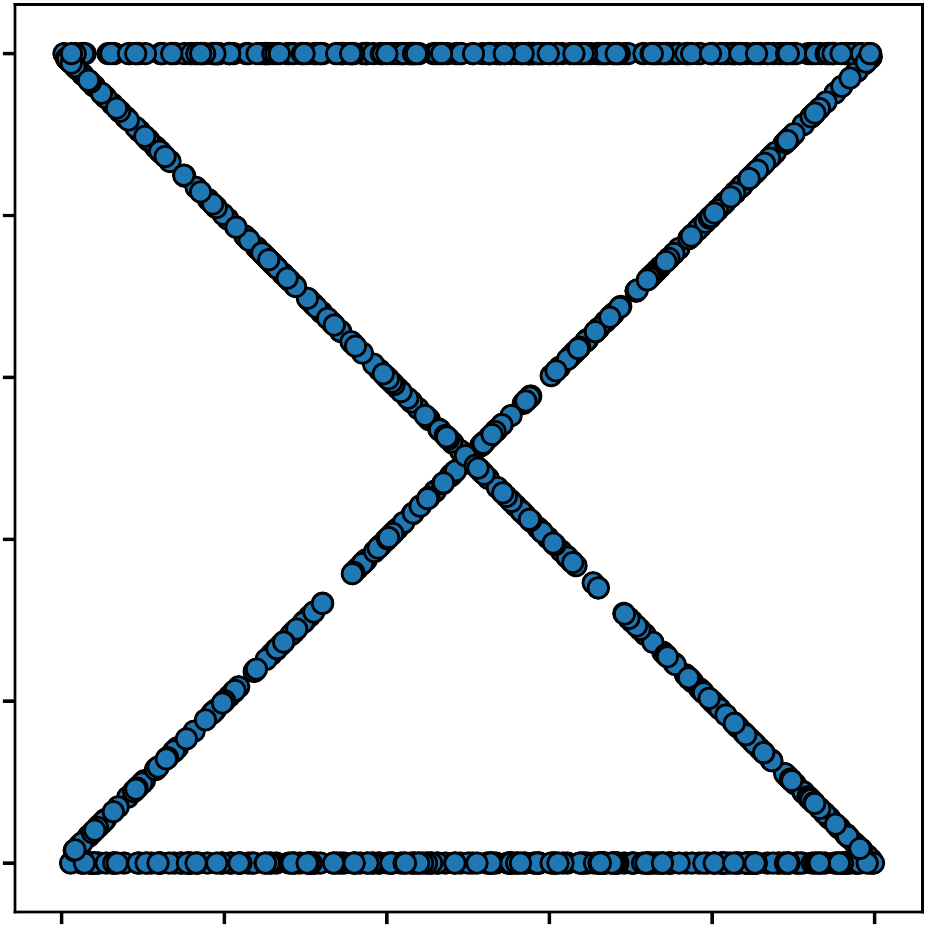} 
		\hfill
		\includegraphics[width=0.35\linewidth]{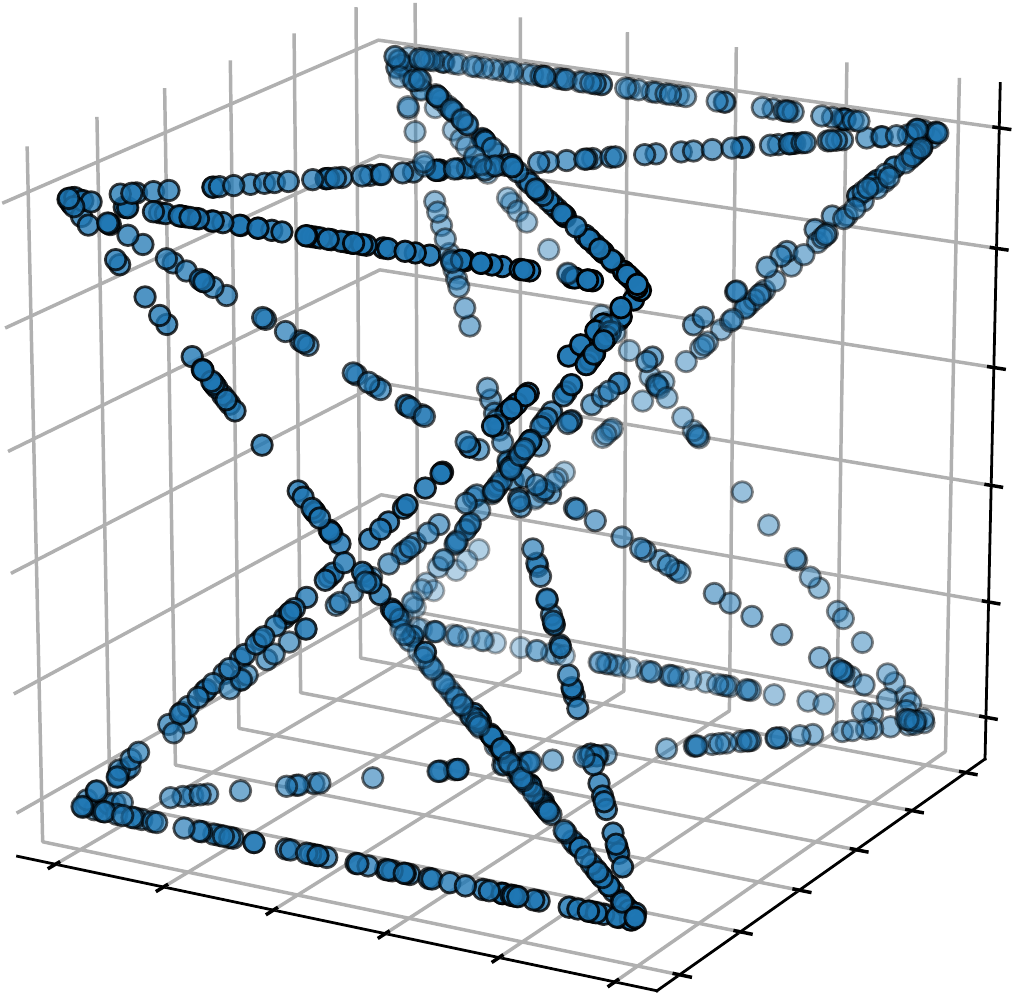} 
		\hfill
		\caption{Hourglass (H)}
	\end{subfigure}
	
	~\\
	\begin{subfigure}[b]{\depwa\linewidth}
		\hfill
		\includegraphics[width=0.32\linewidth]{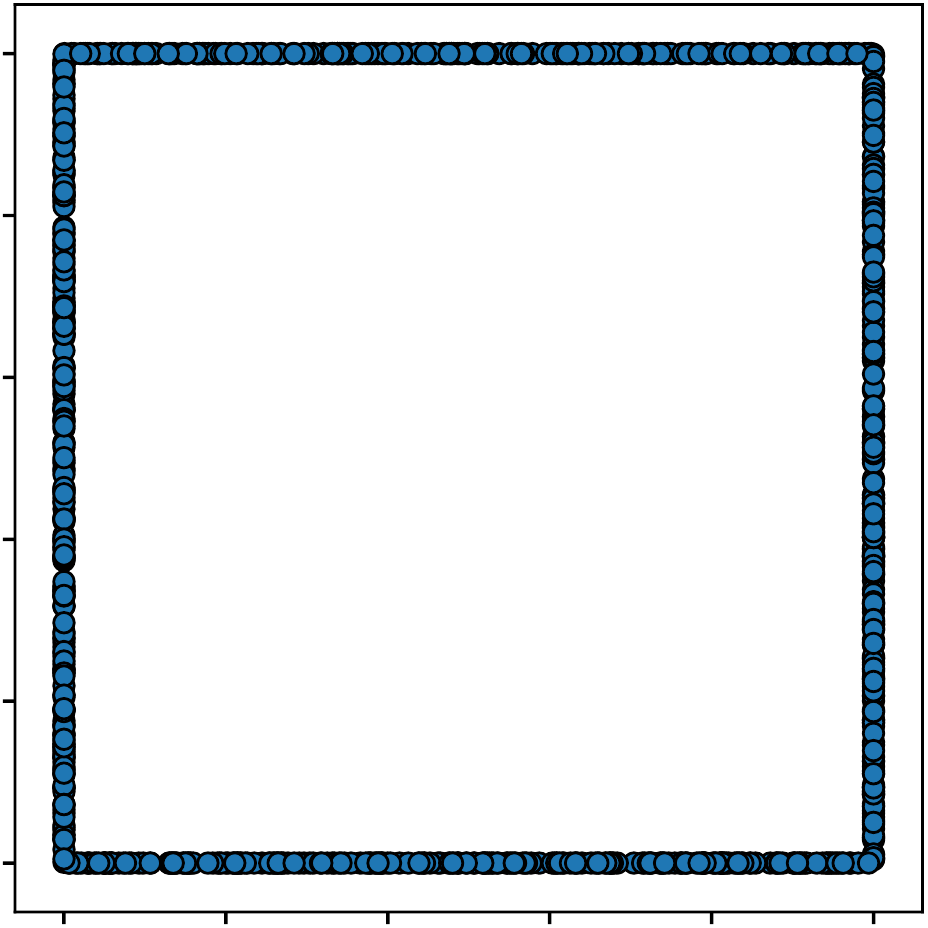} 
		\hfill
		\includegraphics[width=0.35\linewidth]{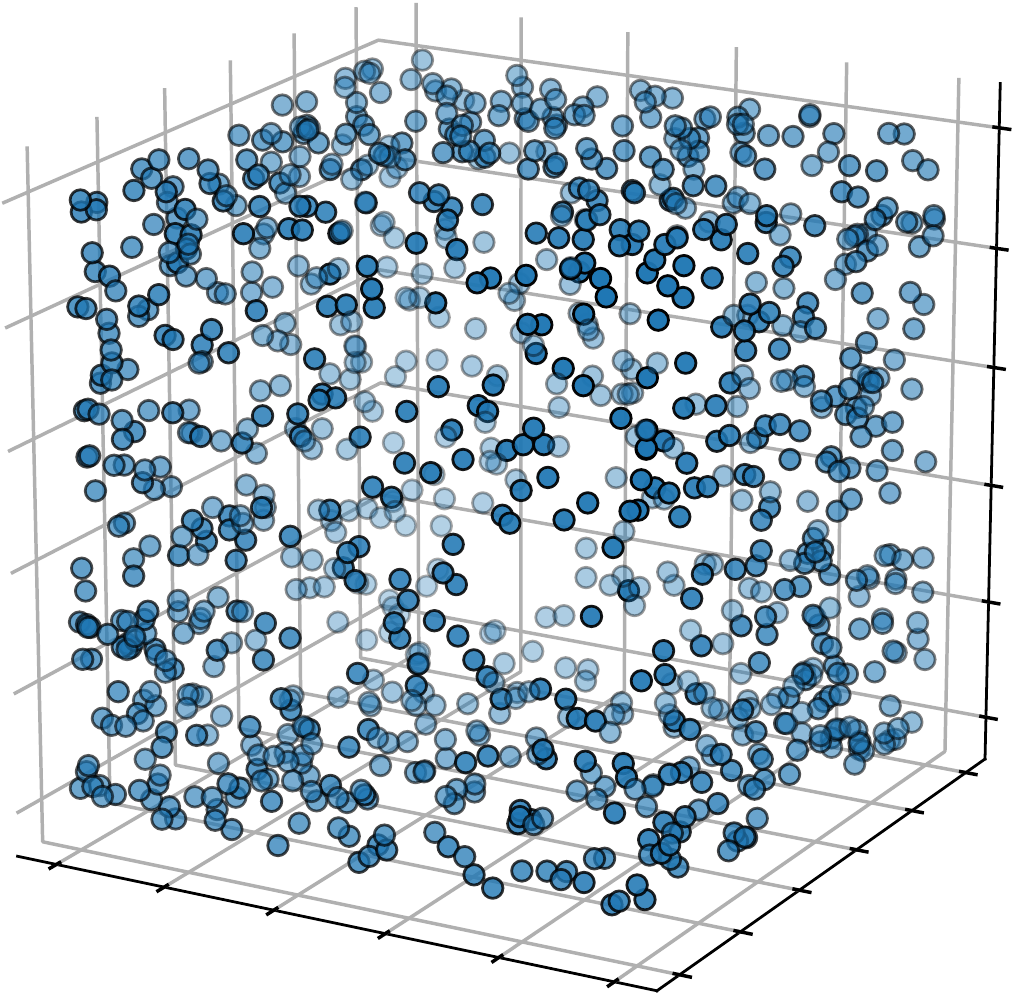}
		\hfill
		\caption{Hypercube (Hc)} 
	\end{subfigure} %
	\hfill
	\begin{subfigure}[b]{\depwa\linewidth}
		\hfill
		\includegraphics[width=0.32\linewidth]{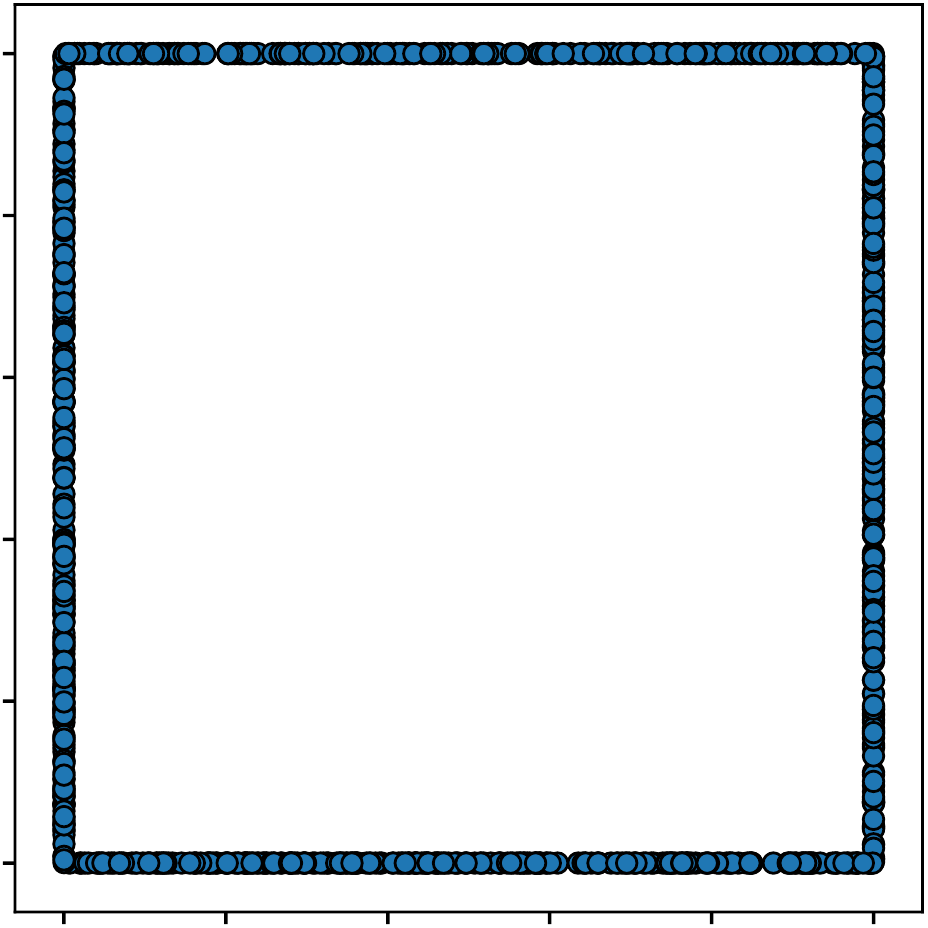} 
		\hfill
		\includegraphics[width=0.35\linewidth]{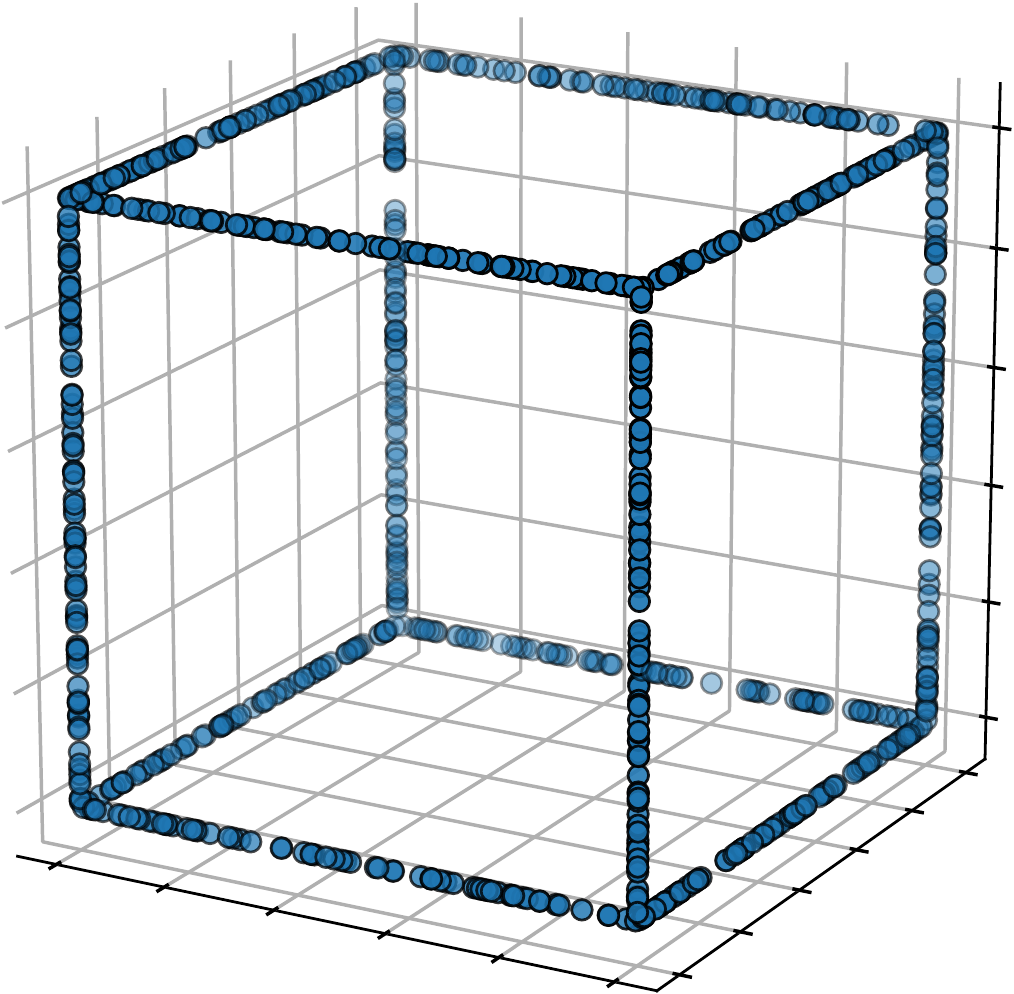} 
		\hfill
		\caption{Hc Graph (HcG)}
	\end{subfigure}
	\hfill
	\begin{subfigure}[b]{\depwa\linewidth}
		\hfill
		\includegraphics[width=0.32\linewidth]{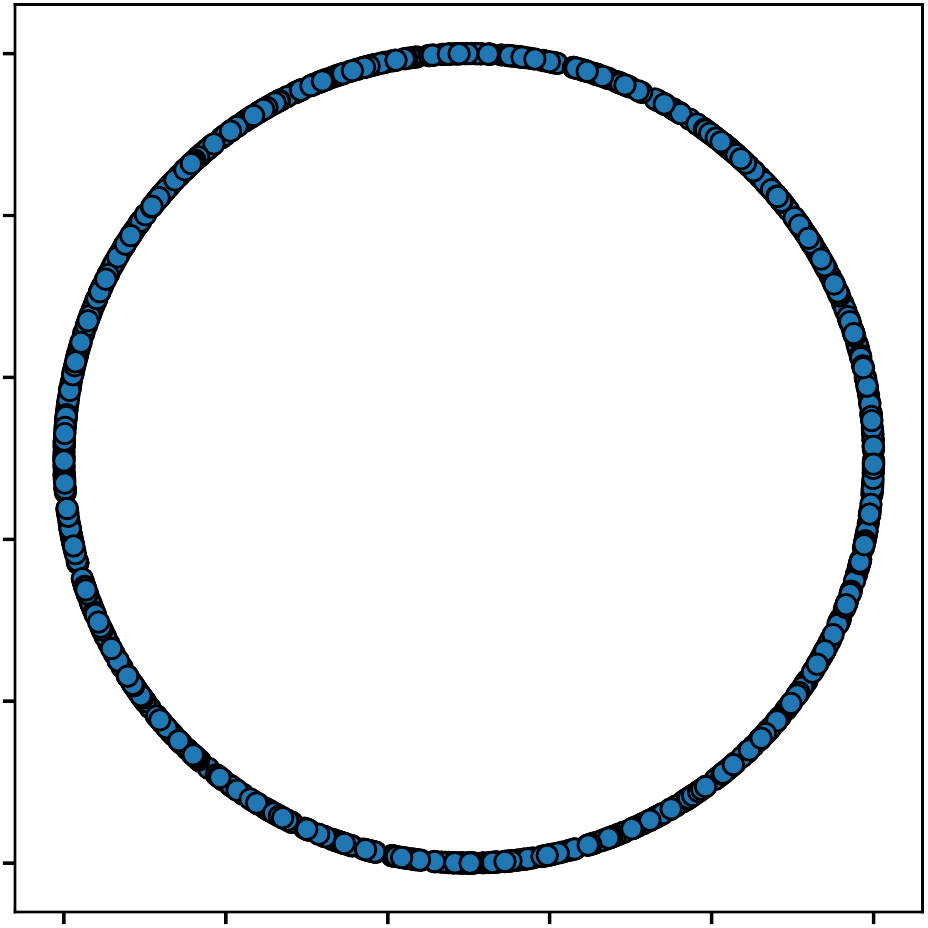} 
		\hfill
		\includegraphics[width=0.35\linewidth]{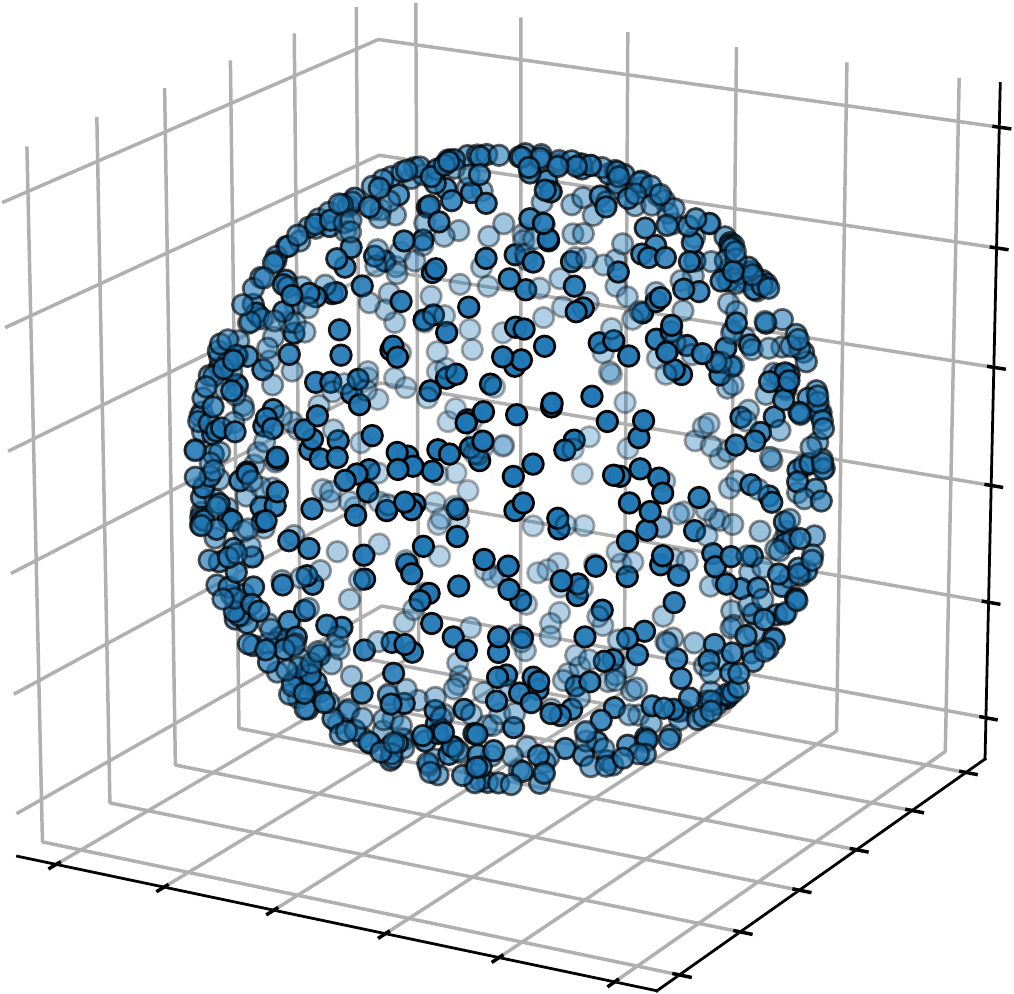} 
		\hfill
		\caption{Hypersphere (Hs)}
	\end{subfigure}
	
	~\\
	\begin{subfigure}[b]{\depwa\linewidth}
		\hfill
		\includegraphics[width=0.32\linewidth]{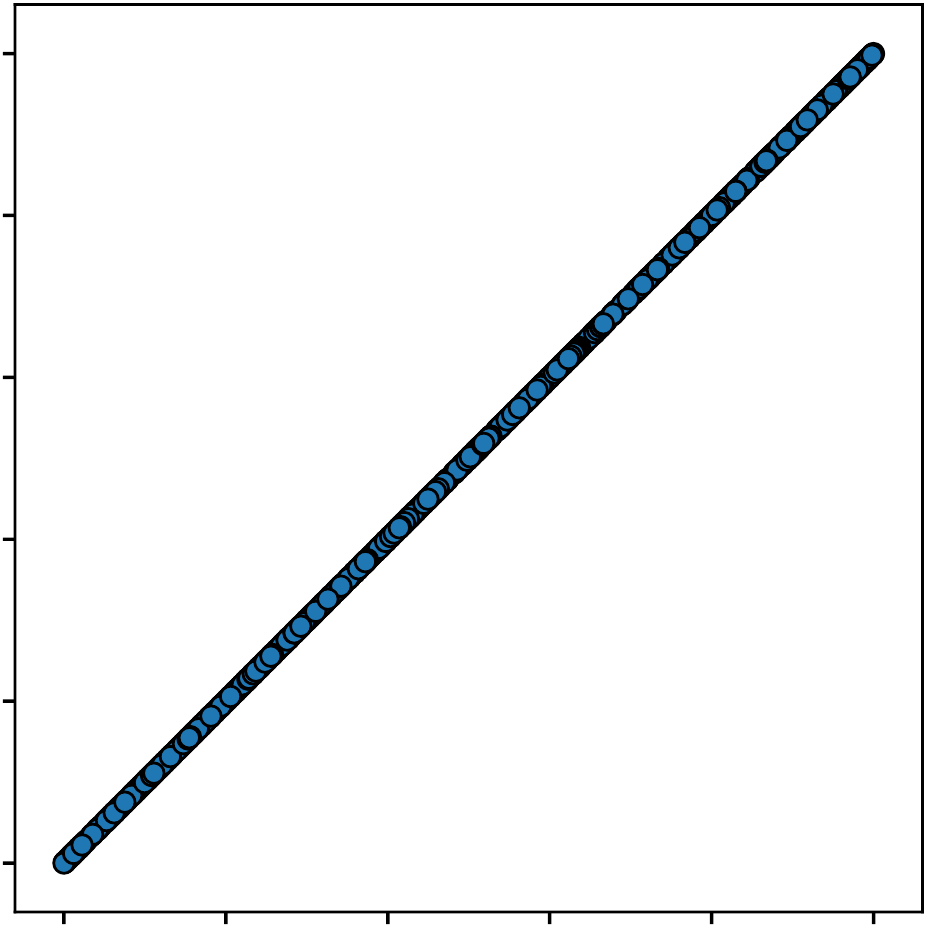} 
		\hfill
		\includegraphics[width=0.35\linewidth]{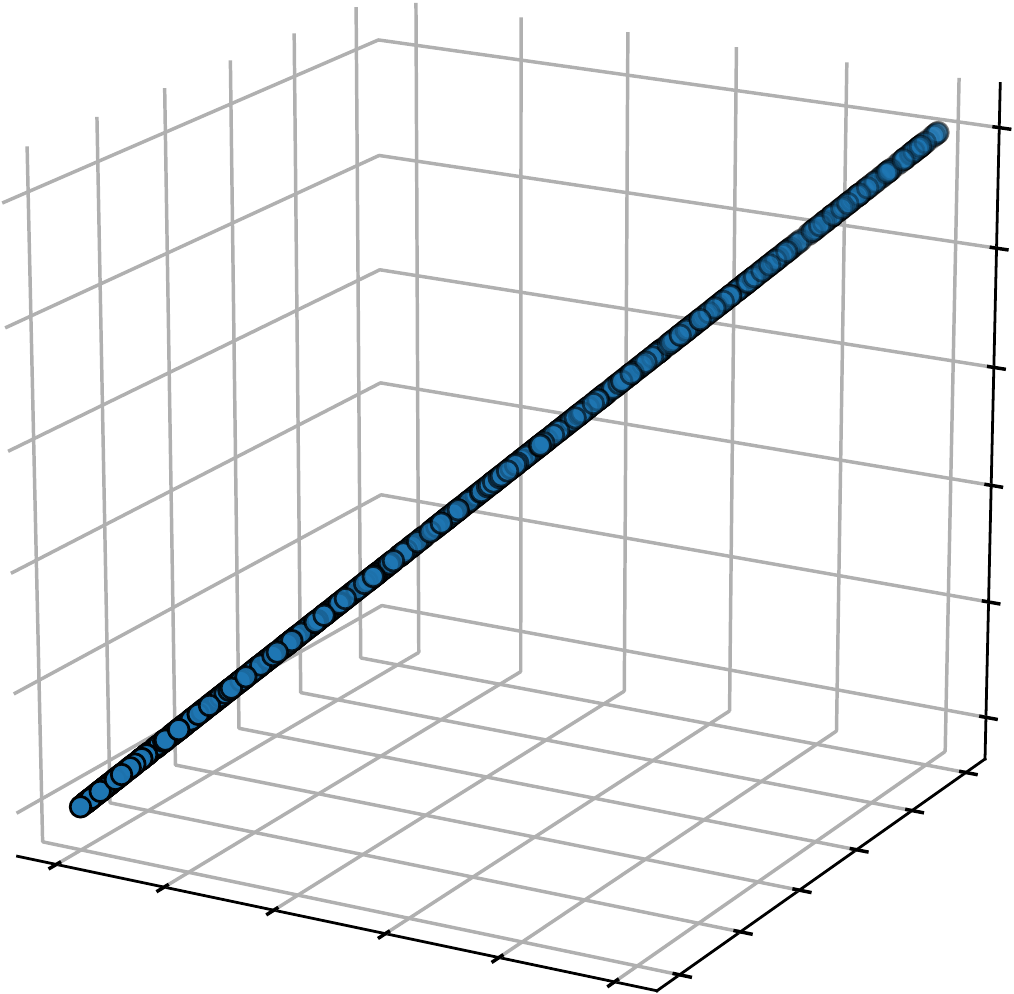}
		\hfill
		\caption{Linear (L)} 
	\end{subfigure} %
	\hfill
	\begin{subfigure}[b]{\depwa\linewidth}
		\hfill
		\includegraphics[width=0.32\linewidth]{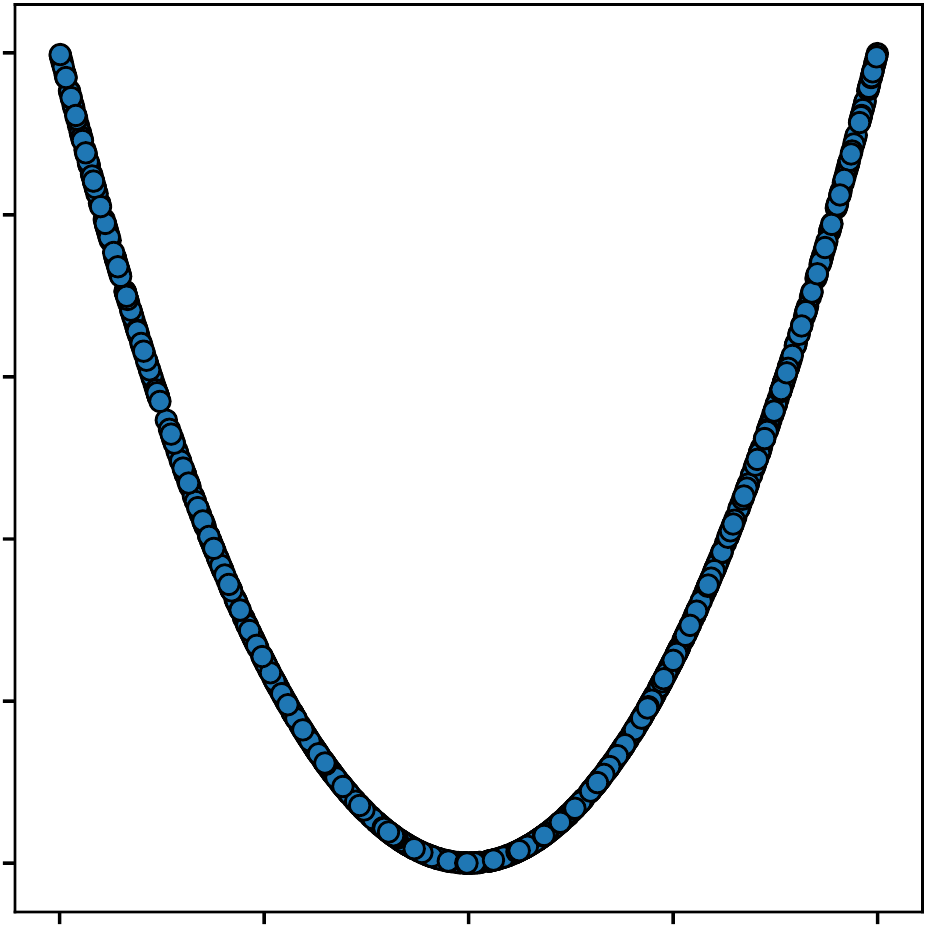}
		\hfill
		\includegraphics[width=0.35\linewidth]{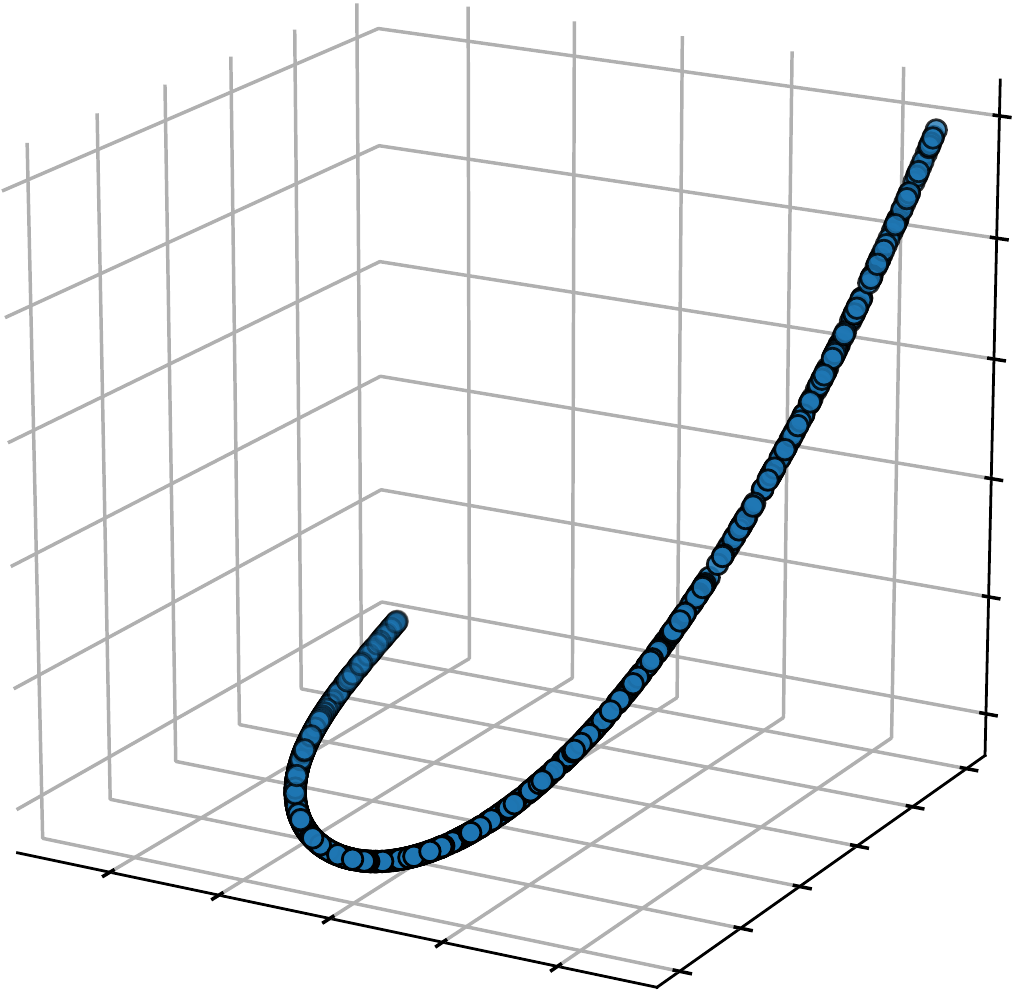}
		\hfill
		\caption{Parabolic (P)} 
	\end{subfigure} %
	\hfill
	\begin{subfigure}[b]{\depwa\linewidth}
		\hfill
		\includegraphics[width=0.32\linewidth]{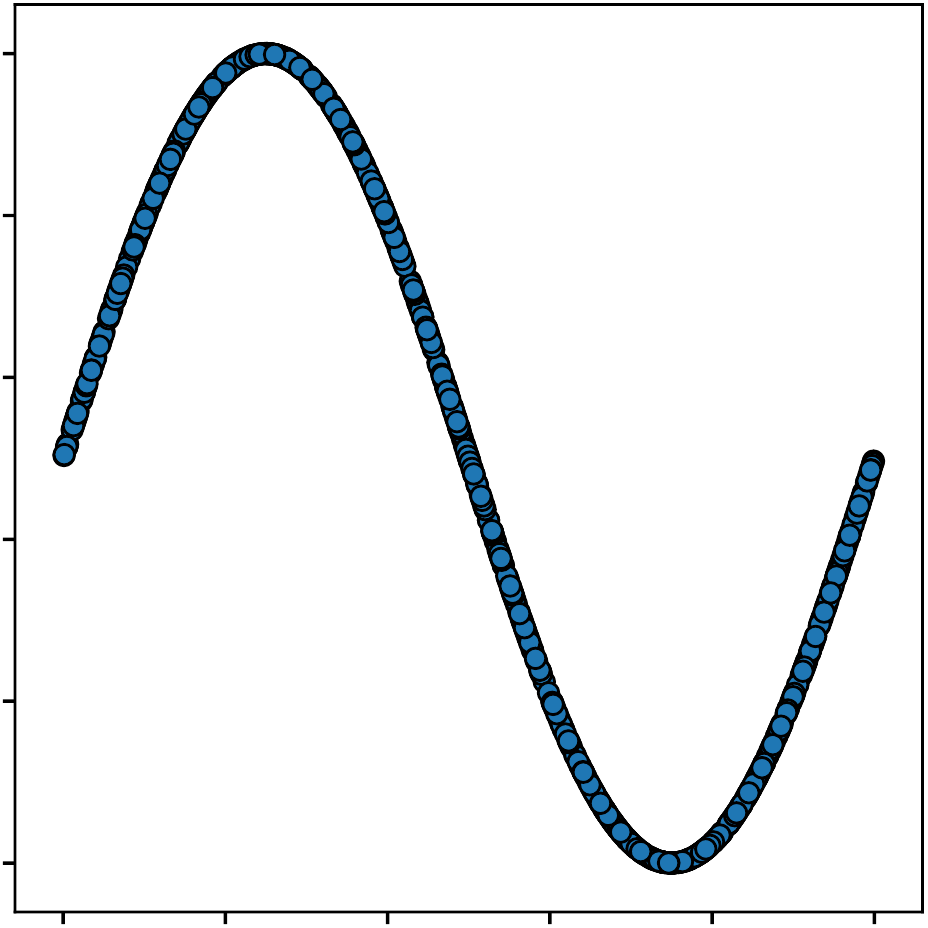} 
		\hfill
		\includegraphics[width=0.35\linewidth]{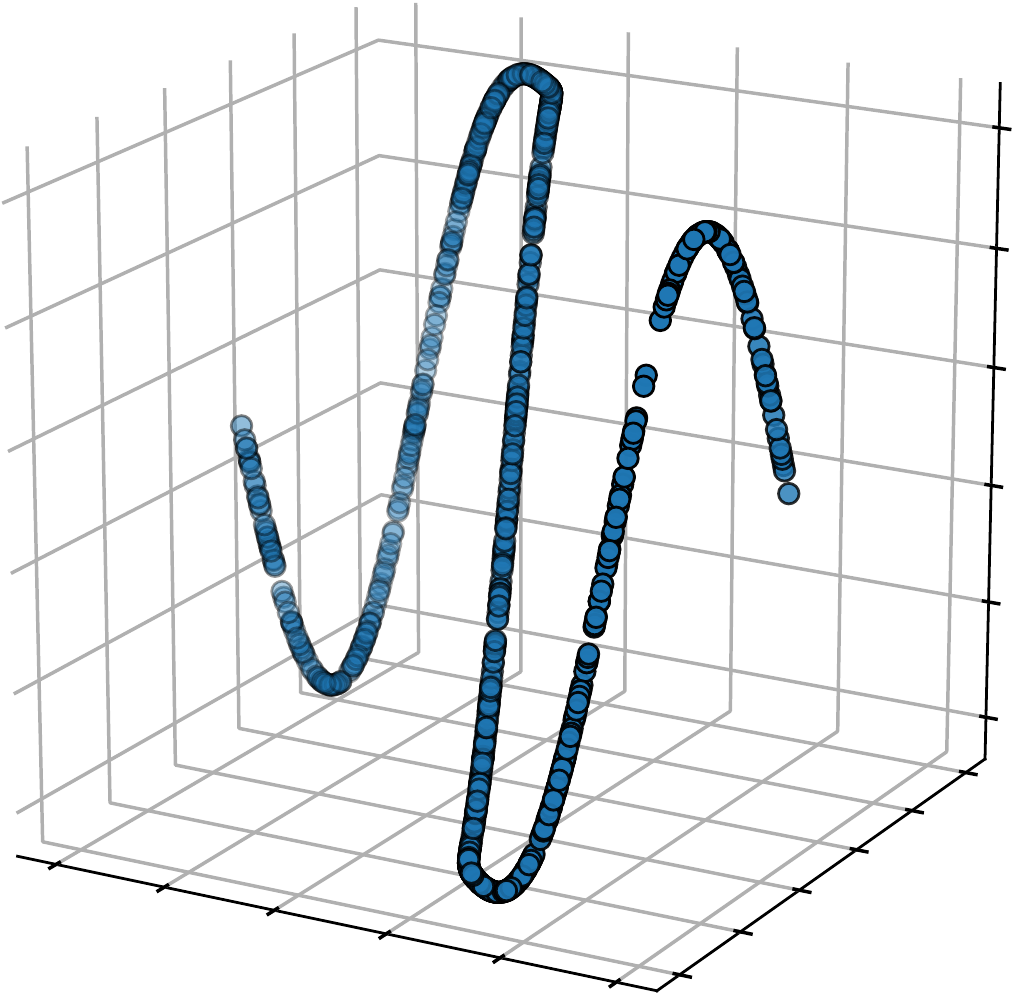}
		\hfill
		\caption{Sine (P=1) (S1)} 
	\end{subfigure} %
	
	~\\
	\begin{subfigure}[b]{\depwa\linewidth}
		\hfill
		\includegraphics[width=0.32\linewidth]{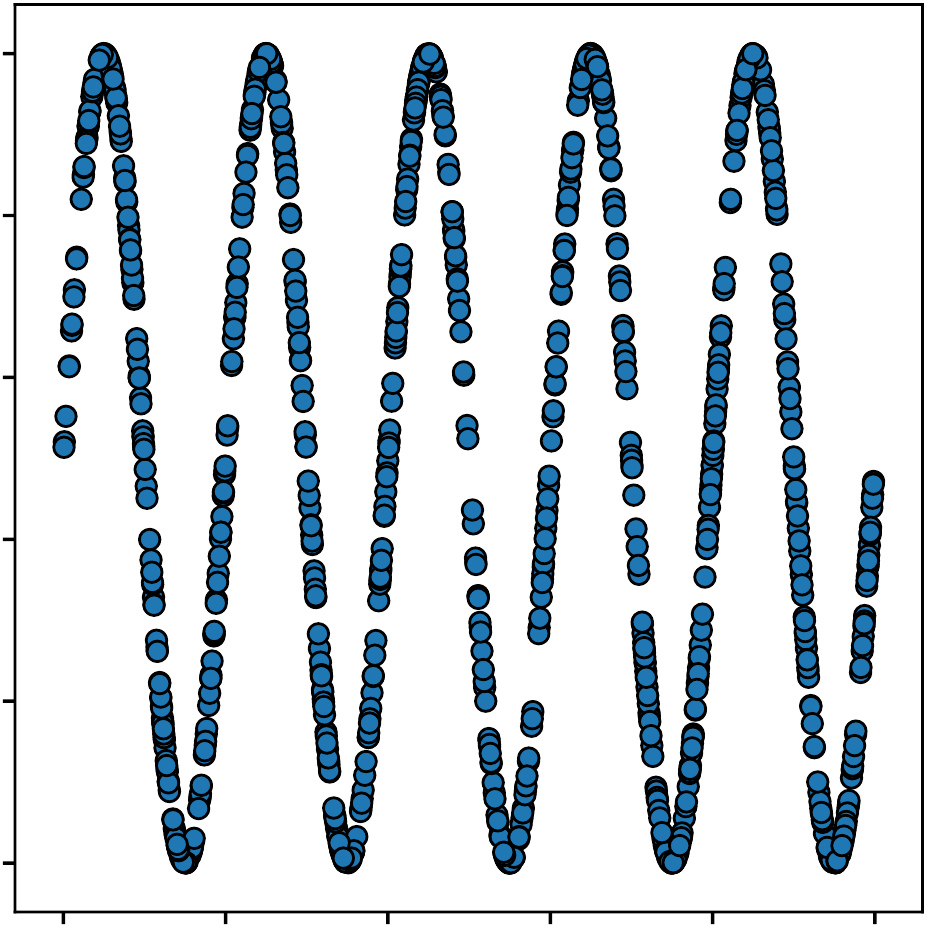} 
		\hfill
		\includegraphics[width=0.35\linewidth]{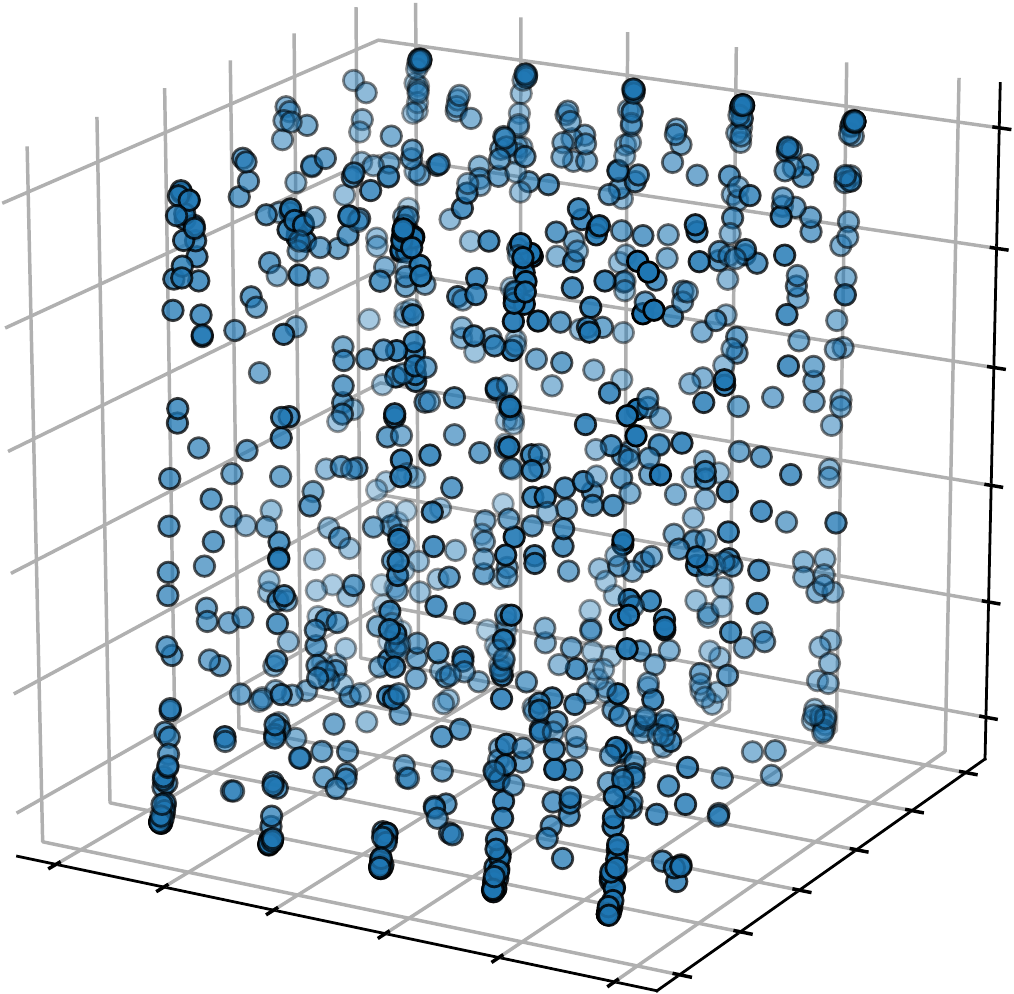}
		\hfill
		\caption{Sine (P=5) (S5)} 
	\end{subfigure} %
	\hfill
	\begin{subfigure}[b]{\depwa\linewidth}
		\hfill
		\includegraphics[width=0.32\linewidth]{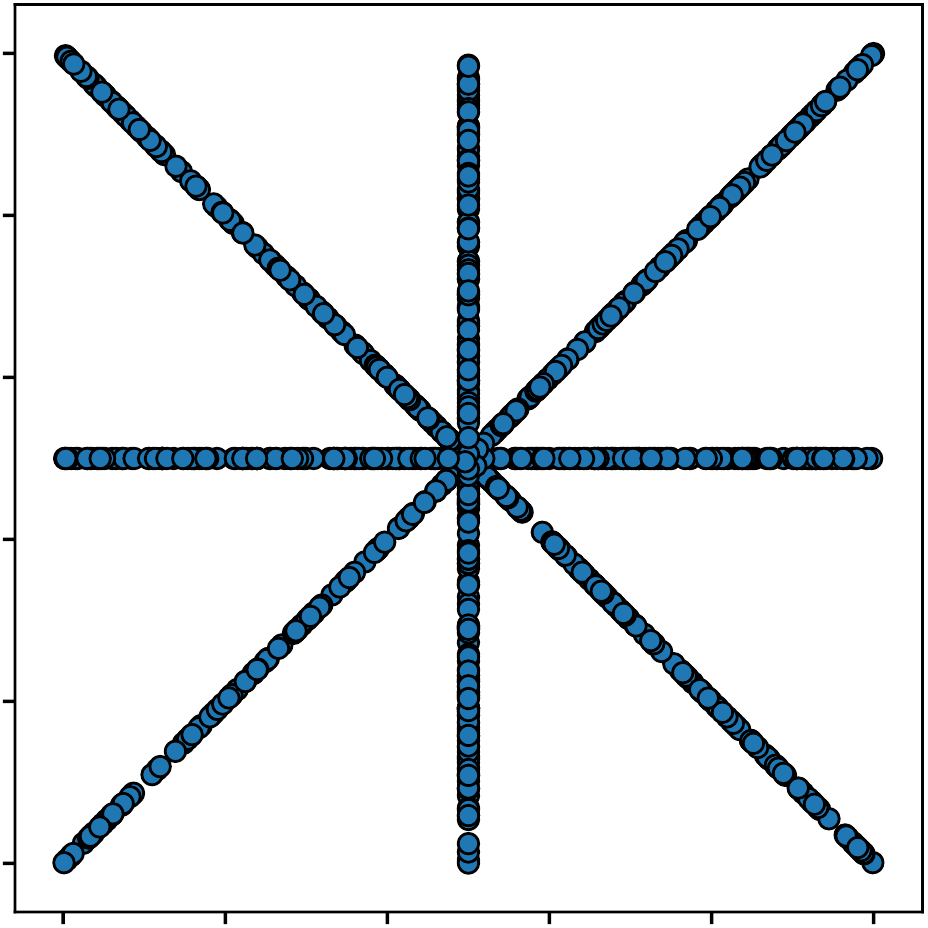} 
		\hfill
		\includegraphics[width=0.35\linewidth]{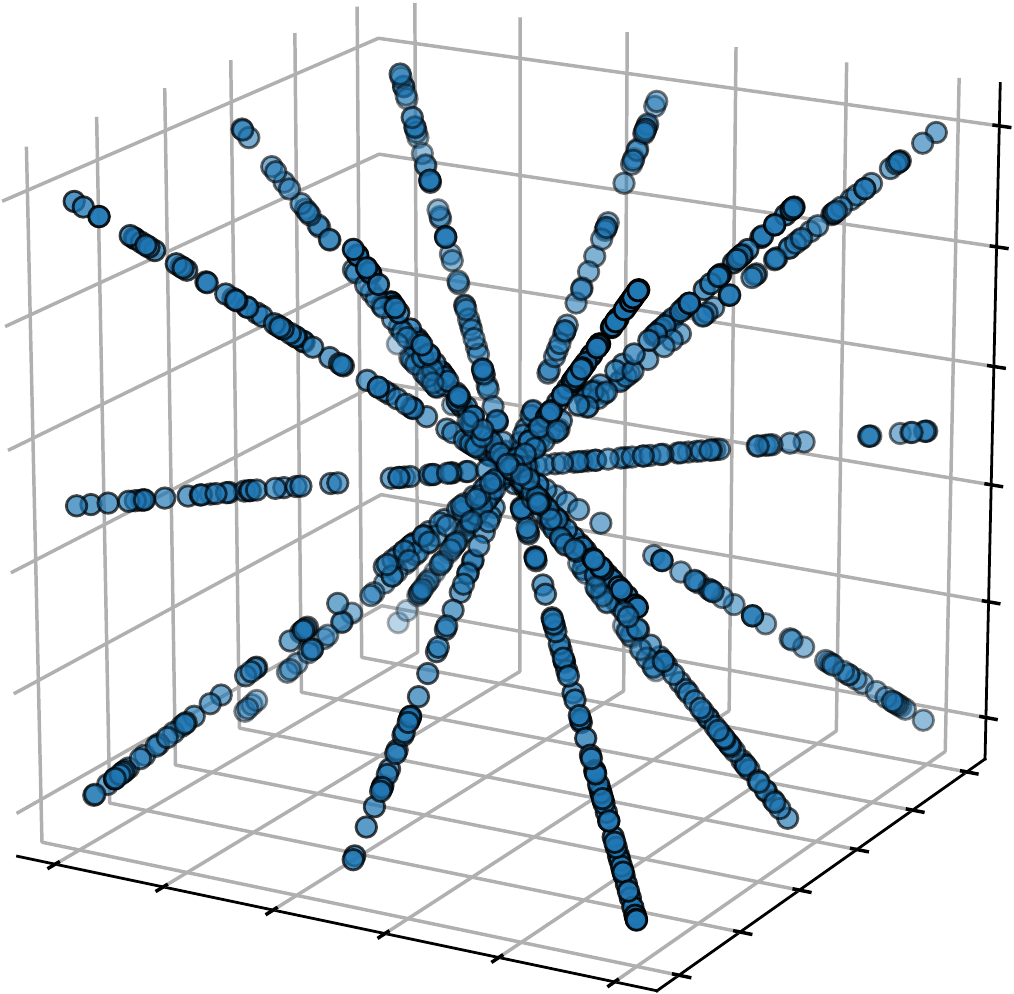}
		\hfill
		\caption{Star (St)} 
	\end{subfigure} %
	\hfill
	\begin{subfigure}[b]{\depwa\linewidth}
		\hfill
		\includegraphics[width=0.32\linewidth]{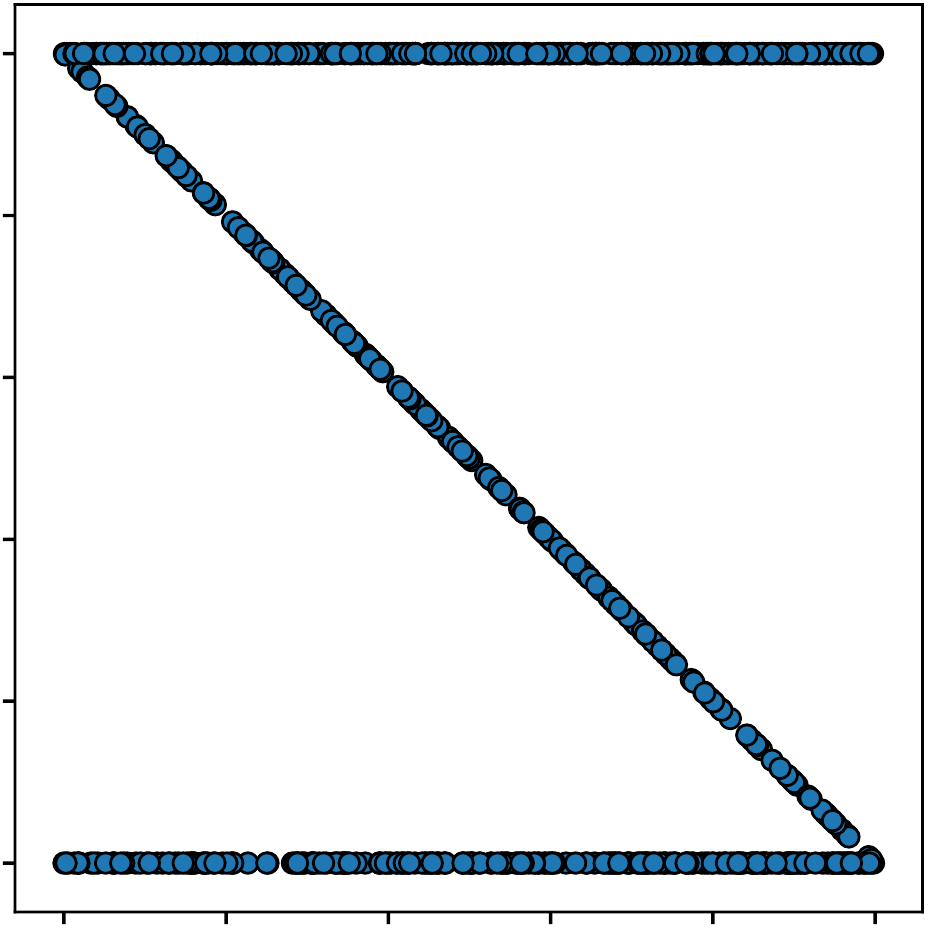} 
		\hfill
		\includegraphics[width=0.35\linewidth]{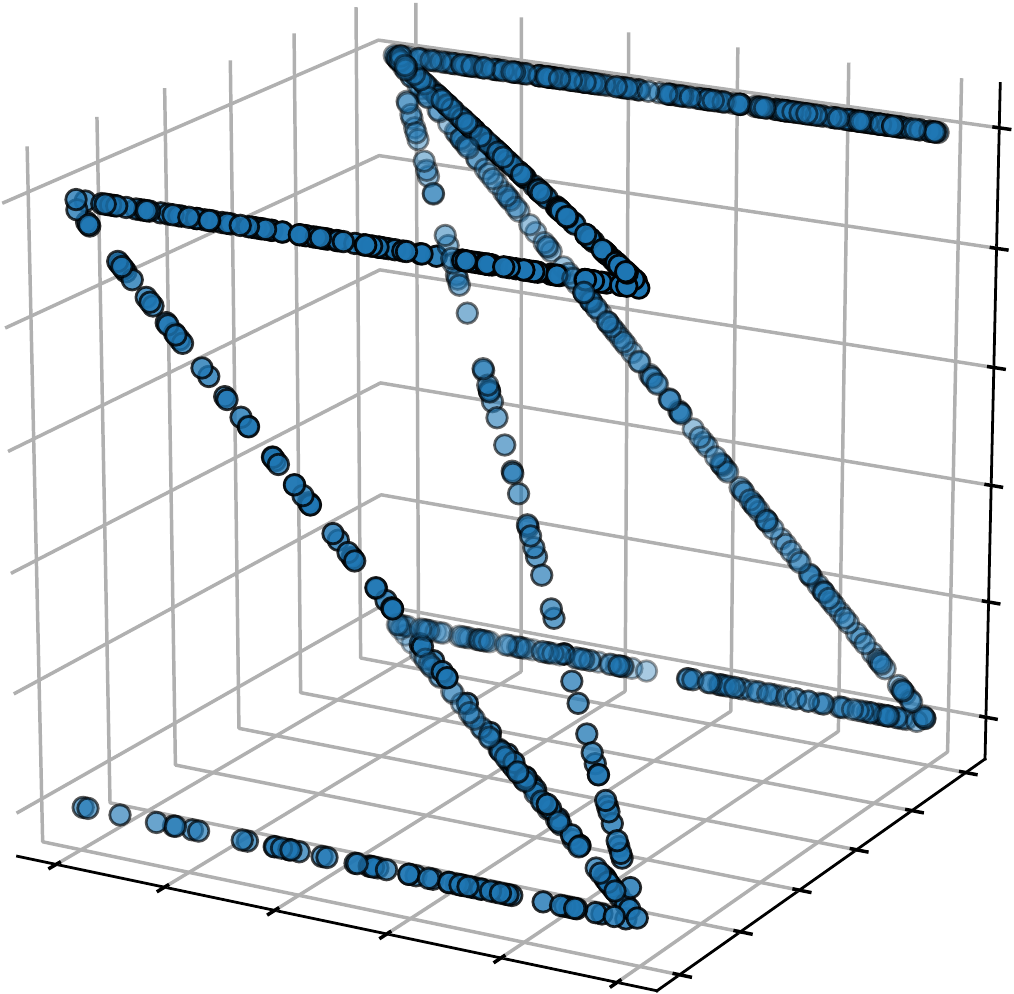} 
		\hfill
		\caption{Z inversed (Zi)}
	\end{subfigure}
	\caption{12 selected multidimensional dependencies}
	\label{fig:deps-plot}
\end{figure}  

\subsubsection{Score Distribution and Statistical Power} A dependency estimator $\mathcal{E}$ is an operator $\mathcal{E}(S) \mapsto \textit{score}$ which computes a \textit{score} for a subspace $S$.
We inspect the \textit{score} of each estimator $\mathcal{E}$ against each dependency \textbf{X}, with increasing noise level $\sigma$. We consider $30$ noise levels, distributed linearly from $0$ to $1$. For better comparability, we also include the independent subspace \textbf{I} in the experiments, where each attribute is i.i.d.\ in $\mathcal{U}[0,1]$.
For each dependency and each noise level, we draw $500$ subspaces to compute the estimate. We record the average (avg) and standard deviation (std) for each estimator and, in analogy to other bivariate and multivariate studies \cite{Nguyen2015UDS, Reshef2011, Kinney2013}, we compute the so-called statistical power. 

\begin{dfn}[Power]
The power of an estimator $\mathcal{E}$ w.r.t.\ \textbf{X} with $\sigma$, $n$ and $d$ is the probability of the \textit{score} of $\mathcal{E}$ to be larger than a $\gamma$-th percentile of the \textit{scores} w.r.t. the independence \textbf{I}:
\normalfont  \begin{align}
\Pr\left ( \mathcal{E}\left(Inst_{n \times d}^{\textbf{X},\sigma}\right) > \left \{\mathcal{E}\left(Inst_{n \times  d}^{\textbf{I},0}\right) \right \}^{P_\gamma} \right ) 
\end{align} 
\end{dfn}

$Inst_{n \times d}^{\textbf{X},\sigma}$ is a random instantiation of a subspace as dependency \textbf{X} with noise level $\sigma$, which has $n$ objects and $d$ dimensions. $\{x\}^{P_\gamma}$ stands for the $\gamma$-th percentile of the set $\{x\}$, i.e., a value $v$ such that $\gamma \%$ of the values in $\{x\}$ are smaller than $v$.
Note that, since the attributes of \textbf{I} are independent, adding noise does not have any effect on dependence, so we set noise to 0 when instantiating \textbf{I}. 
To estimate the power, we draw two sets of $500$ estimates from $\textbf{X}$, $\sigma$ and $\textbf{I}$ respectively: 
\begin{align*}
\Sigma^\mathcal{E}_{\textbf{X}, \sigma} : \left \{ \mathcal{E}\left(Inst_{n \times d}^{\textbf{X},\sigma}\right) \right \}_{i=1}^{500} ~ &&\Sigma^{\mathcal{E}}_\textbf{I} : \left \{ \mathcal{E}\left(Inst_{n \times d}^{\textbf{I},0}\right) \right \}_{i=1}^{500}
\end{align*}
Then, we count the elements in $\Sigma^\mathcal{E}_{\textbf{X}, \sigma}$ greater than $\left \{\Sigma^{\mathcal{E}}_\textbf{I} \right \}^{P_\gamma}$: 
\begin{align}
power^{\textbf{X},\sigma}_{n \times d,\gamma }(\mathcal{E})  = \frac{\left|\left\{x:x \in \Sigma^\mathcal{E}_{\textbf{X}, \sigma} ~\wedge~  x > \left \{\Sigma^{\mathcal{E}}_\textbf{I} \right \}^{P_\gamma}\right \}\right|}{500}
\end{align}

One can interpret $power$ as the probability to correctly reject the independence hypothesis with $\gamma \%$ confidence. 
In other words, the power quantifies how well a dependency measure, such as \textit{MWP}, can differentiate between the independence \textbf{I} and a given dependency \textbf{X} with noise level $\sigma$. 
For our experiments, we choose $\gamma =95$.
In the case of Interaction Information (\textit{II}), values can be negative or positive, depending on whether the dependency is a `synergy' or a `redundancy'. For \textit{II}, we measure power using the absolute value of its score. 

\subsection{General characteristics of \textit{MWP}}
\label{mwp_under_scrutinity}

First, we look at the evolution of the scores of \textit{MWP} regarding the dimensionality $d$, sample size $n$ and $M$. 

\subsubsection{Influence of dimensionality $d$}

\begin{figure}
	\includegraphics[width=1.02\linewidth]{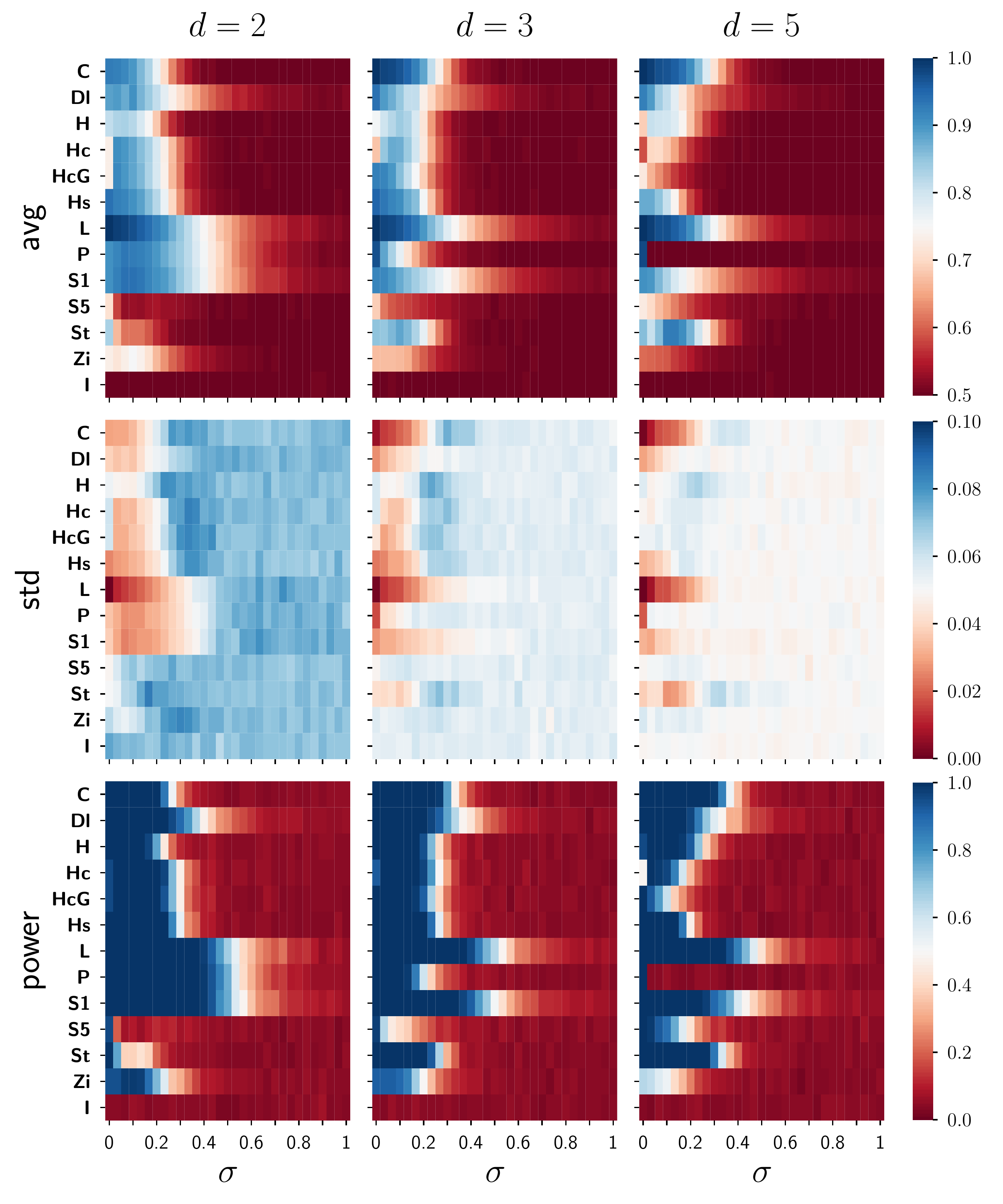}
	\caption{\textit{MWP} w.r.t. dimensionality $d$}
	\label{fig:momentsMWP}
\end{figure} 

Figure~\ref{fig:momentsMWP} graphs the evolution of \textit{MWP} for $d = 2,3,5$. Please note that the figures are best seen in colour. The expectation is that the scores are high for noiseless dependencies, i.e., the left side of the plot is blue, and decrease gradually as we add noise. A noise level $\sigma = 1$ is comparably high, since the data is scaled to $[0,1]$. Thus, the right side of the plot should be red, standing for low scores. 
As we see, the average \textit{MWP} decreases gradually for each dependency. The same level of noise does not seem to affect each estimate equally, also regarding dimensionality. 
For instance, the estimates of \textbf{L}, \textbf{P} and \textbf{S1} are larger at $d=2$.
While the estimates of \textbf{Hc}, \textbf{HcG}, \textbf{P} and \textbf{Zi} decrease with increasing $d$, they increase for \textbf{C} and \textbf{St}. 

The standard deviation of \textit{MWP} increases with noise and decreases with $d$. In particular, \textbf{L}, \textbf{C} and \textbf{Hs} have a low standard deviation. This means that fewer iterations are in fact required to estimate stronger dependencies at a given accuracy. 

The statistical power does not seem to vary much with dimensionality for most dependencies. It decreases with $d$ for \textbf{Hc}, \textbf{HcG}, \textbf{Hs}, \textbf{P} and \textbf{Zi}, while it increases for \textbf{C}, \textbf{S5} and \textbf{St}. 

All in all, each dependency yields a score larger than the independence \textbf{I} up to a certain level of noise, leading to a high power. This indicates that \textit{MWP} is \textbf{general-purpose} (\textbf{R3}). 

\begin{figure}
	\includegraphics[width=\linewidth]{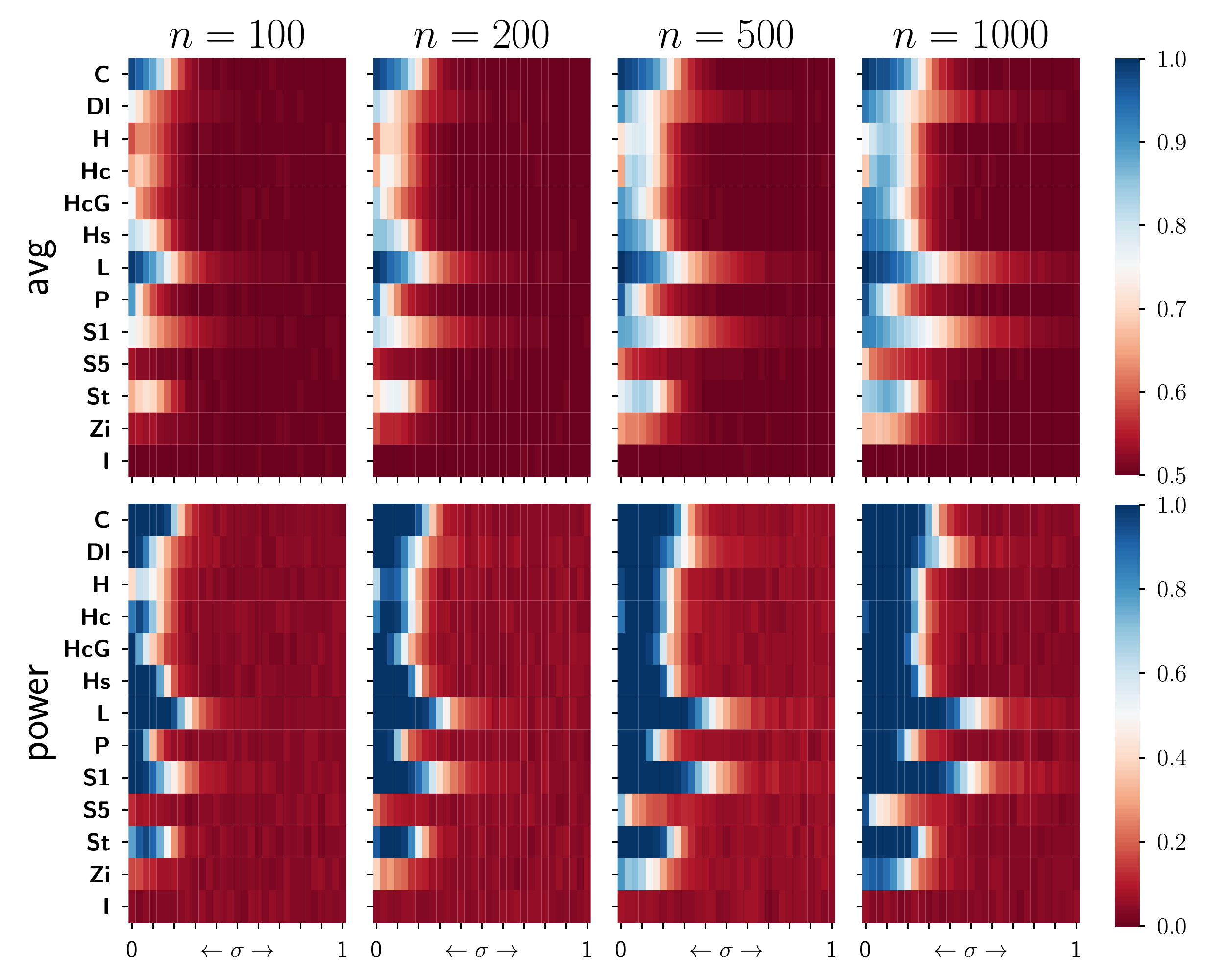} ~
	\includegraphics[width=\linewidth]{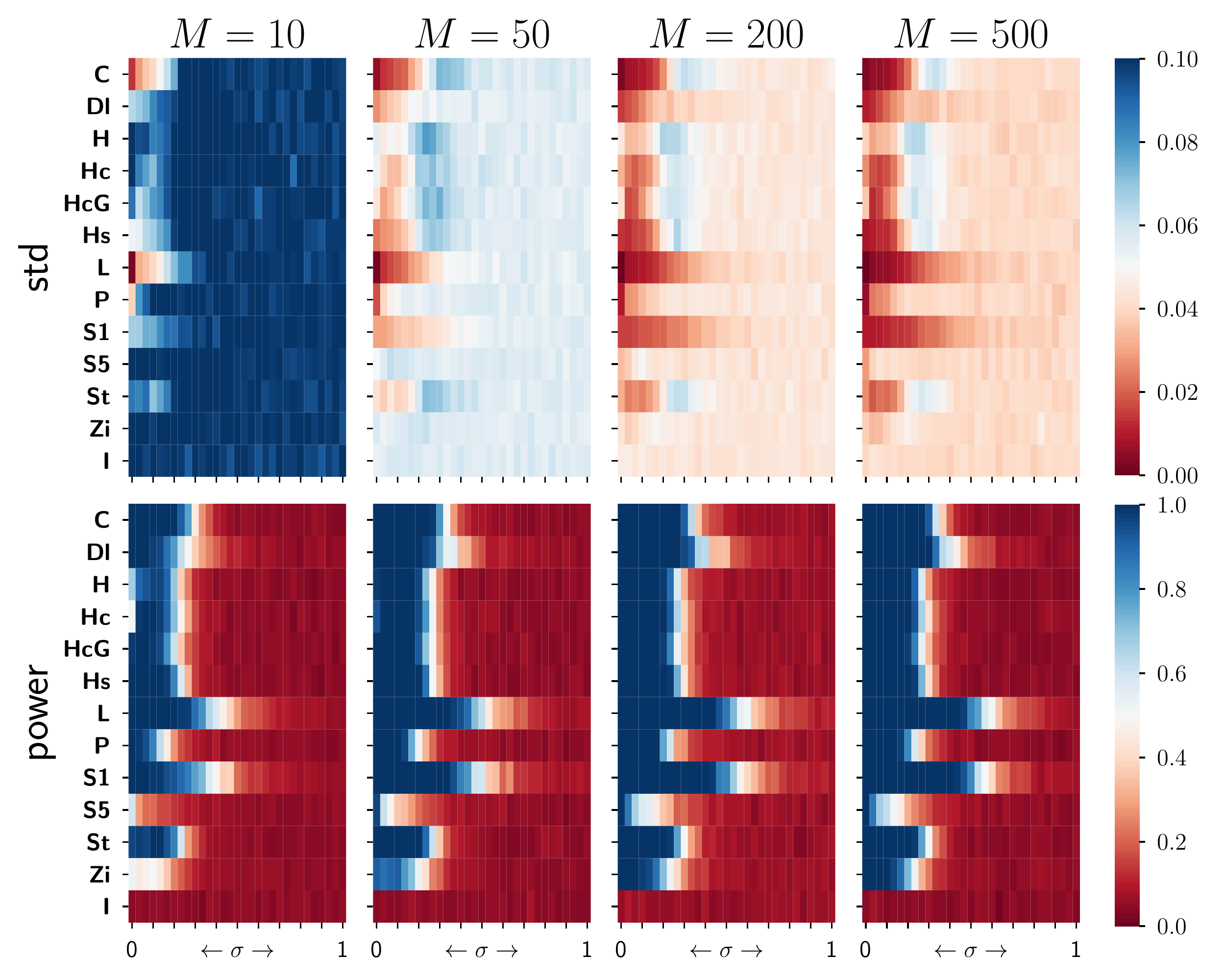}  
	\caption{Power of \textit{MWP} w.r.t. $n$ and $M$}
	\label{fig:NM_MWP}
\end{figure}

\subsubsection{Influence of sample size $n$ and parameter $M$}

Figure~\ref{fig:NM_MWP} shows that power globally increases with $n$, but it is still high for most dependencies with low $n$, provided noise is moderate. 
As we can see, the average score of \textit{MWP} tends to increase with $n$, which explains the gain in power. In fact, that is because \textit{MWP} is \textbf{sensitive} (\textbf{R7}), as we discuss in \mbox{Section \ref{sensitivity}}.
Similarly, power increases slightly as $M$ increases, but the effect is visible only for \textbf{S5} and \textbf{Zi}. This increase of power is easily explained by the fact that the standard deviation of \textit{MWP} decreases, which is what Theorem \ref{"th:hoeffding-chernoff-contrast"} predicted: with more iterations, the values concentrate more around $\mathcal{C}$. 

In the end, we see that \textit{MWP} is already useful for small $n$ or small $M$, even though more iterations or more data samples yield higher power when data is noisy. 

\subsection{Score Distribution and Statistical Power}
\label{distribution}

\begin{figure*}
	\includegraphics[width=\linewidth]{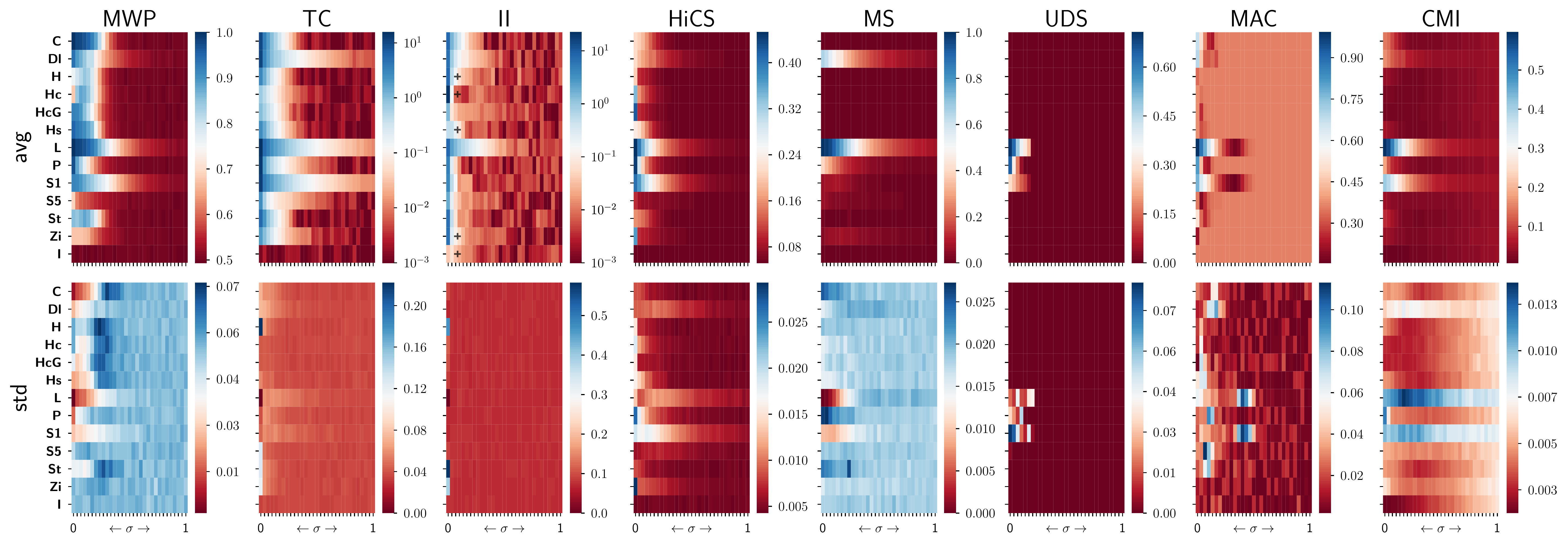} 
	\caption{Distribution of dependency estimation scores, $d=3$}
	\label{fig:contrast_all}
\end{figure*}

We now observe the distribution of the scores for each approach in Figure \ref{fig:contrast_all}. First, we see that the average score of \textit{MWP} is most similar to \textit{TC}. \textit{TC} however is unbounded, and a logarithmic scale is necessary to visualize it. This means that the estimates of TC change very abruptly. 

\textit{II} can yield positive or negative values. Since both cases are interesting, we visualize the absolute value of \textit{II} with a logarithmic scale. We mark the dependencies which obtain a positive score in their noiseless form with a plus sign. 
Like \textit{TC}, \textit{II} assigns high absolute scores to every noiseless dependency. However, the score decreases rapidly with noise, except for \textbf{L}.

\textit{HiCS} shows a similar behaviour as \textit{MWP}, except that the scores decrease faster, and that a large number of dependencies starts with a relatively low score, even in the noiseless form, such as \textbf{C}, \textbf{Dl}, \textbf{H},  \textbf{Hs}, \textbf{S5} and \textbf{St}. 

Next, \textit{MS} and \textit{UDS} are restricted to monotonous and bijective functional relationships respectively. They can detect only 3 out of 12 dependencies. \textit{MAC} and \textit{CMI} behave curiously. Their scores change noticeably only for \textbf{C}, \textbf{Dl}, \textbf{L}, \textbf{P} and \textbf{S1}.
The values of \textit{MAC} also change very abruptly and even non-monotonously with noise. For example, \textbf{L} and \textbf{S1} obtain lower scores with a noise level of 0.3 than with higher noise levels. \textit{CMI} evolves smoothly. However, for many dependencies, including \textbf{I}, the score increases again with more noise: The shades on the right are lighter, which shows a bias towards noise, independently from the underlying relationship.

By looking at \textit{MWP} and \textit{MS}, we see that the standard deviation behaves similarly: It decreases as the score increases. We observe the opposite for \textit{HiCS}. The standard deviation of \textit{CMI} reaches its highest level at a certain noise level, around $0.2$ for \textbf{L}, and tends to increase slightly again with more noise. For \textit{UDS} and \textit{MAC}, the evolution of the standard deviation looks very unstable. The standard deviation of \textit{TC} and \textit{II} does not change much, except for noiseless dependencies.

While the scores of \textit{HiCS}, \textit{UDS}, \textit{MAC} and \textit{CMI} are expected to be in $[0,1]$, the theoretical maximum or minimum is never reached, even if our benchmark features both strong and weak dependencies. On the other hand, \textit{MWP} and \textit{MS} exploit all the values of their range, being $[0.5,1]$ and $[0,1]$ respectively. Thus, they are easier to interpret (\textbf{R6}).

\begin{figure}
\begin{subfigure}[b]{\linewidth}
	\includegraphics[width=\linewidth]{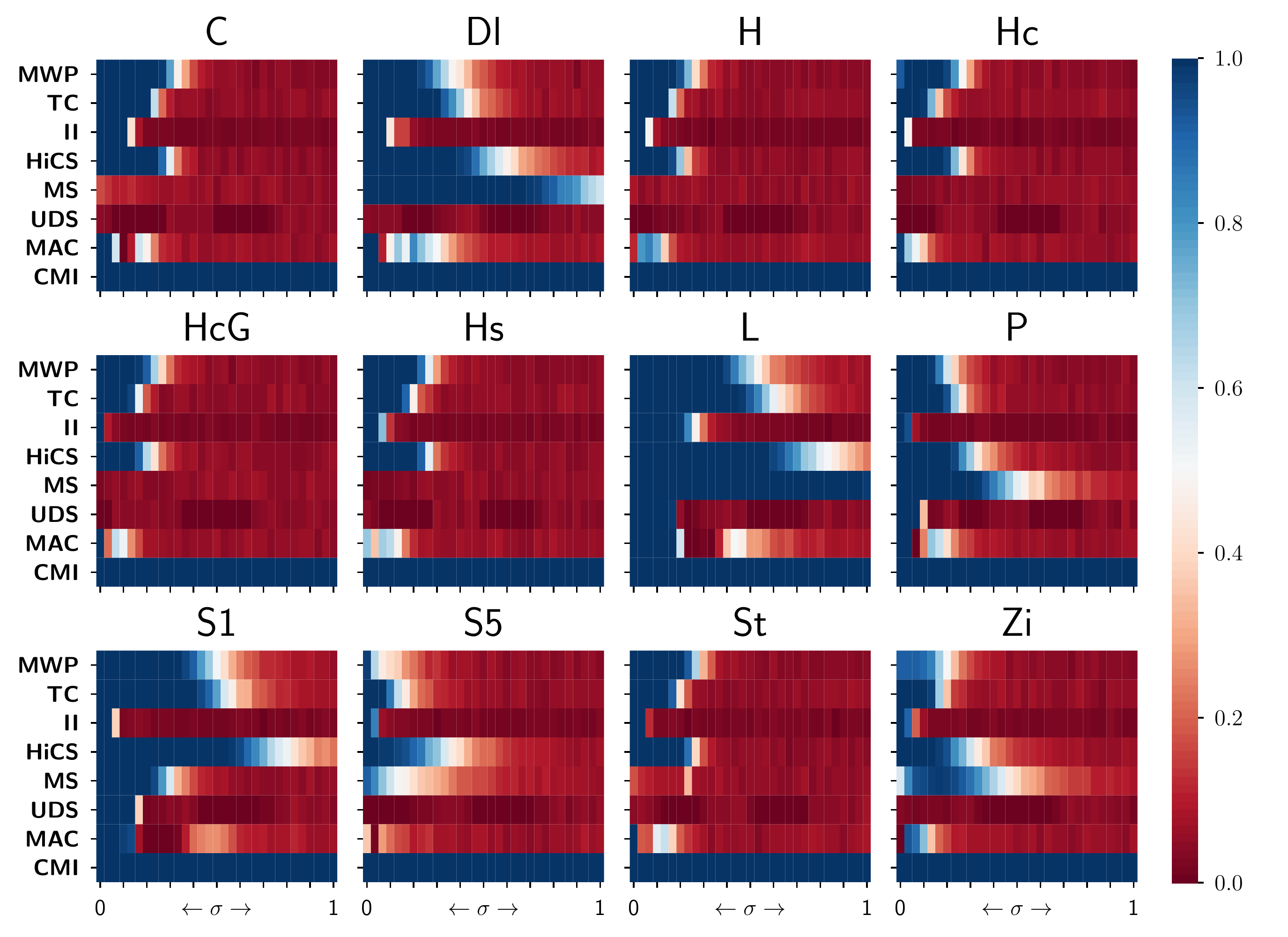} 
	\caption{$d=3$} 
\end{subfigure}
\begin{subfigure}[b]{\linewidth}
	\includegraphics[width=\linewidth]{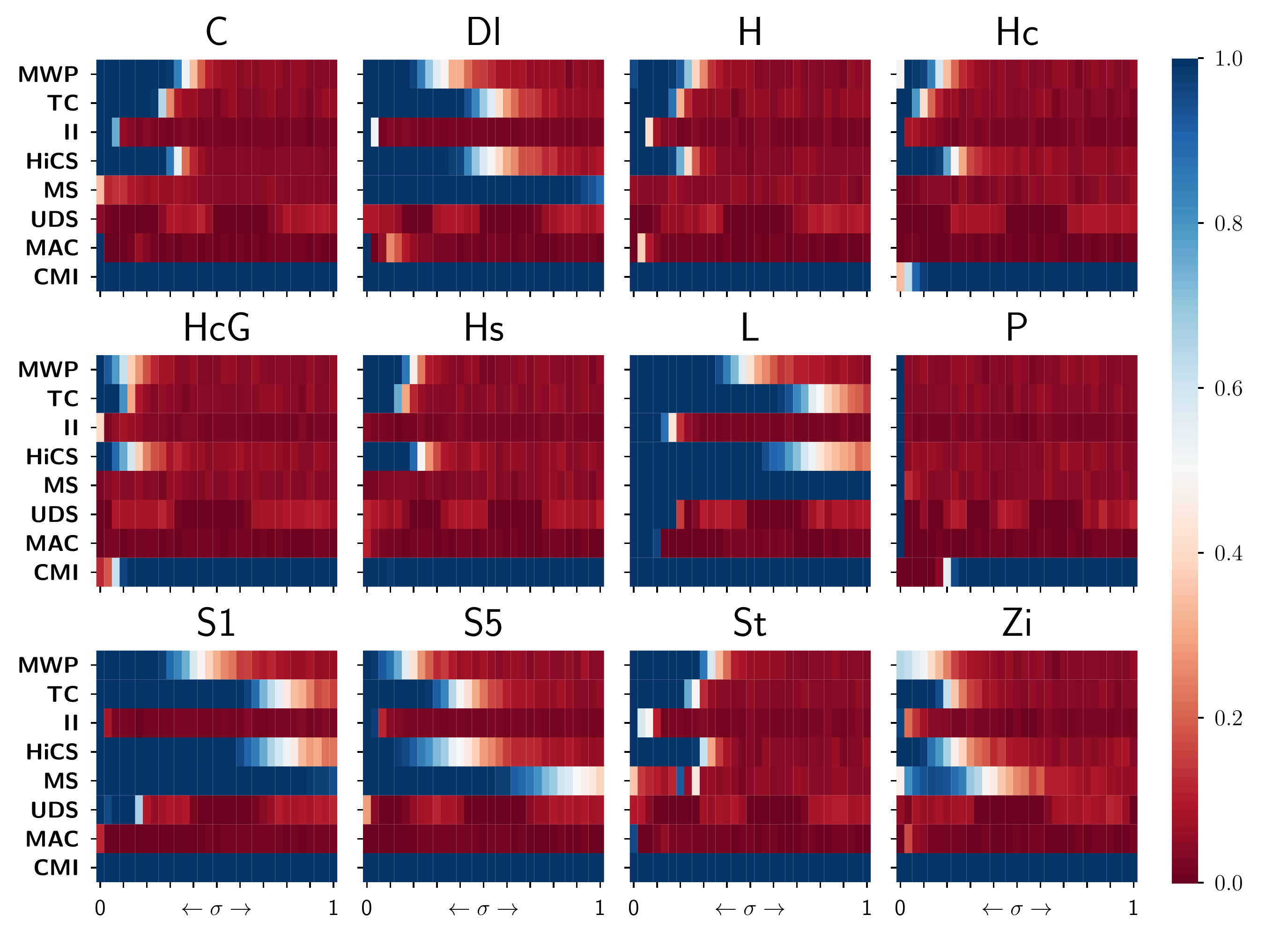} 
	\caption{$d=5$} 
\end{subfigure}
\caption{Power against each dependency}
\label{fig:E_Power}
\end{figure}

Figure \ref{fig:E_Power} reveals that \textit{MWP}, \textit{TC} and \textit{HiCS} achieve high power in any situation up to a certain extent of noise. 
\textit{MWP} shows slightly more power with \textbf{C}, \textbf{H}, \textbf{Hc}, \textbf{HcG}, \textbf{Hs} and \textbf{St}.  
\textit{II} can detect almost every dependency but the power decreases rapidly with noise and dimensionality. \textit{MS} detects \textbf{Dl}, \textbf{L}, \textbf{P}, \textbf{S1}, \textbf{S5} and \textbf{Zi}, but misses all other dependencies. 
\textit{MAC} looks unstable, since its power evolves in a non-monotonous way and decreases with increasing dimensionality by much. In fact, it is not able to detect most dependencies for $d=5$.  
\textit{UDS} can only detect \textbf{L}, \textbf{P} and \textbf{S1}, a clear limitation. 
\textit{CMI} has maximal power for each dependency and noise level for $d=3$, which is unrealistic: \textit{CMI} reaches its lowest score against the noiseless \textbf{I}, our baseline for power. This means that \textit{CMI} cannot distinguish between noise and dependence. 

\subsection{Sensitivity}
\label{sensitivity}
\textbf{R7} states that estimators should also reflect the strength of the observed effect w.r.t. the number of observations. Figure \ref{fig:All_sensitivity} graphs the average score from 500 instances of each dependency with a small noise level of $1/30$. The average of \textit{MWP} obtained for each dependency converges to 1 consistently with more samples, except for the independence. Its values stabilize around $0.5$. This means that \textit{MWP} is \textbf{sensitive} (\textbf{R7}). 

\textit{TC} behaves similarly to \textit{MWP}: When the sample size increases, the score tends to increase as well. However, it is not bounded. While the scores of \textit{II} seem to increase with sample size, they decrease in terms of absolute value, except for \textbf{Hs}. \textit{MS} is completely insensitive to changes in the sample size. \textit{HiCS}, \textit{UDS}, \textit{MAC} and \textit{CMI} behave antagonistically: Their scores tends to go down as the sample size increases, even in the case of the independence \textbf{I}. This implies that their minimum or maximum score varies with the sample size, highlighting also interpretability (\textbf{R6})  problems. 

\begin{figure}
	\includegraphics[width=\linewidth]{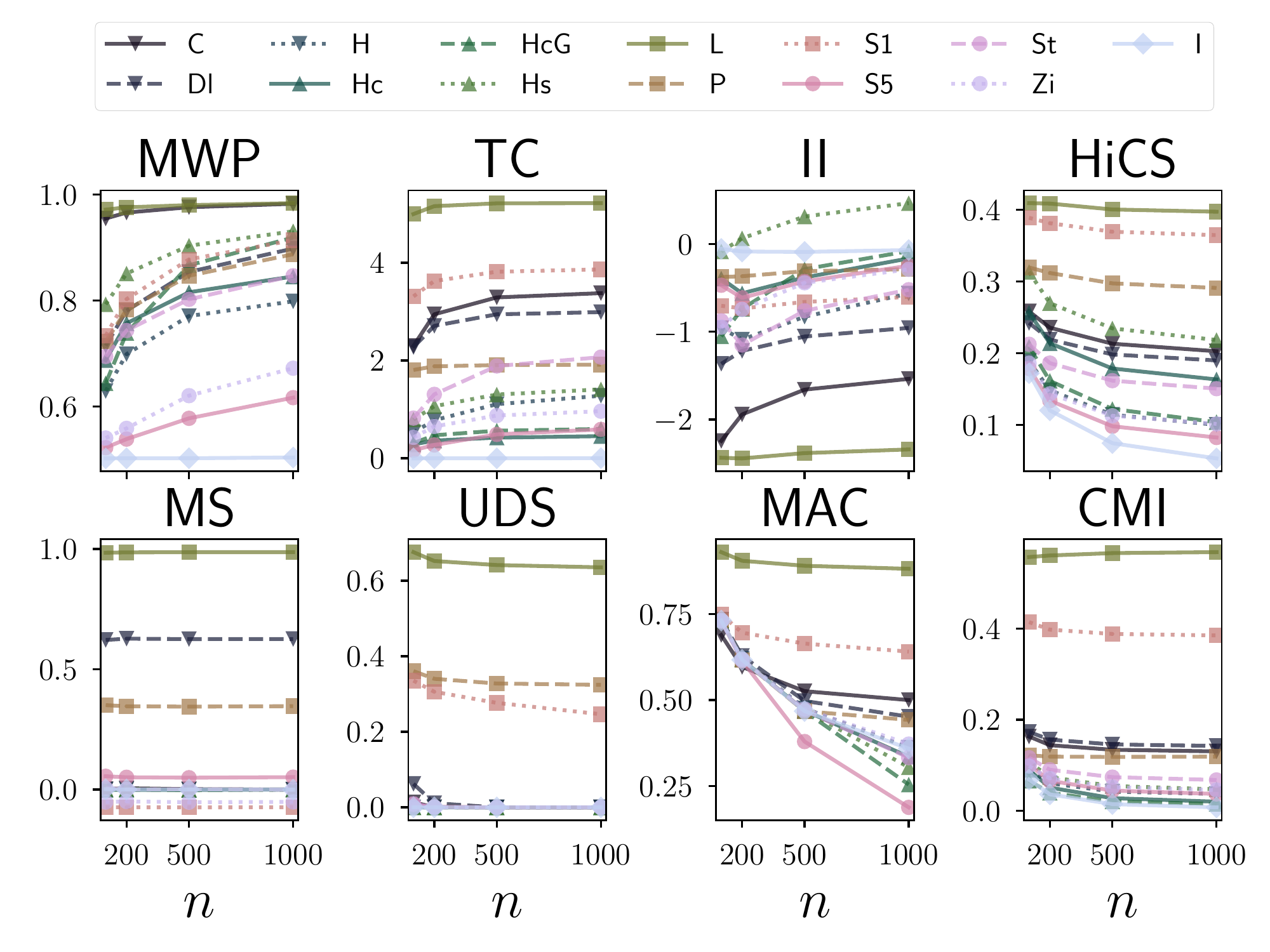} 
	\caption{Average score w.r.t. $n$, $\sigma = 1/30$}
	\label{fig:All_sensitivity}
\end{figure} 

\subsection{Robustness}
\label{robustness}
Data is often imperfect, i.e., values are rounded or trimmed. In some cases, this may lead to wrong estimates, e.g., an independent space is declared as strongly dependent. We simulate data imperfections by discretising a 3-dimensional linear dependency into a number $\omega$ of discrete values from 100 to 1. With only one value, the space is completely redundant, i.e., its \textit{contrast} should be minimal. We compare the power of \textit{MWP} and of the other approaches against the linear dependency \textbf{L} and the independence \textbf{I} for different levels of discretisation.  Figure \ref{fig:E_Discrete_Power} displays the results. Since \textit{TC} and \textit{II} rely on a nearest neighbours algorithm, they fail when the same observation is present more than $k$ times i.e., they are by design not robust; we exclude them from the analysis. 

\textit{HiCS} yields high power in the case of discrete values, even with \textbf{I}. This is because \textit{HiCS} uses the \textit{Kolmogorov-Smirnov} test which assumes continuous data. Thus, \textit{HiCS} is not robust. Also, the power of \textit{CMI} wrongly increases as we add noise to \textbf{I}, provided that the discretisation level $\omega$ is not less than $10$. 
This explains why the power of \textit{CMI} is high for every dependency in Section \ref{distribution}. \textit{CMI} rejects the independence for independent spaces as well, i.e., it is not robust. On the other hand, \textit{MWP}, \textit{MS}, \textit{UDS} and \textit{MAC} appear \textbf{robust} (\textbf{R8}).

In Figure \ref{fig:E_Discrete_average}, we see that the score of \textit{CMI} tends to increase slightly for \textbf{I} as we add noise, provided that $\omega > 5$. Also, the score of \textit{HiCS} increases for both \textbf{I} and \textbf{L} when $\omega \leq 5$. \textit{MAC} converges to $0.4$ as noise increases for $\omega > 10$. On the other hand, \textit{MWP} converges to $0$ as the space becomes discrete. This is an interesting feature of our estimator: discrete spaces are of lower interested, since the notion of \textit{contrast} is not defined there. It allows analysts to draw a line between discrete and real-valued attributes in terms of interestingness.  

\begin{figure}
\begin{subfigure}[b]{\linewidth}
	\includegraphics[width=\linewidth]{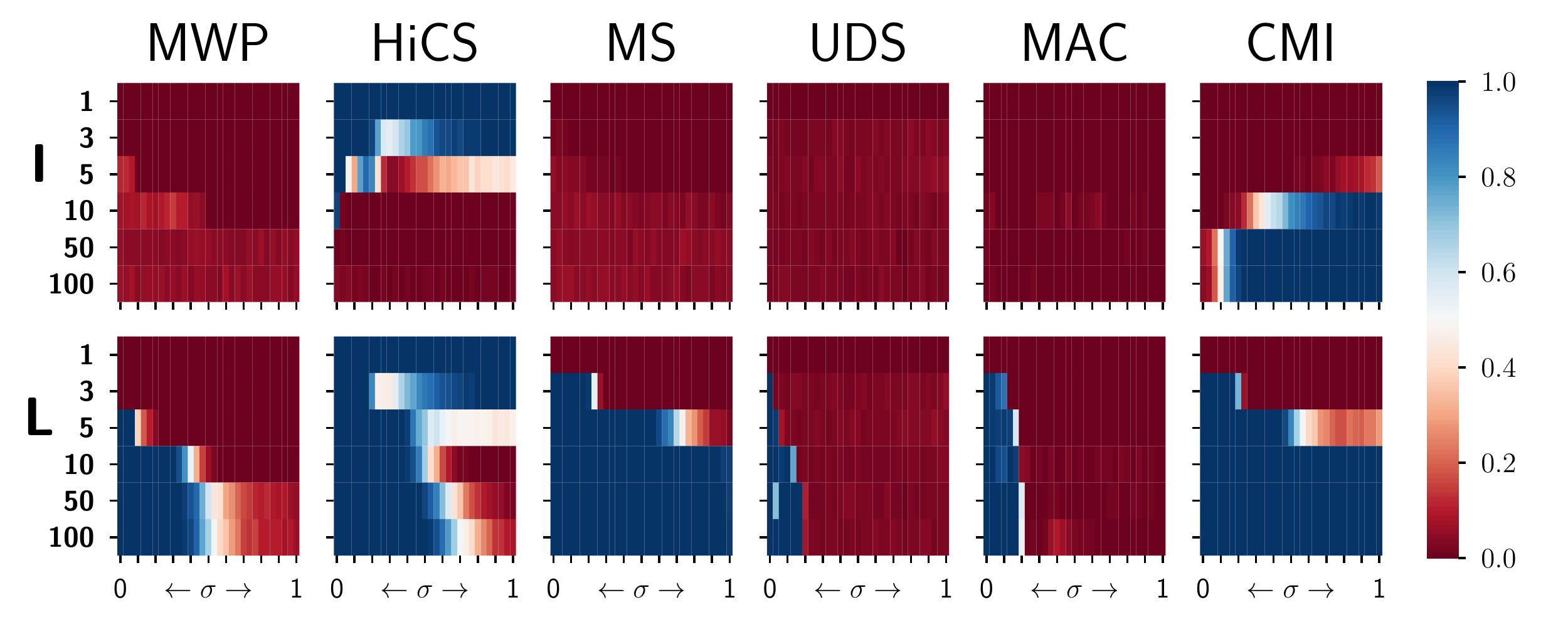} 
	\caption{Power} 
	\label{fig:E_Discrete_Power}
\end{subfigure}
\begin{subfigure}[b]{\linewidth}
	\includegraphics[width=\linewidth]{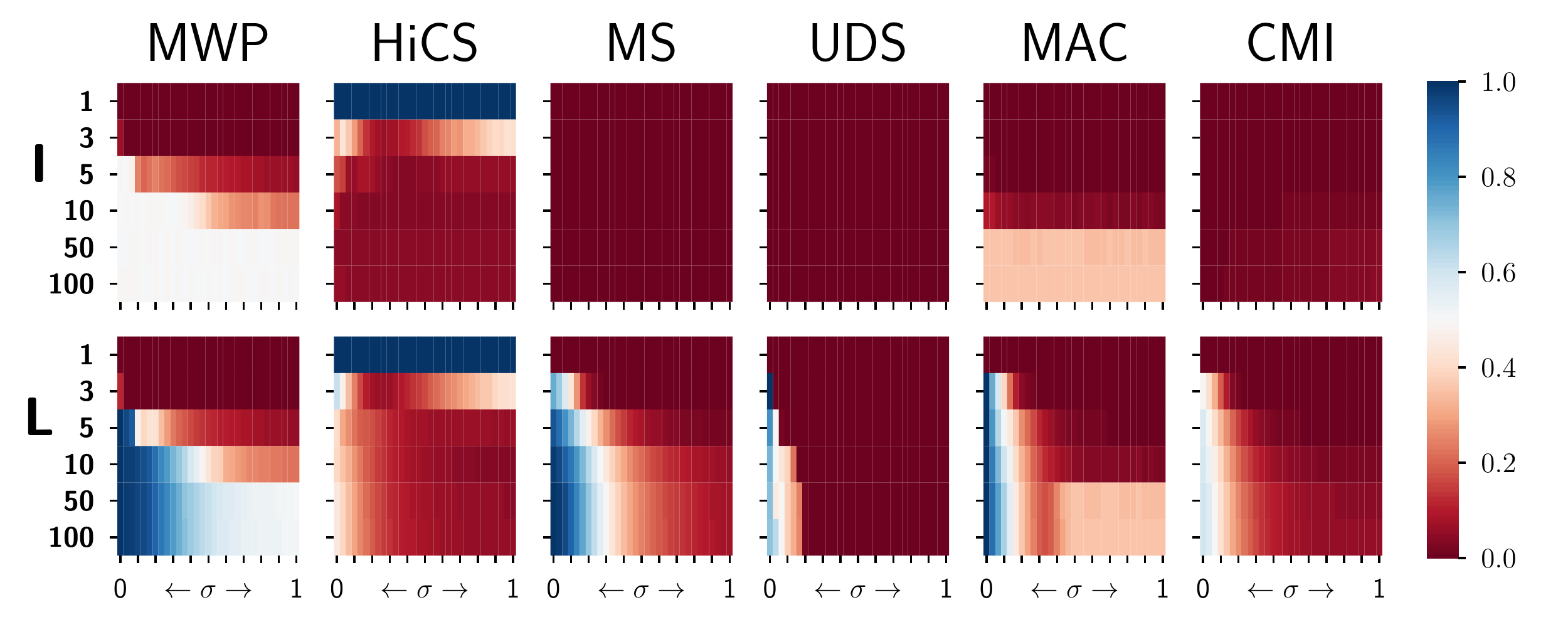} 
	\caption{Average} 
	\label{fig:E_Discrete_average}
\end{subfigure}
\caption{Power and average score of each approach w.r.t. $\omega$}
\label{fig:E_Discrete}
\end{figure}

\subsection{Scalability}
\label{scalability}

\begin{figure}
	\includegraphics[width=\linewidth]{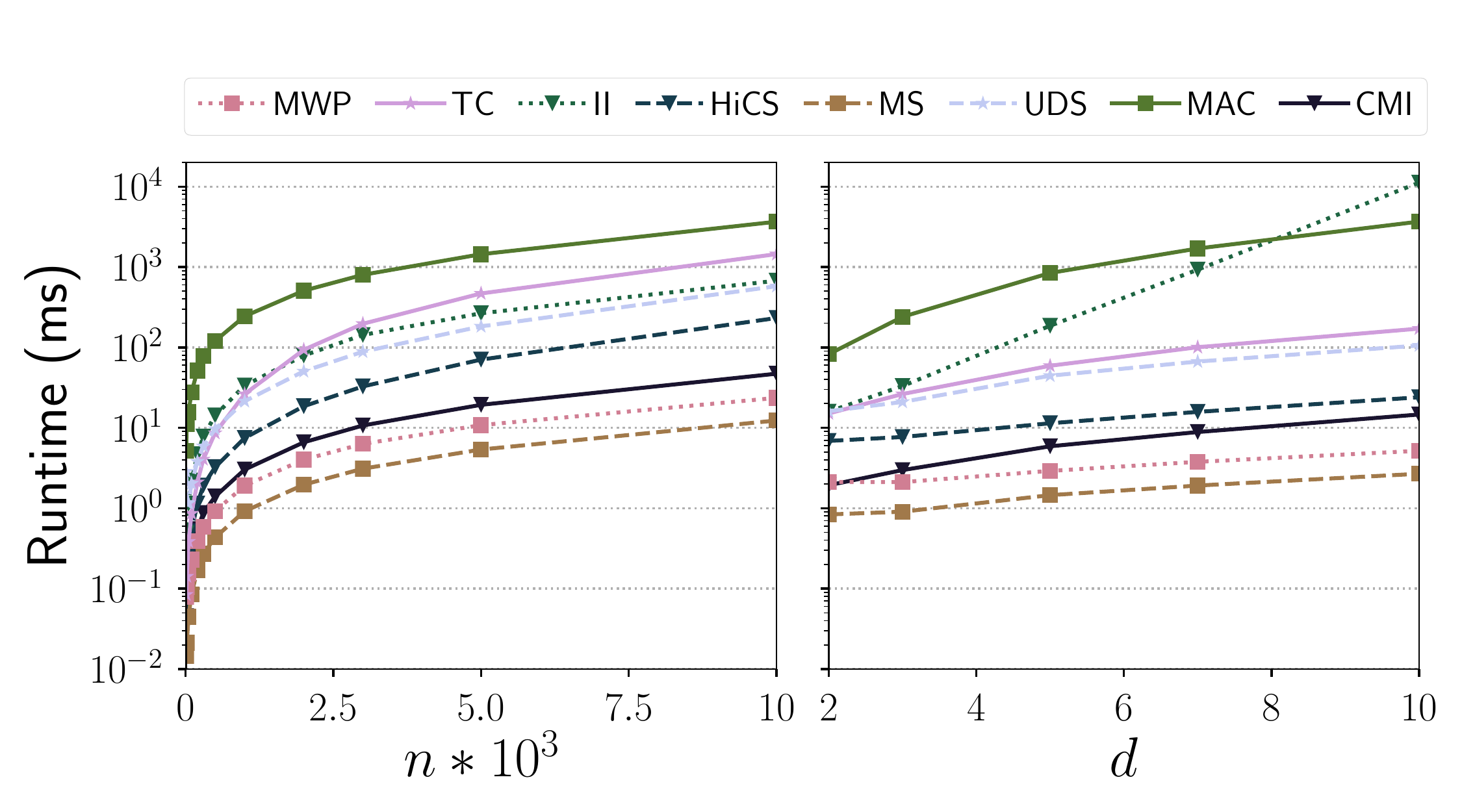} 
	\caption{Execution time w.r.t. $n$ and $d$}
	\label{fig:Runtime_n_d}
\end{figure}

We now look at the runtime requirements of our approach. We measure the average CPU time for each estimator against 500 data sets with a growing number of objects $n$ and dimensionality $d$. Note that which data set we use only has a marginal effect on the measured time. For consistency, we use instantiations of \textbf{I} for every estimator. 

Figure \ref{fig:Runtime_n_d} graphs the results. As we can see, \textit{MWP} is the second fastest after \textit{MS}. 
\textit{HiCS} and \textit{CMI} scale relatively well with $n$ and $d$. There is a second group formed by \textit{TC}, \textit{II} and \textit{UDS} one order of magnitude slower. However, \textit{II} does not scale well with $d$. \textit{MAC} is way behind all others. 

One should note that the runtime of \textit{MWP} can be further improved via parallelisation and prior indexing.

\subsection{Discussion}

Our study has shown that \textit{MWP} fulfils all the requirements we have laid out. We have compared \textit{MWP} to a range of multi\-variate (\textbf{R1}) and non-parametric (\textbf{R5}) approaches. 
We have shown to which extent they are efficient (\textbf{R2}), general-purpose (\textbf{R3}), interpretable (\textbf{R6}), sensitive to effect size (\textbf{R7}) and robust (\textbf{R8}).
Each approach, except \textit{MWP} and \textit{MS}, has at least one unintuitive parameter (\textbf{R4}): \textit{TC} and \textit{II} require $k \in \mathbb{N}$, \textit{CMI} requires $Q \in \mathbb{N}$, \textit{MAC} requires $\epsilon \in (0,1)$, \textit{UDS} requires $\beta \in \mathbb{N}$, \textit{HiCS} requires $\alpha \in (0,1)$. 
Next, only \textit{MWP} and \textit{HiCS} allow to trade accuracy for a computational advantage (\textbf{R9}). 
Table~\ref{table:requirements} summarizes our findings.

\begin{table}[ht]
	
	\centering
	\begin{tabular}{l c c c c c c c c c} 
		\hline 
		Estimator & \textbf{R1} & \textbf{R2} & \textbf{R3} & \textbf{R4} & \textbf{R5} & \textbf{R6} & \textbf{R7} & \textbf{R8} & \textbf{R9}\\ 
		\hline 
		\textit{MS} & \cmark & ++ & \xmark & \cmark & \cmark & \cmark & \xmark & \cmark & \xmark \\ 
		\textit{TC} & \cmark & - & \cmark & \xmark & \cmark & \xmark & \cmark & \xmark & \xmark\\
		\textit{II} & \cmark & -{}- & \xmark & \xmark & \cmark & \xmark & \xmark & \xmark & \xmark\\
		\textit{CMI} & \cmark & + & \xmark & \xmark & \cmark & \xmark & \xmark & \xmark & \xmark\\
		\textit{MAC} & \cmark & -{}- & \xmark & \xmark & \cmark & \xmark & \xmark & \cmark & \xmark\\
		\textit{UDS} & \cmark & - & \xmark & \xmark & \cmark  & \xmark & \xmark & \cmark & \xmark\\ 
		\textit{HiCS} & \cmark & + & \cmark & \xmark & \cmark &  \xmark & \xmark & \xmark & \cmark \\ 
		\textit{MWP} & \cmark & ++
		& \cmark & \cmark & \cmark & \cmark & \cmark & \cmark & \cmark \\
		\hline
	\end{tabular}
	\caption{Requirement fulfilment} 
	\label{table:requirements}
\end{table}

All in all, \textit{MWP} establishes itself as a state-of-the-art estima\-tor: It is versatile, allowing quality-runtime trade-offs and parallelisation, which is useful when time is critical, e.g., in large data streams. At the same time, it shows very good detection quality with no restriction on the dependency type, while being easy to use and interpret. \textit{MWP} features a unique blend of desirable properties that so far no competitor offers. 

\section{Conclusions \& Outlook}
\label{conclusions}

In this paper, we have introduced \textit{MCDE}, a framework to estimate multivariate dependency, and its instantiation as \textit{MWP}. We have shown that \textit{MWP} fulfils all the requirements one would expect from a state-of-the-art dependency estimator. 
Compared to other approaches, it provides high statistical power on a large panel of dependencies, while being very efficient. Thus, \textit{MCDE}-\textit{MWP} is particularly promising for correlation monitoring in data streams. 

As future work, we will study the deployment of \textit{MCDE} in streaming scenarios. Our goal is to characterize the \textbf{anytime flexibility} of \textit{MCDE} by refining the bound presented in Theorem \ref{"th:hoeffding-chernoff-contrast"} via further assumptions.
It will also be interesting to consider different instantiations of the statistical test, e.g., by comparing recent modifications of the \textit{Mann-Whitney\,U} test, such as \cite{fligner1981robust} and \cite{brunner2000nonparametric}. 
Finally, the efficiency of \textit{MCDE} in the streaming setting could be further improved via efficient insert and delete index operations. 

\section*{Acknowledgements}
This work was partially supported by the DFG Research Training
Group 2153: `Energy Status Data -- Informatics Methods for its
Collection, Analysis and Exploitation' and the German
Federal Ministry of Education and Research (BMBF) via Software Campus (01IS17042).

\balance

\bibliographystyle{IEEEtran}
\bibliography{IEEEabrv,mcde18-bibliography}

\end{document}